\def\thisismainpaper{0}   %Uncomment this if this is the technical report
\DeclareMathOperator{\argmin}{\arg\min}
\def\ddefloop#1{\ifx\ddefloop#1\else\ddef{#1}\expandafter\ddefloop\fi}
\def\ddef#1{\expandafter\def\csname bb#1\endcsname{\ensuremath{\mathbb{#1}}}}
\def\ddef#1{\expandafter\def\csname c#1\endcsname{\ensuremath{\mathcal{#1}}}}
\def\ddef#1{\expandafter\def\csname v#1\endcsname{\ensuremath{\boldsymbol{#1}}}}
\def\ddef#1{\expandafter\def\csname u#1\endcsname{\ensuremath{\underline{#1}}}}
\def\ddef#1{\expandafter\def\csname v#1\endcsname{\ensuremath{\boldsymbol{\csname #1\endcsname}}}}
\newcommand{\nosemic}{\renewcommand{\@endalgocfline}{\relax}}
\newcommand{\dosemic}{\renewcommand{\@endalgocfline}{\algocf@endline}}
\let\oldnl\nl
\newcommand{\nonl}{\renewcommand{\nl}{\let\nl\oldnl}}
\def\ting#1{\textcolor{red}{Ting: #1}}
\newcommand{\rev}[1]{\textcolor{black}{#1}} % R1 revision
\theoremstyle{definition}
\newtheorem{theorem}{Theorem}[section]
\newtheorem{lemma}[theorem]{Lemma}
\newtheorem{corollary}[theorem]{Corollary}
\newtheorem{definition}{Definition}
\newcommand{\E}{{\rm I\kern-.3em E}}
\def\diag{\operatorname{diag}}
\def\uG{\underline{G}}
\def\uV{\underline{V}}
\def\uE{\underline{E}}
\def\up{\underline{p}}
\def\ue{\underline{e}}
\begin{document}

\title{%\LARGE \bf
Overlay-based Decentralized Federated Learning in Bandwidth-limited Networks
}

\author{Yudi Huang}
\authornote{Both authors contributed equally to the paper. 
This work was supported by the National Science Foundation under award CNS-2106294 and CNS-1946022.}
\affiliation{%
  \institution{Pennsylvania State University}
  % \streetaddress{1 Th{\o}rv{\"a}ld Circle}
  \city{State College, PA}
  \country{USA}}
\email{yxh5389@psu.edu}

\author{Tingyang Sun}
\authornotemark[1]
\affiliation{%
  \institution{Pennsylvania State University}
  % \streetaddress{1 Th{\o}rv{\"a}ld Circle}
  \city{State College, PA}
  \country{USA}}
\email{tfs5679@psu.edu}

\author{Ting He}
\if\thisismainpaper0
\authornote{This is the extended version with proofs and additional evaluations.}
\fi
\affiliation{%
  \institution{Pennsylvania State University}
  % \streetaddress{1 Th{\o}rv{\"a}ld Circle}
  \city{State College, PA}
  \country{USA}}
\email{tinghe@psu.edu}

\renewcommand{\shortauthors}{Huang et al.}

%\maketitle
%\thispagestyle{empty}

\begin{abstract}
The emerging machine learning paradigm of decentralized federated learning (DFL) has the promise of greatly boosting the deployment of artificial intelligence (AI) by directly learning across distributed agents without centralized coordination. Despite significant efforts on improving the communication efficiency of DFL, most existing solutions were based on the simplistic assumption that neighboring agents are physically adjacent in the underlying communication network, which fails to correctly capture the communication cost when learning over a general bandwidth-limited network, as encountered in many edge networks. In this work, we address this gap by leveraging recent advances in network tomography to jointly design the communication demands and the communication schedule for overlay-based DFL in bandwidth-limited networks without requiring explicit cooperation from the underlying network. By carefully analyzing the structure of our problem, we decompose it into a series of optimization problems that can each be solved efficiently, to collectively minimize the total training time. Extensive data-driven simulations show that our solution can significantly accelerate DFL in comparison with state-of-the-art designs. \looseness=-1
\end{abstract}

\begin{CCSXML}
<ccs2012>
   <concept>
       <concept_id>10010147.10010257</concept_id>
       <concept_desc>Computing methodologies~Machine learning</concept_desc>
       <concept_significance>500</concept_significance>
       </concept>
   <concept>
       <concept_id>10003033.10003106.10003114</concept_id>
       <concept_desc>Networks~Overlay and other logical network structures</concept_desc>
       <concept_significance>500</concept_significance>
       </concept>
 </ccs2012>
\end{CCSXML}

\ccsdesc[500]{Computing methodologies~Machine learning}
\ccsdesc[500]{Networks~Overlay and other logical network structures}

\keywords{Decentralized federated learning, network tomography, overlay routing, mixing matrix design.}

\if\thisismainpaper1
\settopmatter{printfolios=false} % do not show page number (for acmart)
\else
\settopmatter{printfolios=true} % show page number (for acmart)
\fi

\maketitle

\section{Introduction}\label{sec:Introduction}

\if\thisismainpaper1

As a new machine learning paradigm, \emph{decentralized federated learning (DFL)}~\cite{Lian17NIPS} allows multiple learning agents to collaboratively learn a shared model from the union of their local data without directly sharing the data. To achieve this goal, the agents repeatedly exchange model updates with their neighbors through peer-to-peer connections, which are then aggregated locally~\cite{Kairouz21book}. Since its introduction, DFL has attracted significant attention, because compared to centralized \emph{federated learning (FL)}~\cite{McMahan17AISTATS}, DFL can avoid a single point of failure and reduce the communication complexity at the busiest node without increasing the computational complexity~\cite{Lian17NIPS}.

\else
As a new machine learning paradigm, \emph{federated learning (FL)} allows multiple learning agents to collaboratively learn a shared model from the union of their local data without directly sharing the data \cite{McMahan17AISTATS}. To achieve this goal, the agents repeatedly exchange model updates, through a centralized parameter server~\cite{McMahan17AISTATS}, a hierarchy of parameter servers~\cite{Liu23TWC}, or peer-to-peer links between neighboring agents~\cite{Kairouz21book}, which are then aggregated to update the shared model. 
Due to its promise in protecting data privacy, FL has found many applications such as improving browsers \cite{FLoC,BraveFL}.
%mobile apps \cite{googleFL,googleAssistant} and browsers \cite{FLoC,BraveFL}. 
%
%the accuracy of Google Keyboard~\cite{googleFL} and Google Assistant~\cite{googleAssistant}, or the recommendations of Chrome~\cite{FLoC} and Brave~\cite{BraveFL}. 
%
In particular, \emph{decentralized federated learning (DFL)}~\cite{Lian17NIPS} has attracted significant attention. Instead of forming a star \cite{McMahan17AISTATS} or hierarchical topology \cite{Liu23TWC}, agents in DFL can communicate along arbitrary topology, where parameter exchanges only occur between neighbors. Compared to centralized FL, DFL can avoid a single point of failure and reduce the communication complexity at the busiest node without increasing the computational complexity~\cite{Lian17NIPS}.
\fi

Meanwhile, FL including DFL faces significant performance challenges due to the extensive data transfer. Although the training data stay local, the agents still need to communicate frequently to exchange local model updates, which often incurs a nontrivial communication cost due to the large model size. Such communication cost can dominate the total cost of the learning task, e.g., accounting for up to $90\%$ of time in cloud-based FL~\cite{Luo20MLsys}, and the problem is exacerbated in other networks that are more bandwidth-limited (e.g., wireless mesh networks~\cite{chen2022federated}).  
This issue has attracted tremendous interests in reducing the communication cost, including compression-based methods for reducing the amount of data per communication such as \cite{Compression1} and methods for reducing the number of communications through hyperparameter optimization such as \cite{Wang19JSAC,MATCHA22,Chiu23JSAC} or adaptive communications such as \cite{Singh20CDC}. 

\begin{figure}[t!]%{.75\linewidth}
\vspace{-.0em}
\centerline{\mbox{\includegraphics[width=.9\linewidth]{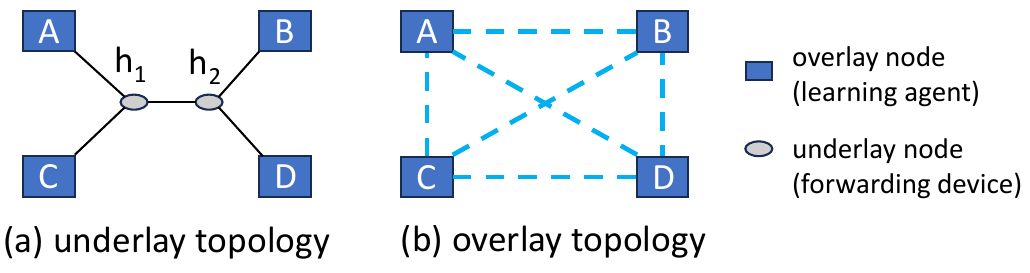}}}
\vspace{-1em}
\caption{\small Overlay-based DFL. }
\label{fig:overlay_underlay}
\vspace{-.5em}
\end{figure}

However, most existing works made the simplistic assumption that each pair of {logically adjacent} agents are also {physically adjacent} in the underlying communication network. % has a fixed cost per communication, regardless of the communications between other agents. 
This is not true in \emph{overlay-based DFL}, where the connections between logically adjacent agents (i.e., \emph{overlay links}) can map to arbitrary routing paths in the underlying communication network that may share links (i.e., \emph{underlay links}). 
%This is generally not true when the agents are distributed across a network, as their connections may map to multi-hop paths in the underlying communication network that possibly share links. 
For example, a set of learning agents $\{A,B,C,D\}$ may have the physical connectivity in Fig.~\ref{fig:overlay_underlay}a and the logical connectivity in Fig.~\ref{fig:overlay_underlay}b. Although connections $(A,B)$ and $(C,D)$ appear disjoint in the overlay, they actually map to paths sharing link $(h_1,h_2)$ in the underlay. Ignoring such link sharing can cause incorrect prediction of the communication time, because concurrent communications over connections with shared links can take longer than stand-alone communication on each of them, and the problem can be exacerbated by heterogeneous capacities and background loads at underlay links. 
Most existing works ignored such complications by assuming the communication time to be proportional to the maximum number of neighbors an agent communicates with   \cite{MATCHA22,Chiu23JSAC,hua2022efficient,le2023refined}, which generally leads to suboptimal designs in the case of overlay-based DFL.  
%Moreover, heterogeneous link capacities and background traffic in the underlay further complicate the prediction of communication cost between the agents. \looseness=-1 

We want to address this gap \emph{without requiring explicit cooperation from the underlay network}, i.e., the agents can neither directly observe the internal state of the underlay (e.g., routing topology, link capacities) nor control its internal behavior. % (e.g., routing between logically adjacent agents). 
Such scenarios arise naturally when the agents are interconnected by a public network.  %infrastructure such as a public cloud network or the Internet, where the controller of the learning agents (e.g., cloud tenant or application provider) differs from the controller of the communication network (e.g., cloud provider or Internet service provider). 
In particular, we are interested in running DFL over bandwidth-limited networks, which models many application scenarios such as HetNets \cite{chen2022federated}, device-to-device networks \cite{xing2021federated}, IoT networks~\cite{pinyoanuntapong2022toward}, underwater networks \cite{pei2023fed}, and power line communication networks \cite{jia2022dispatching}. 
In contrast to high-bandwidth networks such as inter-datacenter networks \cite{marfoq2020throughput}, bandwidth-limited networks are more sensitive to communication demands generated by DFL and are thus in greater needs of proper designs. 
To this end, we propose an optimization framework for overlay-based DFL that jointly designs the \emph{communication demands} and the \emph{communication schedule} within the overlay, without explicit cooperation from the underlay. %, which specifies both the communication pattern between agents and the use of the communicated model parameters. 
%which pairs of agents should communicate, how should the messages be routed (among the agents), what should the rates be, and how should the received model parameters be aggregated. 
Building upon recent advances in network tomography~\cite{Huang23MobiHoc} and mixing matrix design~\cite{Chiu23JSAC}, we cast the problem into a set of tractable optimizations that collectively minimize the total time in achieving a given level of convergence. \looseness=0

\subsection{Related Work}\label{subsec:Related Work}

\if\thisismainpaper1 % short version:

\textbf{Decentralized federated learning.} Initially proposed under a centralized architecture \cite{McMahan17AISTATS}, FL was later extended to a fully decentralized architecture~\cite{Lian17NIPS}, which was shown to achieve the same computational complexity but a lower communication complexity. % than training via a central server. 
%A hybrid of the two was also proposed in \cite{Castiglia21ICLR}, where each server facilitates model aggregation among a subset of nodes and different servers exchange information in a decentralized manner.  
%FL was first proposed in the centralized architecture \cite{McMahan17AISTATS}, i.e., where a central server (parameter server) is in charge of model aggregation and all the nodes communicate with the server. Later, the decentralized architecture was proposed, under which \cite{Lian17NIPS,Jiang17NIPS} proposed Decentralized Parallel Stochastic Gradient Descent (D-PSGD) that is no worse than centralized SGD in terms of the computational complexity but much better in terms of the communication complexity at the busiest node. 
Since then a number of improvements such as \cite{ICMLhonor} have been developed, but these works only focused on reducing the number of iterations. \looseness=-1

\textbf{Communication cost reduction.} %Communication cost is an important consideration in decentralized learning. 
There are two general approaches for reducing the communication cost in FL: reducing the amount of data per communication through compression, e.g., \cite{Compression1}, %\cite{Zhang20INFOCOM}
and reducing the number of communications, e.g.,  \cite{Wang19JSAC}. 
%by designing an optimized frequency~\cite{sysml19,Ngu19INFOCOM} or an adaptive frequency~\cite{Wang19JSAC}. A unified analysis of the cost-convergence tradeoff of such solutions was provided in \cite{Wang21JMLR}. 
%It was shown that model compression and infrequent communications 
The two approaches can be combined for further improvement \cite{Singh20CDC}. %, while achieving the same convergence rate as the vanilla D-PSGD. of $O(1/\sqrt{mK})$ as the vanilla D-PSGD (where $m$ is the number of nodes and $K$ the number of iterations). 
Instead of either activating all the links or activating none, it has been recognized that better efficiency can be achieved by activating subsets of links. To this end, \cite{Singh20CDC} proposed an event-triggered mechanism and \cite{MATCHA22,Chiu23JSAC} proposed to activate links with predetermined probabilities. 
%Although all these solutions achieved the same asymptotic convergence rate as the vanilla DFL algorithm~\cite{Lian17NIPS}, the total communication cost to achieve a given convergence criterion can vary a lot based on the specific design. % of hyperparameters. 
In this regard, our work designs predetermined link activation as in \cite{MATCHA22,Chiu23JSAC}, which provides more predictable performance than event-triggered mechanisms, but \emph{we consider a cost model tailored to overlay-based DFL}: instead of measuring the communication time by the number of matchings \cite{MATCHA22,Chiu23JSAC} or the maximum degree  \cite{hua2022efficient,le2023refined}, we evaluate the minimum time to complete all the activated agent-to-agent communications over a bandwidth-limited underlay, while taking into account heterogeneous capacities and possibly shared links. \looseness=-1 % by formulating and solving an optimization over communication actions at the agents.  
%In this regard, our work focuses on the design of the mixing matrix, which is closest to \cite{MATCHA19,MATCHA20} that proposed a framework called MATCHA to design the probabilities of activating matchings. Our work significantly improves \cite{MATCHA19,MATCHA20} in multiple ways: (i) instead of only considering communication time as the cost measure, we adopt a more general cost model  that can represent other important cost measures such as bandwidth/energy consumption; (ii) instead of giving the same weight to all the received models during model aggregation, we consider a larger solution space by allowing different weights for models from different nodes; (iii) instead of using a heuristic design objective (algebraic connectivity), we directly optimize an objective with provable relationship to the convergence rate of decentralized learning. \looseness=0
%by addressing several important open questions as mentioned before Section~\ref{subsec:Related Work}. 

\textbf{Topology design in DFL.} The logical topology defining the neighborhoods of learning agents is an important design parameter in DFL that controls the communication demands during training. The impact of this topology on the convergence rate of DFL has been mostly captured through the {spectral gap} of the mixing matrix~\cite{Lian17NIPS,neglia2020decentralized,jiang2023joint} or equivalent parameters~\cite{MATCHA22}. 
Although recent works have identified other parameters that can impact the convergence rate, such as the effective number of neighbors~\cite{vogels2022beyond} and the neighborhood heterogeneity~\cite{le2023refined}, these results just pointed out additional factors and did not invalidate the impact of spectral gap. %; they just pointed out additional factors that also matter in some cases.   
Based on the identified convergence parameters, several solutions have been proposed to design the logical topology to balance the convergence rate and the cost per communication round~\cite{MATCHA22,Chiu23JSAC,le2023refined}, and some solutions combined topology design with other optimizations (e.g., model pruning~\cite{jiang2023joint}) for further improvement. 
In this regard, our work also includes topology design based on a parameter related to the spectral gap, but \emph{we explicitly consider the communication schedule to serve the demands triggered by the designed topology over a bandwidth-limited underlay} to optimize the overall wall-clock time of overlay-based DFL. 
To our knowledge, the only existing work addressing overlay-based DFL is \cite{marfoq2020throughput}. However, it assumed a special underlay where the paths connecting learning agents only share links at the first and the last hops, whose capacities are assumed to be known. %, and the internal links are effectively not shared. 
While this model may suit high-bandwidth underlays such as inter-datacenter networks, it fails to capture the communication cost in  bandwidth-limited underlays as addressed in our work. \looseness=-1

\textbf{Network-aware distributed computing.}
It was known that awareness to the state of the communication underlay is important for data-intensive distributed computing tasks~\cite{Luo20MLsys}. Several works attempted to solve this problem in the context of cloud networks, assuming either a black-box network \cite{Luo20MLsys} or a white-box network \cite{Chen22NSDI}. 
%
%network based on simple heuristics (e.g., clustering nodes based on pairwise performance metrics \cite{Luo20MLsys,Gong15TPDS}) or limiting assumptions about the network (e.g., multi-rooted tree \cite{LaCurts2013IMC}), and another work \cite{Chen22NSDI} proposed a white-box solution by asking the cloud provider to provide the required network information. %needed for optimizing the communications in the learning task. 
In this regard, our work assumes a black-box underlay as in \cite{Luo20MLsys}, but unlike the simple heuristics used in these works, we leverage state-of-the-art techniques from \emph{network tomography}~\cite{Huang23MobiHoc} to estimate the necessary parameters about the underlay. % with guaranteed accuracy. 
%Notably, we recently discovered in \cite{Huang23MobiHoc} that network tomography can consistently estimate coarse parameters about the underlay  that allows the overlay to compute the capacity region for communications between the overlay nodes, which we will use to design the communication schedule for overlay-based DFL.  

\else % full version:

\textbf{Decentralized federated learning.} Initially proposed under a centralized architecture \cite{McMahan17AISTATS}, FL was later extended to a fully decentralized architecture~\cite{Lian17NIPS}, which was shown to achieve the same computational complexity but a lower communication complexity. % than training via a central server. 
%A hybrid of the two was also proposed in \cite{Castiglia21ICLR}, where each server facilitates model aggregation among a subset of nodes and different servers exchange information in a decentralized manner.  
%FL was first proposed in the centralized architecture \cite{McMahan17AISTATS}, i.e., where a central server (parameter server) is in charge of model aggregation and all the nodes communicate with the server. Later, the decentralized architecture was proposed, under which \cite{Lian17NIPS,Jiang17NIPS} proposed Decentralized Parallel Stochastic Gradient Descent (D-PSGD) that is no worse than centralized SGD in terms of the computational complexity but much better in terms of the communication complexity at the busiest node. 
Since then a number of improvements have been developed, e.g., \cite{D2ICML18,ICMLhonor}, but these works only focused on reducing the number of iterations. \looseness=-1
%
%\cite{D2ICML18} improved the robustness to data variance, %\cite{Lian18ICML} reduced the idle time through an asynchronous version of D-PSGD, 
%and \cite{ICMLhonor} provided a lower bound on the iteration complexity and an algorithm that achieves the bound. These works only focused on reducing the number of iterations. \looseness=0

\textbf{Communication cost reduction.} %Communication cost is an important consideration in decentralized learning. 
There are two general approaches for reducing the communication cost in FL: reducing the amount of data per communication through compression, e.g., \cite{Compression1,Compression2,Compression3}, %\cite{Zhang20INFOCOM}
and reducing the number of communications, e.g.,  \cite{sysml19,Ngu19INFOCOM,Wang19JSAC}. 
%by designing an optimized frequency~\cite{sysml19,Ngu19INFOCOM} or an adaptive frequency~\cite{Wang19JSAC}. A unified analysis of the cost-convergence tradeoff of such solutions was provided in \cite{Wang21JMLR}. 
%It was shown that model compression and infrequent communications 
The two approaches can be combined for further improvement \cite{Singh20CDC,Singh21JSAIT}. %, while achieving the same convergence rate as the vanilla D-PSGD. of $O(1/\sqrt{mK})$ as the vanilla D-PSGD (where $m$ is the number of nodes and $K$ the number of iterations). 
Instead of either activating all the links or activating none, it has been recognized that better efficiency can be achieved by activating subsets of links. To this end, \cite{Singh20CDC,Singh21JSAIT} proposed an event-triggered mechanism and \cite{MATCHA22,Chiu23JSAC} proposed to activate links with predetermined probabilities. 
%Although all these solutions achieved the same asymptotic convergence rate as the vanilla DFL algorithm~\cite{Lian17NIPS}, the total communication cost to achieve a given convergence criterion can vary a lot based on the specific design. % of hyperparameters. 
In this regard, our work designs predetermined link activation as in \cite{MATCHA22,Chiu23JSAC}, which provides more predictable performance than event-triggered mechanisms, but \emph{we consider a cost model tailored to overlay-based DFL}: instead of measuring the communication time by the number of matchings  \cite{MATCHA22,Chiu23JSAC} or the maximum degree  \cite{hua2022efficient,le2023refined}, we evaluate the minimum time to complete all the activated agent-to-agent communications over a bandwidth-limited underlay, while taking into account heterogeneous capacities and possibly shared links. % by formulating and solving an optimization over communication actions at the agents.  
%In this regard, our work focuses on the design of the mixing matrix, which is closest to \cite{MATCHA19,MATCHA20} that proposed a framework called MATCHA to design the probabilities of activating matchings. Our work significantly improves \cite{MATCHA19,MATCHA20} in multiple ways: (i) instead of only considering communication time as the cost measure, we adopt a more general cost model  that can represent other important cost measures such as bandwidth/energy consumption; (ii) instead of giving the same weight to all the received models during model aggregation, we consider a larger solution space by allowing different weights for models from different nodes; (iii) instead of using a heuristic design objective (algebraic connectivity), we directly optimize an objective with provable relationship to the convergence rate of decentralized learning. \looseness=0
%by addressing several important open questions as mentioned before Section~\ref{subsec:Related Work}. 

\textbf{Topology design in DFL.} The logical topology defining the neighborhoods of learning agents is an important design parameter in DFL that controls the communication demands during training. The impact of this topology on the convergence rate of DFL has been mostly captured through the {spectral gap} of the mixing matrix~\cite{Lian17NIPS,Nedic18IEEE,Neglia19INFOCOM,neglia2020decentralized,jiang2023joint} or equivalent parameters~\cite{MATCHA22}. 
Although recent works have identified other parameters that can impact the convergence rate, such as the effective number of neighbors~\cite{vogels2022beyond} and the neighborhood heterogeneity~\cite{le2023refined}, these results just pointed out additional factors and did not invalidate the impact of spectral gap. %; they just pointed out additional factors that also matter in some cases.   
Based on the identified convergence parameters, several solutions have been proposed to design the logical topology to balance the convergence rate and the cost per communication round~\cite{MATCHA22,Chiu23JSAC,le2023refined}, and some solutions combined topology design with other optimizations (e.g., bandwidth allocation~\cite{Wang22Networking}, model pruning~\cite{jiang2023joint}) for further improvement. 
In this regard, our work also includes topology design based on a parameter related to the spectral gap, but \emph{we explicitly consider the communication schedule to serve the demands triggered by the designed topology over a bandwidth-limited underlay} to optimize the overall wall-clock time of overlay-based DFL. 
To our knowledge, the only existing work addressing overlay-based DFL is \cite{marfoq2020throughput}. However, it assumed a special underlay where the paths connecting learning agents only share links at the first and the last hops, whose capacities are assumed to be known. %, and the internal links are effectively not shared. 
While this model may suit high-bandwidth underlays such as inter-datacenter networks, it fails to capture the communication cost in  bandwidth-limited underlays as addressed in our work. \looseness=-1

\textbf{Network-aware distributed computing.}
It was known that awareness to the state of the communication underlay is important for data-intensive distributed computing tasks~\cite{Luo20MLsys}. Several works attempted to solve this problem in the context of cloud networks, assuming either a black-box network \cite{Luo20MLsys,Gong15TPDS,LaCurts2013IMC} or a white-box network \cite{Chen22NSDI}. 
%
%network based on simple heuristics (e.g., clustering nodes based on pairwise performance metrics \cite{Luo20MLsys,Gong15TPDS}) or limiting assumptions about the network (e.g., multi-rooted tree \cite{LaCurts2013IMC}), and another work \cite{Chen22NSDI} proposed a white-box solution by asking the cloud provider to provide the required network information. %needed for optimizing the communications in the learning task. 
In this regard, our work assumes a black-box underlay as in \cite{Luo20MLsys,Gong15TPDS,LaCurts2013IMC}, but unlike the simple heuristics used in these works, we leverage state-of-the-art techniques from \emph{network tomography}~\cite{He21book} to estimate the necessary parameters about the underlay. % with guaranteed accuracy. 
Notably, we recently discovered in \cite{Huang23MobiHoc} that network tomography can consistently estimate coarse parameters about the underlay  that allows the overlay to compute the capacity region for communications between the overlay nodes, which we will use to design the communication schedule for overlay-based DFL.  

\fi

\subsection{Summary of Contributions}

We jointly design the communication demands and the communication schedule for overlay-based DFL in a bandwidth-limited uncooperative underlay, with the following contributions:\looseness=-1
\begin{enumerate}
\item  We consider, for the first time, communication optimization in DFL on top of a bandwidth-limited underlay network with arbitrary topology. To this end, we propose a general framework for jointly designing the communication demands and the communication schedule (e.g., routing, rates) among the learning agents, without cooperation from the underlay. 

\item We tackle the complexity challenge by decomposing the overall problem into a series of smaller subproblems, that are collectively designed to minimize the total training time to achieve a given level of convergence. Through carefully designed relaxations, we convert each subproblem into a tractable optimization to develop efficient solutions.
%readily solvable by existing algorithms.   
    
\item We evaluate the proposed solution in comparison with benchmarks based on real network topologies and datasets. Our results show that (i) our design of the communication demands can already reduce the training time substantially without compromising the quality of the trained model, (ii) our design of the communication schedule further increases the improvement, and (iii) the observations remain valid under realistic inference errors about the underlay. 
\end{enumerate}

\textbf{Roadmap.} Section~\ref{sec:Background and Formulation} describes our overall problem, Section~\ref{sec:Proposed Solution} presents our solution and analysis, Section~\ref{sec:Performance Evaluation} presents our performance evaluation, and Section~\ref{sec:Conclusion} concludes the paper. 
\if\thisismainpaper1
\textbf{All the proofs can be found in \cite[Appendix~A]{Huang24:report}.}
\else
\textbf{All the proofs can be found in Appendix~\ref{appendix:Proofs}.} 
\fi

%-------------------------------------------------------------------------------

\section{Problem Formulation}\label{sec:Background and Formulation}

\subsection{Notations}\label{subsec:Notations}

Let $\bm{a}\in \mathbb{R}^m$ denote a vector and $\bm{A}\in \mathbb{R}^{m\times m}$ a matrix. We use $\|\bm{a}\|$ to denote the $\ell$-2 norm, $\|\bm{A}\|$ to denote the spectral norm, and $\|\bm{A}\|_F$ to denote the Frobenius norm. We use $\diag(\bm{a})$ to denote a diagonal matrix with the entries in $\bm{a}$ on the main diagonal, and $\diag(\bm{A})$ to denote a vector formed by the diagonal entries of $\bm{A}$. We use 
$\lambda_i(\bm{A})$ ($i=1,\ldots,m$) to denote the $i$-th smallest eigenvalue of $\bm{A}$. \looseness=-1

\subsection{Network Model}

Consider a network of $m$ learning agents connected through a logical \emph{base topology} $G=(V, E)$ ($|V|=m$), that forms an overlay on top of a communication underlay $\uG=(\uV, \uE)$. Unless otherwise stated, both overlay and underlay links are considered directed. 
Each underlay link $\ue\in \uE$ has a finite capacity $C_{\ue}$. %, denoting the available capacity after deducting background traffic. 
Each overlay link $e=(i,j)\in E$ indicates that agent $i$ is \emph{allowed} to communicate to agent $j$ during learning, and is implemented via a routing path $\up_{i,j}$ from the node running agent $i$ to the node running agent $j$ in the underlay. We assume that if $(i,j)\in E$, then $(j,i)\in E$ (agents $i$ and $j$ are allowed to exchange results). The routing paths are determined by the topology and the routing protocol in the underlay. Let $l_{i, j}$ denote the propagation delay on $\up_{i, j}$. 
%, and can be asymmetric unless stated otherwise (i.e., $\up_{i,j}$ and $\up_{j,i}$ may not be the same).
We assume that neither the routing paths nor the link capacities in the underlay are observable by the overlay, but the propagation delays between overlay nodes (e.g., $l_{i,j}$) are observable\footnote{This can be obtained by measuring the delays of small probing packets.}.

\subsection{Decentralized Federated Learning (DFL)}\label{subsec:Model of Decentralized Learning}

%The application of interest is decentralized model training.  
Consider a DFL task, where
each agent $i\in V$ has a possibly non-convex objective function $F_i(\bm{x})$ that depends on the parameter vector $\bm{x} \in \mathbb{R}^d$ and the local dataset $\mathcal{D}_i$, and the goal is to find the parameter vector $\bm{x}$ that minimizes the global objective function $F(\bm{x})$, defined as\looseness=-1
\begin{align}\label{eq:F(x)}
    F(\boldsymbol{x}) := \frac{1}{m}\sum_{i=1}^{m} F_{i}(\boldsymbol{x}).
\end{align}
For example, we can model the objective of empirical risk minimization by defining the local objective as $F_i(\bm{x}):= \sum_{\bm{s}\in \mathcal{D}_i}\ell(\bm{x},\bm{s})$, where $\ell(\bm{x},\bm{s})$ is the loss function for sample $\bm{s}$ under model $\bm{x}$, and the corresponding global objective is proportional to the empirical risk over all the samples.  %since \revtwo{$F(\bm{x}) = \left(\sum_{p\in \bigcup_{i\in V}\mathcal{D}_i}\ell(\bm{x},p)\right)/m$}. 
%
%As another example, the local objective may be to minimize the local loss plus penalty $F_i(\bm{x}):=  m\sum_{p\in \mathcal{D}_i}\ell(\bm{x},p) + R(\bm{x})$, where $R(\bm{x})$ is a regularization term, and the corresponding global objective is to minimize the global loss plus penalty $F(\bm{x})= \sum_{p\in \bigcup_{i\in V}\mathcal{D}_i}\ell(\bm{x},p) + R(\bm{x})$~\cite{Neglia19INFOCOM}. 
%
%Our results are independent of the specific definition of $F_i(\bm{x})$, as long as several commonly-made assumptions hold (see Section~\ref{subsec:Convergence Bound}). 

%Unlike the traditional setting where all the nodes communicate local updates to a parameter server for global aggregation, 
We consider a standard decentralized training algorithm called D-PSGD~\cite{Lian17NIPS}, where each agent repeatedly updates its own parameter vector and aggregates it with the parameter vectors of its neighbors to  minimize the global objective function. Specifically, let $\bm{x}_i^{(k)}$ ($k\geq 1$) denote the parameter vector at agent $i$ after $k-1$ iterations and $g(\bm{x}_i^{(k)}; \xi_i^{(k)})$ the stochastic gradient computed by agent $i$ in iteration $k$ (where $\xi_i^{(k)}$ is the mini-batch). In iteration $k$, agent $i$ updates its parameter vector by\looseness=-1
\begin{align}\label{eq:DecenSGD}
\boldsymbol{x}^{(k+1)}_i = \sum_{j=1}^{m}W^{(k)}_{ij}\boldsymbol{x}^{(k)}_j - \eta g(\boldsymbol{x}^{(k)}_i; \xi^{(k)}_i),
\end{align}
where $\bm{W}^{(k)}=(W^{(k)}_{ij})_{i,j=1}^m$ is the $m\times m$ \emph{mixing matrix} in iteration $k$, and $\eta>0$ is the learning rate. 
%Each update by \eqref{eq:DecenSGD} includes (i) a \emph{gradient descent step} $\widetilde{\boldsymbol{x}}^{(k)}_i:= \boldsymbol{x}^{(k)}_i - \eta g(\boldsymbol{x}^{(k)}_i; \xi^{(k)}_i)$, (ii) a \emph{communication step} to exchange $\widetilde{\boldsymbol{x}}^{(k)}_i$ between neighbors, and (iii) a \emph{consensus step} $\boldsymbol{x}^{(k+1)}_i = \sum_{j=1}^{m}W^{(k)}_{ij}\widetilde{\boldsymbol{x}}^{(k)}_j$. 
To be consistent with the base topology, $W^{(k)}_{ij}\neq 0$ only if $(i,j)\in E$. 
The update rule in \eqref{eq:DecenSGD} has the same convergence performance as $\boldsymbol{x}^{(k+1)}_i = \sum_{j=1}^{m}W^{(k)}_{ij}(\boldsymbol{x}^{(k)}_j - \eta g(\boldsymbol{x}^{(k)}_j; \xi^{(k)}_j))$~\cite{Lian17NIPS}, but \eqref{eq:DecenSGD} allows each agent to parallelize the parameter exchange with neighbors and the gradient computation. 
%The computation is divided into synchronous iterations. Lets consider that each work node runs Stochastic Gradient Descent (SGD) to optimize the local objective. In the k-th iteration, each worker node $i$ locally computes an estimate $(\boldsymbol{x}^{k}_i - \eta^k G(\boldsymbol{x}^k_i, \xi^k_i))$ of \emph{parameter vector}, and then sends it to its neighbors. Every node aggregates the received estimates of parameter vector by applying an aggregation function $f_A$, and updates their $\boldsymbol{x}^{k+1}_i$ by the following. 

%\begin{align}
%    \boldsymbol{x}^{k+1}_i = f_A((\boldsymbol{x}^{k}_1 - \eta^k G(\boldsymbol{x}^k_1, \xi^k_1)),..., (\boldsymbol{x}^{k}_m - \eta^k G(\boldsymbol{\boldsymbol{x}}^k_m, \xi^k_m)))
%\end{align}

%The aggregation function $f_A$ is specifically represented by the mixing matrix $\boldsymbol{W} \in \mathcal{R}^{m \times m}$.

The mixing matrix $\bm{W}^{(k)}$ plays an important role in controlling %the tradeoff between the convergence rate and 
the communication cost, as agent $j$ needs to send its parameter vector to agent $i$ in iteration $k$ if and only if $W^{(k)}_{ij}\neq 0$. According to \cite{Lian17NIPS}, the mixing matrix should be \emph{symmetric with each row/column summing up to one}\footnote{In \cite{Lian17NIPS}, the mixing matrix was assumed to be symmetric and \emph{doubly stochastic} with entries constrained to $[0,1]$, but we find this requirement unnecessary for the convergence bound we use from \cite[Theorem~2]{Koloskova20ICML}, which only requires the mixing matrix to be symmetric with each row/column summing up to one.} in order to ensure convergence for D-PSGD. 
The symmetry implies a one-one correspondence between distinct (possibly) non-zero entries in $\bm{W}^{(k)}$ and the \emph{undirected overlay links}, denoted by $\widetilde{E}$ (i.e., each $(i,j)\in \widetilde{E}$ represents a pair of directed links $\{(i,j), (j,i)\}\in E$), and thus $W^{(k)}_{ij}$ can be interpreted as the \emph{link weight} of the undirected overlay link $(i,j)\in \widetilde{E}$. The requirement of each row summing to one further implies that $W^{(k)}_{ii} = 1-\sum_{j=1}^m W^{(k)}_{ij}$. In the vector form, the above implies the following decomposition of the mixing matrix 
\begin{align}
\bm{W}^{(k)} := \bm{I} - \bm{B} \diag(\bm{\alpha}^{(k)})\bm{B}^\top, \label{eq:W}
\end{align}
where $\bm{I}$ is the $m\times m$ identity matrix, $\bm{B}$ is the $|V|\times|\widetilde{E}|$ incidence matrix\footnote{This is defined under an arbitrary orientation of each link $e_j\in \widetilde{E}$ as $B_{i j}= +1$ if $e_j$ starts from $i$, $-1$ if $e_j$ ends at $i$, and $0$ otherwise.} for the base topology $G$, and $\bm{\alpha}^{(k)}:=(\alpha^{(k)}_{ij})_{(i,j)\in \widetilde{E}}$ is the vector of link weights. It is easy to verify that $W^{(k)}_{ij} = \alpha^{(k)}_{ij}$. This decomposition reduces the design of mixing matrix to the design of link weights $\bm{\alpha}^{(k)}$ in the overlay, where agents $i$ and $j$ need to exchange parameter vectors in iteration $k$ \emph{if and only if} $\alpha^{(k)}_{ij}\neq 0$. Thus, we say that \emph{the (undirected) overlay link $(i,j)$ is activated} in iteration $k$ (i.e., both $(i,j)$ and $(j,i)$ are activated) if $\alpha^{(k)}_{ij}\neq 0$.

\subsection{Design Objective}
%Communication Optimization for Overlay-based DFL}
\label{subsec:Goal: Network-aware Communication Optimization for Decentralized Learning}

Our goal is to \emph{jointly design the communication demands between the agents and the communication schedule about how to service these demands} so as to minimize the total (wall-clock) time for the learning task to reach a given level of convergence. The challenges are two-fold: (i) the design of communication demands faces the tradeoff between communicating more per iteration and converging in fewer iterations versus communicating less per iteration and converging in more iterations, and (ii) the design of communication schedule faces the lack of observability and controllability within the underlay network. 
Below, we will tackle these challenges by combining techniques from network tomography and mixing matrix design. % optimization.  

\section{Proposed Solution}\label{sec:Proposed Solution}

Our approach is to first characterize the total training time as an explicit function of the set of activated links in the overlay, and then optimize this set. We will focus on a deterministic design that can give a predictable training time, and thus the iteration index $k$ will be omitted. For ease of presentation, we will consider the set of activated overlay links, denoted by $E_a\subseteq \widetilde{E}$, as \emph{undirected links}, as the pair of links between two agents must be activated at the same time.   
\looseness=-1

\subsection{Communication Schedule Optimization}\label{subsec:Communication Schedule Optimization}

\begin{figure}[t!]%{.75\linewidth}
\vspace{-.0em}
\centerline{\mbox{\includegraphics[width=.8\linewidth]{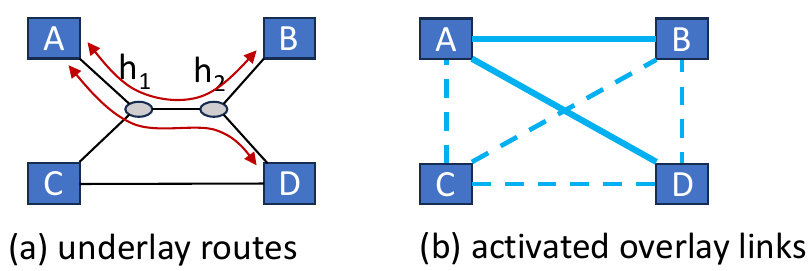}}}
\vspace{-1em}
\caption{\small Underlay-aware communication schedule optimization (learning agents: $\{A,B,C,D\}$; underlay nodes: $\{h_1,h_2\}$). }
\label{fig:overlay_underlay_routing}
\vspace{-.25em}
\end{figure}

Given a set of overlay links $E_a\subseteq \widetilde{E}$ activated in an iteration, each $(i,j)\in E_a$ triggers two communications, one for agent $i$ to send its parameter vector to agent $j$ and the other for agent $j$ to send its parameter vector to agent $i$. However, directly sending the parameter vectors along the underlay routing paths can lead to suboptimal performance. For example, consider Fig.~\ref{fig:overlay_underlay_routing}. If $E_a=\{(A,B),(A,D)\}$ but both $\up_{A,B}$ and $\up_{A,D}$ traverse the same underlay link $(h_1,h_2)$, directly communicating between the activated agent pairs can take longer than redirecting part of the traffic through other agents (e.g., redirecting $A\to D$ traffic through the overlay path $A\to C\to D$). The same holds if the capacity of the direct path is low, but the capacity through other agents is higher (e.g., if $(h_2,D)$ is a slow link, then redirecting $A\to D$ traffic through $C$ can bypass it to achieve a higher rate). 
This observation motivates the need of optimizing how to serve the demands triggered by the activated links by routing within the overlay. %for the demands triggered by given link activation.  \looseness=-1

\subsubsection{Demand Model}

Let $\kappa_i$ denote the size of the parameter vector (or its compressed version if model compression is used) at agent $i$. A straightforward way to model the communication demands triggered by a set of activated links $E_a$ is to generate two unicast flows for each activated link $(i,j)\in E_a$, one in each direction. However, this model will lead to a suboptimal communication schedule as it ignores the fact that some flows carry identical content. Specifically, all flows originating from the same agent will carry the latest parameter vector at this agent. Thus, the actual communication demands is a set of multicast flows, each for distributing the parameter vector of an activated agent (incident to at least one activated link) to the agents it needs to share parameters with. Let $N_{E_a}(i):=\{j\in V:\: (i,j)\in E_a\}$. We can express the demands triggered by the activated links $E_a$ as 
\begin{align}\label{eq:demands H}
    H=\{(i,N_{E_a}(i),\kappa_i):\: \forall i\in V \mbox{ with } N_{E_a}(i)\neq \emptyset\},
\end{align} 
where each $h=(s_h,T_h,\kappa_h)\in H$ represents a multicast flow with source $s_h$, destinations $T_h$, and data size $\kappa_h$.

\subsubsection{Baseline Formulation}

To help towards minimizing the total training time, the communication schedule should minimize the time for completing all the communication demands triggered by the activated links, {within the control of the overlay}. 
%Specifically, let $s$ denote the size of the message between agents (i.e., size of the parameter vector or its reduced version if compression is used). Let $H:=\bigcup_{(i,j)\in E_a}\{(i,j,s),(j,i,s)\}$ denote the set of flow demands due to the activation of $E_a$, with each $h=(s_h,t_h,g_h)\in H$ specifying a flow with source $s_h$, destination $t_h$, and data size $g_h$. 
To this end, we \emph{jointly optimize the routing and the flow rate within the overlay}. The former is represented by decision variables $z^h_{ij}\in \{0,1\}$ that indicates whether overlay link $(i,j)$ is traversed by the multicast flow $h$ and $r^{h,k}_{ij}\in \{0,1\}$ that indicates whether $(i,j)$ is traversed by the flow from $s_h$ to $k\in T_h$, both in the direction of $i\to j$. The latter is represented by decision variables $d_h\geq 0$ that denotes the rate of flow $h$ and $f^h_{ij}\geq 0$ that denotes the rate of flow $h$ on overlay link $(i,j)$ in the direction of $i\to j$. Define constant $b^{h,k}_i$ as $1$ if $i=s_h$, $-1$ if $i=k$, and $0$ otherwise. 
%Define $b^h_i$ as $1$ if $i=s_h$, $-1$ if $i=t_h$, and $0$ otherwise.  Let $r^h_{ij}\in \{0,1\}$ indicate whether flow $h$ traverses overlay link $(i,j)$ (in the direction of $i\to j$) and $d_h\geq 0$ denote the flow rate. 
We can formulate the objective of serving all the multicast flows in $H$ \eqref{eq:demands H} within the minimum amount of time 
as the following optimization: 
\begin{subequations}\label{eq:min-time}
\begin{align}
   \min_{\bm{z},\bm{r},\bm{d},\bm{f}} &\quad \tau\\
   \mbox{s.t.} &\quad  \tau \geq {\kappa_h\over d_h}+ \sum_{(i,j)\in E}l_{i, j} r_{ij}^{h,k} ,~~\forall h\in H, k\in T_h,
   \label{min-time:time}\\
&   \sum_{(i,j)\in E: \ue\in \up_{i,j}}\sum_{h\in H} f^h_{ij}\leq C_{\ue},~~\forall \ue\in \uE,   \label{min-time:link capacity}\\   
   & \hspace{-2em}\sum_{j\in V}r^{h,k}_{ij} = \sum_{j\in V}r^{h,k}_{ji} + b^{h,k}_i, ~~ \forall h\in H, k\in T_h, i\in V, \label{min-time:flow conservation}\\
   & r^{h,k}_{ij}\leq z^h_{ij}, ~~ \forall h\in H, k\in T_h, (i,j)\in E, \label{min-time:tree} \\
   &\hspace{-1em} d_h-M(1-z^h_{ij})\leq f^h_{ij}\leq d_h,~\forall h\in H, (i,j)\in E, \label{min-time:big-M}\\
   &f_{ij}^h \le M z_{ij}^h,~\forall h\in H, (i,j)\in E,\label{noFlow:big-M}\\ 
   & \hspace{-0em} r^{h,k}_{ij}, z^h_{ij}\in \{0,1\},~d_h\in [0,M],~f^h_{ij}\geq 0,\nonumber\\
   & \hspace{8em}\forall h\in H, k\in T_h, (i,j)\in E, \label{min-time:bound}
\end{align}
\end{subequations}
where $M$ is an upper bound on $d_h$ ($\forall h\in H$). 
Constraint \eqref{min-time:time} makes $\tau$ an upper bound on the completion time of the slowest flow; \eqref{min-time:link capacity} ensures that the total traffic rate imposed by the overlay on any underlay link is within its capacity; \eqref{min-time:flow conservation}--\eqref{min-time:tree} are the \emph{Steiner arborescence} constraints \cite{Goemans93Networks} that guarantee the set of overlay links with $z^h_{ij}=1$ will form a Steiner arborescence (i.e., a directed Steiner tree) that is the union of paths from $s_h$ to each $k\in T_h$ (where each path is formed by the links with $r^{h,k}_{ij}=1$); \eqref{min-time:big-M} implies that $f^h_{ij}=d_h$ if $z^h_{ij}=1$ and \eqref{noFlow:big-M} together with \eqref{min-time:bound} implies that $f^h_{ij}= 0$ if $z^h_{ij}=0$, which allows the capacity constraint to be formulated as a linear inequality \eqref{min-time:link capacity} instead of a bilinear inequality $\sum_{(i,j)\in E: \ue\in \up_{i,j}}\sum_{h\in H} d_h z^h_{ij}\leq C_{\ue}$. 
The optimal solution $(\bm{z}^*, \bm{r}^*, \bm{d}^*, \bm{f}^*)$ to \eqref{eq:min-time} provides an overlay communication schedule that minimizes the communication time in a given iteration when the set of activated links is $E_a$.

\emph{Complexity:} 
As $|H|\leq |V|$, the optimization \eqref{eq:min-time} contains $O(|V|^2 |E|)$ variables (dominated by $\bm{r}$), and $O(|\uE| + |V|^2(|V|+|E|))$ constraints. 
Since constraints \eqref{min-time:link capacity}--\eqref{min-time:big-M} are linear and constraint \eqref{min-time:time} is convex, the optimization \eqref{eq:min-time} is a mixed integer convex programming (MICP) problem and thus can be solved by existing MICP solvers such as Pajarito~\cite{Lubin17thesis} at a super-polynomial complexity or approximate MICP algorithms such as convex relaxation plus randomized rounding at a polynomial complexity.  

\if\thisismainpaper0

\subsubsection{Important Special Case} 

In the practical application scenario where the underlay is an edge network spanning a relatively small area, the propagation delay $l_{i,j}$ will be negligible compared to the data transfer time. Moreover, since all the agents are training the same model, the sizes of the local parameter vectors will be identical without compression, i.e., $\kappa_i\equiv \kappa$ ($\forall i\in V$). Even if compression is used, since the compressed model sizes will vary dynamically across iterations, tailoring the communication schedule to the compressed model sizes will require re-solving \eqref{eq:min-time} for every iteration, which is expensive in terms of both the overhead in collecting all the compressed model sizes and the computation cost. It is thus more practical to optimize the communication schedule only once at the beginning of the learning task and use it throughout the task, and setting $\kappa_i\equiv \kappa$ (the uncompressed model size) in solving \eqref{eq:min-time} will provide a guaranteed per-iteration time, even though compression can be applied during training to further reduce the time. %under the predetermined communication schedule. 
In this scenario, the completion time for a given multicast flow $h\in H$ is reduced to $\kappa/d_h$, and the overall completion time of an iteration is reduced to $\kappa/(\min_{h\in H} d_h)$. %, i.e., the optimal schedule should maximize the rate of the slowest flow. 
For this simplified objective, we observe the following property.

\begin{lemma}\label{lem:equal bandwidth allocation}
Given a feasible routing solution $\bm{z}$ to \eqref{eq:min-time}, define 
\begin{align}
t_{\ue}:= |\{h\in H, (i,j)\in E:\: z^h_{ij}=1, \ue\in \up_{ij}\}|
\end{align}
as the number of activated unicast flows\footnote{Although the logical demands is a set of multicast flows as in \eqref{eq:demands H}, the multicast operations can only be performed at overlay nodes, according to logical multicast trees formed by overlay links. Each hop in such a tree, corresponding to some $z^h_{ij}=1$, is implemented by a unicast flow through the underlay, hereafter referred to as an ``activated unicast flow''.} traversing each underlay link $\ue\in \uE$. If $l_{i,j}=0$ $\forall (i,j)\in E$ and $\kappa_h\equiv \kappa$ $\forall h\in H$, then the optimal value of \eqref{eq:min-time} under the given routing solution $\bm{z}$ is 
\begin{align}\label{eq:tau - special case, per-link}
\tau = {\kappa\over \min_{\ue\in \uE} C_{\ue}/t_{\ue}},
\end{align}
achieved by equally sharing the bandwidth at every underlay link among the activated unicast flows traversing it. 
\end{lemma}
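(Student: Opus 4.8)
The plan is to fix the routing $\bm{z}$ as given and reduce the optimization \eqref{eq:min-time} to a pure bandwidth-allocation problem over the scalar rate variables $\bm{d}$, then solve that reduced problem in closed form. First I would exploit the two simplifying hypotheses: with $l_{i,j}=0$ and $\kappa_h\equiv\kappa$, constraint \eqref{min-time:time} collapses to $\tau \ge \kappa/d_h$ for every $h\in H$ (the delay term vanishes and all destinations $k\in T_h$ give the identical requirement), so at optimum $\tau = \kappa/\min_{h\in H} d_h$ and minimizing $\tau$ is equivalent to maximizing $\min_{h\in H} d_h$. Next I would use the big-M constraints \eqref{min-time:big-M} and \eqref{noFlow:big-M} together with \eqref{min-time:bound} to eliminate $\bm{f}$: they force $f^h_{ij}=d_h$ whenever $z^h_{ij}=1$ and $f^h_{ij}=0$ whenever $z^h_{ij}=0$. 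Substituting into \eqref{min-time:link capacity}, each underlay link $\ue$ contributes the inequality $\sum_{(i,j):\ue\in\up_{ij}}\sum_{h:z^h_{ij}=1} d_h \le C_{\ue}$, whose left-hand side is a sum of exactly $t_{\ue}$ rate terms, one per activated unicast flow traversing $\ue$.

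Second, I would establish matching bounds on the resulting value $\max\min_h d_h$. For the upper bound, let $d^\star := \min_{h\in H} d_h$ in any feasible solution; for any link $\ue$ with $t_{\ue}>0$, each of the $t_{\ue}$ terms in its capacity inequality is at least $d^\star$, so $C_{\ue} \ge t_{\ue}\, d^\star$, i.e. $d^\star \le C_{\ue}/t_{\ue}$, and minimizing over $\ue$ yields $d^\star \le \min_{\ue\in\uE} C_{\ue}/t_{\ue}$ (links with $t_{\ue}=0$ impose no constraint, consistent with the convention $C_{\ue}/0=\infty$). For achievability I would set $d_h := \min_{\ue\in\uE} C_{\ue}/t_{\ue}$ uniformly over all $h$; this is exactly the equal-sharing allocation, since the $t_{\ue}$ instances crossing any link jointly use at most $t_{\ue}\cdot\min_{\ue'} C_{\ue'}/t_{\ue'}\le C_{\ue}$. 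Taking $\bm{f}$ and $\bm{r}$ as induced by the given feasible $\bm{z}$, I would verify that all constraints of \eqref{eq:min-time} hold, so this solution attains $\min_h d_h = \min_{\ue} C_{\ue}/t_{\ue}$, matching the upper bound. Hence the optimal value is $\tau = \kappa/(\min_{\ue\in\uE} C_{\ue}/t_{\ue})$, as claimed in \eqref{eq:tau - special case, per-link}.

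The main obstacle I anticipate is the bookkeeping linking the multicast demands $H$ to the per-link count $t_{\ue}$: a single multicast tree may have several edges whose underlay paths all cross the same $\ue$, so the same rate $d_h$ can appear more than once in one link's capacity inequality. I would handle this by phrasing both the definition of $t_{\ue}$ and the capacity sum in terms of activated unicast flow \emph{instances}, i.e. pairs $(h,(i,j))$ with $z^h_{ij}=1$ and $\ue\in\up_{ij}$, so that the number of summands equals $t_{\ue}$ exactly and the step $C_{\ue}\ge t_{\ue}\, d^\star$ remains rigorous regardless of such repetitions.
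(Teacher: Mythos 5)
Your proposal is correct and follows essentially the same argument as the paper: an upper bound on the achievable rate from the bottleneck link $\ue^* = \argmin_{\ue\in\uE} C_{\ue}/t_{\ue}$ (since $t_{\ue^*}$ activated unicast flows share $C_{\ue^*}$, the slowest one is at most $C_{\ue^*}/t_{\ue^*}$), matched by the equal-sharing/uniform-rate assignment $d_h \equiv \min_{\ue} C_{\ue}/t_{\ue}$ for achievability. The only difference is presentational---you make explicit the reduction via the big-M constraints forcing $f^h_{ij}=d_h$ on activated edges and count flow \emph{instances} $(h,(i,j))$ in the capacity sums, which is a slightly more rigorous rendering of the same bookkeeping the paper handles informally.
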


\emph{Remark:} Lemma~\ref{lem:equal bandwidth allocation} implies that for $l_{i,j}=0$ $\forall (i,j)\in E$ and $\kappa_h\equiv \kappa$ $\forall h\in H$, solving \eqref{eq:min-time} is reduced to optimizing the routing $\bm{z}$ among the overlay nodes, after which the flow rates can be easily determined as $d_h\equiv \min_{\ue\in \uE} C_{\ue}/t_{\ue}$ $\forall h\in H$ (note that $\bm{r}, \bm{f}$ are dependent variables determined by $\bm{z}, \bm{d}$). More importantly, Lemma~\ref{lem:equal bandwidth allocation} implies that if each activated unicast flow is implemented as a TCP flow (using the same congestion control algorithm), then the minimum completion time will be automatically achieved in the steady state as long as the routing is optimal.  

\fi

\subsubsection{Handling Uncooperative Underlay}

When learning over an uncooperative underlay as considered in this work, the overlay cannot directly solve \eqref{eq:min-time}, because the capacity constraint \eqref{min-time:link capacity} requires the knowledge of the routing in the underlay and the capacities of the underlay links. In absence of such knowledge, we leverage a recent result from \cite{Huang23MobiHoc} to convert this constraint into an equivalent form that can be consistently estimated by the overlay. To this end, we introduce the following notion from \cite{Huang23MobiHoc}, adapted to our problem setting.  

\begin{definition}[\cite{Huang23MobiHoc}]\label{def: category}
A \textbf{category of underlay links $\Gamma_F$} for a set of overlay links $F$ ($F\subseteq E$) is the set of underlay links traversed \emph{by and only by} the underlay routing paths for the overlay links in $F$ out of all the paths for $E$, i.e,\footnote{Here $\up$ is interpreted as the set of underlay links traversed by path $\up$.} 
\begin{align}\label{eq:category}
\Gamma_F\coloneqq \Big(\bigcap_{(i,j)\in {F}}\up_{i,j}\Big) \setminus \Big(\bigcup_{(i,j)\in E\setminus F}\up_{i,j}\Big). 
\end{align}
\end{definition}

The key observation is that since all the underlay links in the same category are traversed by the same set of overlay links, they must carry the same traffic load from the overlay. Therefore, we can reduce the per-link capacity constraint \eqref{min-time:link capacity} into the following \emph{per-category capacity constraint}:
\begin{align}\label{eq:capacity constraint - category}
\sum_{(i,j)\in {F}}\sum_{h\in H}f^h_{ij} \leq C_F, ~~~\forall F\subseteq E \mbox{ with }\Gamma_F\neq \emptyset,    
\end{align}
where $C_F\coloneqq \min_{\ue \in \Gamma_F}C_{\ue}$, referred to as the \emph{category capacity}, is the minimum capacity of all the links in category $\Gamma_F$. 
The new constraint \eqref{eq:capacity constraint - category} is equivalent to the original constraint \eqref{min-time:link capacity}, as an overlay communication schedule satisfies one of these constraints if and only if it satisfies the other. However, instead of requiring detailed internal information about the underlay (i.e., $(\up_{i,j})_{(i,j)\in E}$ and $(C_{\ue})_{\ue\in \uE}$), constraint \eqref{eq:capacity constraint - category} only requires the knowledge of the \emph{nonempty categories} and the corresponding \emph{category capacities}. 

\if\thisismainpaper0

In the special case of $l_{i,j}=0$ $\forall (i,j)\in E$ and $\kappa_h\equiv \kappa$ $\forall h\in H$, we have a category-based counterpart of Lemma~\ref{lem:equal bandwidth allocation} as follows. 

\begin{lemma}\label{lem:equal bandwidth allocation - category}
Given a feasible routing solution $\bm{z}$ to \eqref{eq:min-time}, define
\begin{align}
t_F := |\{h\in H, (i,j)\in F:\: z^h_{ij}=1\}|
\end{align}
as the number of activated unicast flows traversing the links in category $\Gamma_F$. If $l_{i,j}=0$ $\forall (i,j)\in E$ and $\kappa_h\equiv \kappa$ $\forall h\in H$, then the optimal value of \eqref{eq:min-time} under the given routing solution $\bm{z}$ is
\begin{align}\label{eq:tau - special case, per-category}
\tau = {\kappa\over \min_{F\in \mathcal{F}} C_F/t_F},
\end{align}
achieved by equally sharing the bandwidth at every underlay link among the activated unicast flows traversing it, where $\mathcal{F}:=\{F\subseteq E: \Gamma_F\neq \emptyset\}$.
\end{lemma}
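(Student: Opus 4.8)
The plan is to derive Lemma~\ref{lem:equal bandwidth allocation - category} as an immediate consequence of Lemma~\ref{lem:equal bandwidth allocation}, since both characterize the optimal value of the \emph{same} optimization \eqref{eq:min-time} under the same fixed routing $\bm{z}$ and the same special-case assumptions ($l_{i,j}=0$ $\forall(i,j)\in E$ and $\kappa_h\equiv\kappa$ $\forall h\in H$). It therefore suffices to show that the per-link expression \eqref{eq:tau - special case, per-link} and the per-category expression \eqref{eq:tau - special case, per-category} coincide, i.e.,
\[
\min_{\ue\in\uE} \frac{C_{\ue}}{t_{\ue}} = \min_{F\in\mathcal{F}} \frac{C_F}{t_F},
\]
where a ratio with a zero denominator is read as $+\infty$ (such links or categories carry no activated unicast flow and never determine the bottleneck).

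First I would record the structural fact that the nonempty categories partition the underlay links actually traversed by some overlay path: every such link $\ue$ lies in exactly one category, namely $\Gamma_F$ with $F=\{(i,j)\in E:\ue\in\up_{i,j}\}$, while links on no overlay path contribute $t_{\ue}=0$ and can be ignored. The crux is then the counting identity $t_{\ue}=t_F$ for every $\ue\in\Gamma_F$. This follows directly from Definition~\ref{def: category}: for $\ue\in\Gamma_F$ and any $(i,j)\in E$, the inclusion $\Gamma_F\subseteq\bigcap_{(i,j)\in F}\up_{i,j}$ together with $\Gamma_F\cap\up_{i,j}=\emptyset$ for $(i,j)\in E\setminus F$ gives $\ue\in\up_{i,j}\iff(i,j)\in F$; hence the set of activated unicast flows crossing $\ue$ equals $\{(h,(i,j)):z^h_{ij}=1,\,(i,j)\in F\}$, whose cardinality is exactly $t_F$.

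Given this identity, I would compute, for each category, $\min_{\ue\in\Gamma_F} C_{\ue}/t_{\ue} = (\min_{\ue\in\Gamma_F}C_{\ue})/t_F = C_F/t_F$, using $t_{\ue}=t_F$ and the definition $C_F=\min_{\ue\in\Gamma_F}C_{\ue}$. Minimizing over the partition of the traversed links then yields $\min_{\ue\in\uE}C_{\ue}/t_{\ue}=\min_{F\in\mathcal{F}} C_F/t_F$, which establishes the equality of \eqref{eq:tau - special case, per-link} and \eqref{eq:tau - special case, per-category}. The claim about equal bandwidth sharing carries over verbatim from Lemma~\ref{lem:equal bandwidth allocation}, because the optimal solution is the same physical allocation, merely described per category rather than per link.

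The main obstacle, and the only place requiring real care, is the bottleneck/optimality argument underlying Lemma~\ref{lem:equal bandwidth allocation} itself, should one prefer a self-contained proof rather than the reduction. In that case I would argue directly on \eqref{eq:min-time}: the special-case assumptions collapse \eqref{min-time:time} to $\tau=\kappa/\min_{h}d_h$, the big-$M$ constraints \eqref{min-time:big-M}--\eqref{noFlow:big-M} force $f^h_{ij}=d_h z^h_{ij}$, and the per-category capacity constraint \eqref{eq:capacity constraint - category} becomes $\sum_{(i,j)\in F}\sum_{h:z^h_{ij}=1}d_h\le C_F$. Feasibility of $d_h\equiv\min_{F\in\mathcal{F}} C_F/t_F$ is then immediate, since each category sum equals $t_F\cdot\min_{F'\in\mathcal{F}}C_{F'}/t_{F'}\le C_F$; conversely, picking a minimizing category $F^*$ gives $t_{F^*}\min_h d_h\le\sum_{(i,j)\in F^*}\sum_{h:z^h_{ij}=1}d_h\le C_{F^*}$, so $\min_h d_h\le C_{F^*}/t_{F^*}=\min_{F\in\mathcal{F}} C_F/t_F$. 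Matching the two bounds shows the optimal $\min_h d_h$ equals $\min_{F\in\mathcal{F}} C_F/t_F$, attained by equal sharing. The delicate point throughout remains the identity $t_{\ue}=t_F$, which is precisely what Definition~\ref{def: category} is engineered to guarantee.
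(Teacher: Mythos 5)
Your proposal is correct and takes essentially the same route as the paper's own proof: reduce to Lemma~\ref{lem:equal bandwidth allocation}, establish the identity $t_{\ue}=t_F$ for every $\ue\in\Gamma_F$ from Definition~\ref{def: category}, and conclude $\min_{\ue\in\uE}C_{\ue}/t_{\ue}=\min_{F\in\mathcal{F}}\min_{\ue\in\Gamma_F}C_{\ue}/t_{\ue}=\min_{F\in\mathcal{F}}C_F/t_F$. Your explicit handling of links with $t_{\ue}=0$ (reading the ratio as $+\infty$) is a minor point of extra care that the paper leaves implicit, but it does not change the argument.
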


\fi

Under the assumption that every underlay link introduces a nontrivial performance impact (e.g., non-zero loss/queueing probability), \cite{Huang23MobiHoc} provided an algorithm that can consistently infer the nonempty categories from losses/delays of packets sent concurrently through the overlay links, under the assumption that concurrently sent packets will experience the same performance when traversing a shared underlay link. Moreover, by leveraging state-of-the-art single-path residual capacity estimation methods, \cite{Huang23MobiHoc} gave a simple algorithm that can accurately estimate the \emph{effective category capacity $\widetilde{C}_F$} for each detected nonempty category, that can be used in place of $C_F$ in \eqref{eq:capacity constraint - category} without changing the feasible region. Given the indices of inferred nonempty categories $\widehat{\mathcal{F}}$ and their inferred effective capacities $(\widehat{C}_F)_{F\in \widehat{\mathcal{F}}}$, we can construct the per-category capacity constraint as
\begin{align}\label{eq:capacity constraint - category inferred}
\sum_{(i,j)\in {F}}\sum_{h\in H}f^h_{ij} \leq \widehat{C}_F, ~~~\forall F\in \widehat{\mathcal{F}},    
\end{align}
which can then be used in place of \eqref{min-time:link capacity} in \eqref{eq:min-time} to compute an optimized overlay communication schedule.  
\if\thisismainpaper0
In the special case considered in Lemma~\ref{lem:equal bandwidth allocation - category}, we can estimate the minimum completion time under a given routing solution by 
\begin{align}\label{eq:tau - special case, per-category, inferred}
\widehat{\tau} := {\kappa\over \min_{F\in \widehat{\mathcal{F}}} \widehat{C}_F/t_F}. 
\end{align}
\fi

\emph{Remark:} First, the traversal of overlay paths through the overlay links is directional, and the traversal of underlay routing paths through the underlay links is also directional. Correspondingly, the overlay links in a category index $F$ should be treated as directed links (i.e., $(i,j)\in F$ only implies that the underlay links in $\Gamma_F$ are traversed by the path $\up_{i,j}$). This is not to be confused with treating the activated links in $E_a$ as undirected links, because each $(i,j)\in E_a$ stands for a parameter exchange between agents $i$ and $j$. 
Moreover, we only use the activated links $E_a$ to determine the flow demands $H$, but any overlay link in $E$ can be used in serving these flows. 

%\ting{how to improve robustness to inference error?}

\subsubsection{Additional Optimization Opportunities and Challenges}

\begin{figure}[t!]%{.75\linewidth}
\vspace{-.0em}
\centerline{\mbox{\includegraphics[width=.8\linewidth]{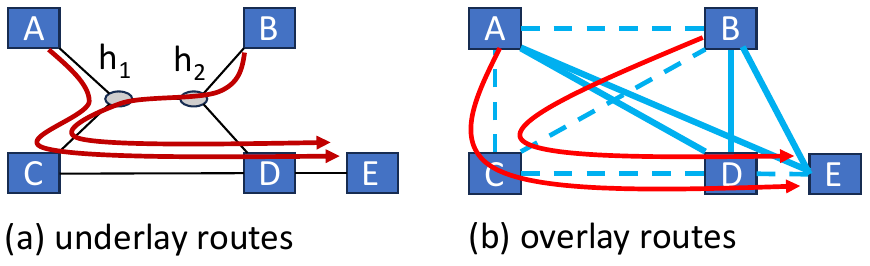}}}
\vspace{-1em}
\caption{\small Challenge for in-overlay aggregation (learning agents: $\{A,B,C,D,E\}$; underlay nodes: $\{h_1,h_2\}$). }
\label{fig:overlay_underlay_routing_aggregation}
\vspace{-.25em}
\end{figure}

The formulation \eqref{eq:min-time} treats each overlay node that is neither the source nor one of the destinations of a multicast flow as a pure relay, but this node is actually a learning agent capable of aggregating the parameter vectors. This observation raises two questions: (i) Can an agent include  parameter vectors relayed through it in its own parameter aggregation? (ii) If an agent relays multiple parameter vectors for different sources, can it forward the aggregated vector instead of the individual vectors?\looseness=-1

To answer the first question, consider the case in Fig.~\ref{fig:overlay_underlay_routing} when $A$ sends its parameter vector $\bm{x}_A$ to $D$ through the overlay path $A\to C\to D$. If $C$ includes $\bm{x}_A$ in its own parameter aggregation with a non-zero weight $W_{CA}$, then by the symmetry of the mixing matrix, $A$ must also include $\bm{x}_C$ in its parameter aggregation with weight $W_{AC}=W_{CA}$, which is equivalent to activating the overlay link $(A,C)$. As we have left the optimization of the activated links $E_a$ to another subproblem (Section~\ref{subsec:Link Activation Optimization}), there is no need to include relayed parameter vectors in parameter aggregation when optimizing the communication schedule for a given set of activated links. 

To answer the second question, consider the case in Fig.~\ref{fig:overlay_underlay_routing_aggregation} when the overlay routes the multicast from $A$ to $\{D,E\}$ (for disseminating $\bm{x}_A$) over $A\to C\to D\to E$, and the multicast from $B$ to $\{D, E\}$ (for disseminating $\bm{x}_B$) over $B\to C\to D \to E$. Although instead of separately forwarding $\bm{x}_A$ and $\bm{x}_B$, $C$ could aggregate them before forwarding, the aggregation will not save bandwidth for $C$, as $D$ needs $W_{DA}\bm{x}_A+ W_{DB}\bm{x}_B$ but $E$ needs $W_{EA}\bm{x}_A+W_{EB}\bm{x}_B$, which are generally not the same. Another issue with in-network aggregation (within the overlay) is the synchronization delay introduced at the point of aggregation, and thus in-network aggregation may not reduce the completion time even when it can save bandwidth, e.g., at $D$. We thus choose not to consider in-network aggregation in our formulation \eqref{eq:min-time}. Further optimizations exploiting such capabilities are left to future work. 
%the distribution of each distinct parameter vector as an independent multicast flow, 

\subsection{Link Weight Optimization}\label{subsec:Conditional Link Weight Optimization}

Given the set of activated links $E_a\subseteq \widetilde{E}$, the communication time per iteration has been determined as explained in Section~\ref{subsec:Communication Schedule Optimization}, but the number of iterations has not, and is heavily affected by the weights of the activated links. This leads to the question of how to minimize the number of iterations for achieving a desired level of convergence, under the constraint that only the activated links can have non-zero weights. % the link weights must satisfy $\alpha_{ij}= 0$ if $(i,j)\not\in E_a$. 

To answer this question, we leverage a state-of-the-art convergence bound for D-PSGD under the following assumptions:
\begin{enumerate}[(1)]
    \item Each local objective function $F_i(\boldsymbol{x})$ is $l$-Lipschitz smooth, i.e.,    
    $\|\nabla F_i(\bm{x})-\nabla F_i(\bm{x}')\|\leq l\|\bm{x}-\bm{x}'\|,\: \forall i\in V$.
\item There exist constants $M_1, \widehat{\sigma}$ such that    
    $\hspace{-0em} {1\over m}\sum_{i\in V}\E[\|g(\bm{x}_i;\xi_i)-\nabla F_i(\bm{x}_i) \|^2] \leq \widehat{\sigma}^2 + {M_1\over m}\sum_{i\in V}\|\nabla F(\bm{x}_i)\|^2$, %\vspace{-0em} 
    $\forall \bm{x}_1,\ldots,\bm{x}_m \in \mathbb{R}^d$.    
\item There exist constants $M_2, \widehat{\zeta}$ such that 
    ${1\over m}\sum_{i\in V}\|\nabla F_i(\bm{x})\|^2\leq \widehat{\zeta}^2 + M_2\|\nabla F(\bm{x}) \|^2, \forall \bm{x} \in \mathbb{R}^d$.    
\end{enumerate}
Let $\bm{J}:={1\over m}\bm{1} \bm{1}^\top$ denote an ideal $m\times m$ mixing matrix with all entries being ${1\over m}$.

\begin{theorem}\cite[Theorem~2]{Koloskova20ICML} \label{thm:new convergence bound}
Under assumptions (1)--(3), if there exist constants $p\in (0,1]$ and integer $t\geq 1$ such that the mixing matrices $\{\bm{W}^{(k)}\}_{k=1}^K$, each being symmetric with each row/column summing to one\footnote{Originally, \cite[Theorem~2]{Koloskova20ICML} had a stronger assumption that each mixing matrix is doubly stochastic, but we have verified that it suffices to have each row/column summing to one.}, satisfy
\begin{align}\label{eq:condition on p}
\hspace{-.5em}    \E[\|\bm{X} \hspace{-.25em}\prod_{k=k't+1}^{(k'+1)t}\hspace{-.25em}\bm{W}^{(k)} - \bm{X}\bm{J}\|_F^2] \leq (1-p) \|\bm{X}-\bm{X}\bm{J}\|_F^2
\end{align}
for all $\bm{X}:= [\bm{x}_1,\ldots,\bm{x}_m]$ and integer $k'\geq 0$, then D-PSGD can achieve $ \frac{1}{K} \sum_{k=1}^K \mathbbm{E}[\|\nabla F(\boldsymbol{\overline{x}}^k)\|^2]\leq \epsilon_0$ for any given $\epsilon_0>0$ ($\overline{\bm{x}}^{(k)}:={1\over m}\sum_{i=1}^m \bm{x}^{(k)}_i$) when the number of iterations reaches
\begin{align}
K(p,t)&:= l(F(\overline{\bm{x}}^{(1)})-F_{\inf}) \nonumber\\
&\hspace{-2.75em}   \cdot \hspace{-.15em}O\hspace{-.25em}\left(\hspace{-.25em}{\widehat{\sigma}^2\over m\epsilon_0^2} \hspace{-.15em}+\hspace{-.15em}{\widehat{\zeta}t\sqrt{M_1+1}+\widehat{\sigma}\sqrt{pt}\over p \epsilon_0^{3/2}} \hspace{-.15em}+\hspace{-.15em} {t\sqrt{(M_2+1)(M_1+1)}\over p\epsilon_0} \hspace{-.05em}\right), \label{eq:new bound on K}
\end{align}
where $\overline{\bm{x}}^{(1)}$ is the initial parameter vector, and $F_{\inf}$ is a lower bound on $F(\cdot)$. 
\end{theorem}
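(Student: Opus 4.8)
The plan is not to reprove this bound from scratch, since the statement is quoted essentially verbatim from \cite[Theorem~2]{Koloskova20ICML}; the only deviation is that the doubly-stochastic requirement on each $\bm{W}^{(k)}$ has been weakened to symmetry together with unit row/column sums (i.e.\ the entrywise constraint to $[0,1]$ is dropped). Accordingly, I would audit the proof in \cite{Koloskova20ICML}, isolate every place where double stochasticity is actually invoked, and check that each such place survives under the weaker hypothesis. Everything else in the argument is driven by assumptions (1)--(3) and by the contraction hypothesis \eqref{eq:condition on p}, none of which references the sign pattern of the mixing matrices, so the bound \eqref{eq:new bound on K} would follow unchanged once the audit closes.

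First I would nail down that symmetry plus unit row sums already supplies everything the proof needs from the averaging operator. Since $\bm{W}^{(k)}\bm{1}=\bm{1}$ and, by symmetry, $\bm{1}^\top\bm{W}^{(k)}=(\bm{W}^{(k)}\bm{1})^\top=\bm{1}^\top$, we obtain $\bm{W}^{(k)}\bm{J}=\bm{J}\bm{W}^{(k)}=\bm{J}$ and hence $(\bm{I}-\bm{J})\bm{W}^{(k)}=\bm{W}^{(k)}(\bm{I}-\bm{J})$. These identities are exactly what the original proof uses to (i) show that the network-averaged iterate $\overline{\bm{x}}^{(k)}$ evolves by a plain SGD step unaffected by mixing, and (ii) split each iterate into a consensus component $\bm{X}\bm{J}$ and a disagreement component $\bm{X}(\bm{I}-\bm{J})$ so that a recursion for the consensus error can be formed. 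None of these steps uses entrywise nonnegativity, so they transfer directly.

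The second, and I expect the only delicate, point is the contraction of the disagreement term. In the original proof the mixing enters the consensus-error recursion solely through a $t$-step contraction hypothesis, which is precisely our assumed \eqref{eq:condition on p}; double stochasticity is used there at most to guarantee the per-step non-expansiveness $\|\bm{W}^{(k)}\|\le 1$ for isolated intermediate applications. The plan is therefore to show that every estimate in the recursion can be organized over complete length-$t$ windows, for which \eqref{eq:condition on p} supplies the $(1-p)$ contraction outright, so that a single-step spectral bound is never genuinely needed. Wherever the original argument applies a lone $\bm{W}^{(k)}$ to a gradient or noise term, I would bound its contribution using the Frobenius-norm identity above and the fact that only its projection onto $\mathrm{range}(\bm{I}-\bm{J})$ matters, again avoiding any appeal to nonnegativity.

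The main obstacle is thus a careful bookkeeping one: confirming that the inequality $\|\bm{W}^{(k)}\|\le 1$ (which for doubly-stochastic matrices follows from a Perron--Frobenius/Gershgorin argument but can fail for a symmetric unit-row-sum matrix with negative entries) is never used in a manner not already subsumed by \eqref{eq:condition on p}. If an isolated per-step non-expansiveness were truly required, I would either absorb it by grouping terms into complete mixing windows, or exploit that the relevant quantities are first projected by $\bm{I}-\bm{J}$, on which the operative operator norm is again controlled by the same contraction constant $1-p$. Once this audit closes, the remaining smoothness, bounded-variance, and bounded-heterogeneity steps are independent of the mixing matrices, and the iteration count \eqref{eq:new bound on K} holds as stated.
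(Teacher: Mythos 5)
The paper contains no proof of this theorem: it is quoted directly from \cite[Theorem~2]{Koloskova20ICML}, and the only original content is the footnote claiming that the doubly-stochastic hypothesis can be weakened to symmetry with unit row/column sums---a claim the authors support by exactly the kind of audit you describe. Your proposal (retaining the identities $\bm{W}^{(k)}\bm{J}=\bm{J}\bm{W}^{(k)}=\bm{J}$ from symmetry plus unit row sums, leaning on the windowed contraction \eqref{eq:condition on p} instead of per-step non-expansiveness $\|\bm{W}^{(k)}\|\le 1$, and verifying that entrywise nonnegativity is never otherwise invoked) is essentially the same approach, so it is consistent with the paper's treatment.
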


\if\thisismainpaper0
\emph{Remark:} 
While there exist other convergence bounds for D-PSGD such as \cite{MATCHA22,vogels2022beyond,neglia2020decentralized,le2023refined}, we choose Theorem~\ref{thm:new convergence bound} as the theoretical foundation of our design due to the generality of its assumptions. For example, 
assumption~(2) generalizes the assumption of uniformly bounded variance of stochastic gradients in \cite{MATCHA22,le2023refined}, assumption~(3) generalizes the assumption of bounded data heterogeneity in \cite{MATCHA22,vogels2022beyond,neglia2020decentralized}, and assumptions~(2-3) are easily implied by the bounded gradient assumption in \cite{neglia2020decentralized} (see explanations in \cite{Koloskova20ICML}).  
\fi

For tractability, we will focus on the case of i.i.d. mixing matrices. In this case, to achieve $\epsilon_0$-convergence, it suffices for the number of iterations to reach $K(p,t)$ as in \eqref{eq:new bound on K} for $t = 1$. We note that $K(p,1)$ depends on the mixing matrix only through the parameter $p$: the larger $p$, the smaller $K(p,1)$. 
Recall that as explained in Section~\ref{subsec:Model of Decentralized Learning}, the mixing matrix $\bm{W}$ is related to the link weights $\bm{\alpha}$ as $\bm{W} = \bm{I}-\bm{B}\diag(\bm{\alpha})\bm{B}^\top$. To restrict the activated links to $E_a$, we set $\alpha_{ij}=0$ for all $(i,j)\not\in E_a$. Below, we will show that the following optimization gives a good design of the link weights: \looseness=0
\begin{subequations}\label{eq:min rho wo cost}
\begin{align}
& \min_{\bm{\alpha}}\:  \rho \label{wo cost:obj}\\
\mbox{s.t. }
& - \rho\bm{I} \preceq \bm{I} - \bm{B} \diag(\bm{\alpha}) \bm{B}^\top - \bm{J} \preceq  \rho\bm{I}, \label{wo cost:matrix}\\
& \alpha_{ij} = 0,~~\forall (i,j)\not\in E_a. \label{wo cost:activation}
%& \widetilde{\bm{B}}\bm{\alpha}\leq \bm{1}, \label{wo cost:upper}
\end{align}
\end{subequations}

\begin{corollary}\label{cor:optimal link weights}
Under assumptions (1)--(3) and i.i.d. mixing matrices $\bm{W}^{(k)} \stackrel{d}{=} \bm{W}$ that is symmetric with each row/column summing to one, D-PSGD achieves $\epsilon_0$-convergence as in Theorem~\ref{thm:new convergence bound} when the number of iterations reaches
\begin{align}
K(1-\E[\|\bm{W}-\bm{J}\|^2],1). \label{eq:relaxed bound on K}
\end{align}
Moreover, conditioned on the set of activated links being $E_a$, \eqref{eq:relaxed bound on K} $\geq K(1- \mathop{\rho^*}^2,1)$, where $\mathop{\rho^*}$ is the optimal value of \eqref{eq:min rho wo cost}, with ``$=$'' achieved at $\bm{W}^* = \bm{I}-\bm{B}\diag(\bm{\alpha}^*)\bm{B}^\top$ for $\bm{\alpha}^*$ being the optimal solution to \eqref{eq:min rho wo cost}. 
\end{corollary}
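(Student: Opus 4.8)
The plan is to verify that the i.i.d. mixing matrices satisfy the contraction condition \eqref{eq:condition on p} with $t=1$ and $p=1-\E[\|\bm{W}-\bm{J}\|^2]$, so that Theorem~\ref{thm:new convergence bound} applies and yields the iteration count \eqref{eq:relaxed bound on K}; then to compare this $p$ against $1-(\rho^*)^2$ using monotonicity of $K(\cdot,1)$ and the feasibility of every realization of $\bm{W}$ for the program \eqref{eq:min rho wo cost}.

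For the first part I would start from the two algebraic facts that follow from $\bm{W}$ being symmetric with unit row/column sums: namely $\bm{W}\bm{1}=\bm{1}$, and hence $\bm{W}\bm{J}=\bm{J}\bm{W}=\bm{J}$ since $\bm{J}$ projects onto the span of $\bm{1}$. A short computation then gives the projection identity $(\bm{I}-\bm{J})(\bm{W}-\bm{J})=\bm{W}-\bm{J}$. Using it, for any $\bm{X}$ I write $\bm{X}\bm{W}-\bm{X}\bm{J}=\bm{X}(\bm{W}-\bm{J})=[\bm{X}(\bm{I}-\bm{J})](\bm{W}-\bm{J})$ and apply the submultiplicative bound $\|\bm{Y}\bm{M}\|_F\le\|\bm{Y}\|_F\,\|\bm{M}\|$ with $\bm{Y}=\bm{X}-\bm{X}\bm{J}$ and $\bm{M}=\bm{W}-\bm{J}$. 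Taking expectation over the i.i.d. draw yields $\E[\|\bm{X}\bm{W}-\bm{X}\bm{J}\|_F^2]\le\E[\|\bm{W}-\bm{J}\|^2]\,\|\bm{X}-\bm{X}\bm{J}\|_F^2$, which is exactly \eqref{eq:condition on p} at $t=1$ with $1-p=\E[\|\bm{W}-\bm{J}\|^2]$. I would note that $p\in(0,1]$, as required by Theorem~\ref{thm:new convergence bound}, precisely when $\E[\|\bm{W}-\bm{J}\|^2]<1$, i.e. the mixing is a strict contraction on the disagreement subspace; substituting this $p$ into \eqref{eq:new bound on K} with $t=1$ gives \eqref{eq:relaxed bound on K}.

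For the second part I would first record that $K(\cdot,1)$ is nonincreasing in its first argument: in \eqref{eq:new bound on K} the $p$-dependent contributions $\widehat{\zeta}/p$, $\widehat{\sigma}/\sqrt{p}$, and $1/p$ all decrease as $p$ grows, while the $\widehat{\sigma}^2/(m\epsilon_0^2)$ term is constant. Hence it suffices to show $\E[\|\bm{W}-\bm{J}\|^2]\ge(\rho^*)^2$. The key observation is that $\bm{W}-\bm{J}$ is symmetric, so the semidefinite constraint \eqref{wo cost:matrix} is equivalent to the spectral-norm bound $\|\bm{I}-\bm{B}\diag(\bm{\alpha})\bm{B}^\top-\bm{J}\|\le\rho$; thus $\rho^*$ equals the minimum of $\|\bm{W}-\bm{J}\|$ over all link-weight assignments supported on $E_a$. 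Conditioned on the activated set being $E_a$, every realization of the random $\bm{W}$ is itself such a feasible assignment, so $\|\bm{W}-\bm{J}\|\ge\rho^*$ almost surely; squaring and taking expectation gives $\E[\|\bm{W}-\bm{J}\|^2]\ge(\rho^*)^2$, with equality attained by the deterministic choice $\bm{W}\equiv\bm{W}^*$ built from $\bm{\alpha}^*$, for which $\|\bm{W}^*-\bm{J}\|=\rho^*$.

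The linear algebra and the monotonicity check are routine; the only step needing care is the first one, where the identity $\bm{W}\bm{J}=\bm{J}$ must be used to collapse the Frobenius contraction onto a spectral-norm bound on $\bm{W}-\bm{J}$ and where the admissibility $p\in(0,1]$ must be checked. The main conceptual point — and the reason the program \eqref{eq:min rho wo cost} is the correct surrogate — is recognizing that, through the monotonicity of $K(\cdot,1)$, minimizing the iteration bound reduces to minimizing the spectral norm $\|\bm{W}-\bm{J}\|$, a quantity the deterministic design drives to its optimum $\rho^*$.
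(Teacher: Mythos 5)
Your proof is correct, but the first half takes a genuinely different route from the paper's. The paper does not verify the contraction condition \eqref{eq:condition on p} directly: it invokes an external result (\cite[Lemma~3.1]{Xusheng24ICASSP}) to characterize the \emph{largest} admissible $p$ as $1-\tilde{\rho}$ with $\tilde{\rho}:=\|\E[\bm{W}^\top\bm{W}]-\bm{J}\|$, then relaxes via Jensen's inequality and the per-realization identity $\|\bm{W}^\top\bm{W}-\bm{J}\|=\|\bm{W}-\bm{J}\|^2$ (proved by eigendecomposition of the symmetric matrix $\bm{W}-\bm{J}$) to conclude $\tilde{\rho}\le\E[\|\bm{W}-\bm{J}\|^2]$ and hence that $K(1-\E[\|\bm{W}-\bm{J}\|^2],1)$ iterations suffice. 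You instead verify \eqref{eq:condition on p} at $t=1$ from first principles, using $\bm{W}\bm{J}=\bm{J}\bm{W}=\bm{J}$, the projection identity $(\bm{I}-\bm{J})(\bm{W}-\bm{J})=\bm{W}-\bm{J}$, and the mixed-norm bound $\|\bm{Y}\bm{M}\|_F\le\|\bm{Y}\|_F\|\bm{M}\|$; this is more elementary and self-contained (no external lemma, no Jensen), and it also makes explicit the admissibility requirement $p\in(0,1]$, i.e.\ $\E[\|\bm{W}-\bm{J}\|^2]<1$, which the paper leaves implicit. What the paper's route buys in exchange is a stronger intermediate fact: it identifies the exact optimal contraction parameter $1-\|\E[\bm{W}^\top\bm{W}]-\bm{J}\|$, which is in general tighter than $1-\E[\|\bm{W}-\bm{J}\|^2]$ and clarifies exactly where the relaxation to \eqref{eq:relaxed bound on K} occurs; your argument only certifies sufficiency of the relaxed value, which is all the corollary needs. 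Your second half — identifying $\rho^*$ in \eqref{eq:min rho wo cost} with the minimum of $\|\bm{W}-\bm{J}\|$ over weight vectors supported on $E_a$ via the equivalence of \eqref{wo cost:matrix} with a spectral-norm bound, applying it per realization, and attaining equality with the deterministic $\bm{W}^*$ — coincides with the paper's argument, with the added (correct and worthwhile) explicit check that $K(\cdot,1)$ in \eqref{eq:new bound on K} is nonincreasing in its first argument.
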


Corollary~\ref{cor:optimal link weights} implies that given the set of activated links, we can design the corresponding link weights by solving \eqref{eq:min rho wo cost}, which will minimize an upper bound \eqref{eq:relaxed bound on K} on the number of iterations to achieve $\epsilon_0$-convergence. 
Optimization \eqref{eq:min rho wo cost} is a semi-definite programming (SDP) problem that can be solved in polynomial time by existing algorithms \cite{Jiang20FOCS}. 

\emph{Remark:}
When $\bm{W}$ satisfies the additional property of $\bm{I} \succeq \bm{W} \succeq -\bm{I}$, the largest singular value of $\bm{W}$ is $1$ \cite{hua2022efficient}, and thus $\| \bm{W}-\bm{J}\|$ is the second largest singular value of $\bm{W}$. In this case, minimizing $\rho$ in \eqref{eq:min rho wo cost} (which equals $\|\bm{W}-\bm{J}\|$ under the optimal solution) is equivalent to maximizing $\gamma(\bm{W}):= 1-\|\bm{W}-\bm{J}\|$, which is the \emph{spectral gap} of the mixing matrix $\bm{W}$~\cite{neglia2020decentralized}. The spectral gap is the most widely-used parameter to capture the impact of the mixing matrix on the convergence rate 
\if\thisismainpaper1
\cite{Lian17NIPS,neglia2020decentralized,jiang2023joint}.
\else
\cite{Lian17NIPS,Nedic18IEEE,Neglia19INFOCOM,neglia2020decentralized,jiang2023joint}. 
\fi
In this sense, our Corollary~\ref{cor:optimal link weights} extends the relationship between the spectral gap and the number of iterations to the case of random mixing matrices. As $\gamma(\bm{W})\to 0$ (in probability), the number of iterations according to \eqref{eq:relaxed bound on K} grows at
\begin{align}
K\left(1-\E[(1-\gamma(\bm{W}))^2],1\right) %&= O\left( {1\over 1-\E[(1-\gamma(\bm{W}))^2]}\right) \nonumber\\
&=O\left({1\over \E[\gamma(\bm{W})]}\right),
\end{align}
which is consistent with the existing result of $O(1/\gamma(\bm{W}))$ in the case of deterministic mixing matrix~\cite{neglia2020decentralized}. 
While other parameters affecting the convergence rate have been identified, e.g., the effective number of neighbors~\cite{vogels2022beyond} and the neighborhood heterogeneity~\cite{le2023refined}, these parameters are just additional factors instead of replacements of the spectral gap. We thus leave the optimization of these other objectives to future work.

\subsection{Link Activation Optimization}\label{subsec:Link Activation Optimization}

Given how to optimize the communication schedule and the link weights for a given set $E_a$ of activated links as in Sections~\ref{subsec:Communication Schedule Optimization}--\ref{subsec:Conditional Link Weight Optimization}, what remains is to optimize $E_a$ itself, which is also known as ``topology design'' \cite{marfoq2020throughput} as $(V,E_a)$ depicts a subgraph of the base topology (of the overlay) that is used to determine which agents will exchange parameter vectors during DFL. 
The set $E_a$ affects both the communication demands (and hence the time per iteration) and the sparsity pattern of the mixing matrix (and hence the number of iterations required). As mentioned in Section~\ref{subsec:Goal: Network-aware Communication Optimization for Decentralized Learning}, our goal is to minimize the total training time. 
For learning over bandwidth-limited networks, the training time is dominated by the communication time~\cite{Luo20MLsys}. We thus model the total training time by \looseness=0
\begin{align}\label{eq:total time}
\tau(E_a) \cdot K(E_a),
\end{align}
where we have used $\tau(E_a)$ to denote the communication time per iteration according to \eqref{eq:min-time} (with \eqref{min-time:link capacity} replaced by \eqref{eq:capacity constraint - category inferred}), and $K(E_a):=K(1-\mathop{\rho^*}^2, 1)$ to denote the number of iterations to achieve a given level of convergence. %according to Corollary~\ref{cor:optimal link weights}. 
Our goal is to minimize \eqref{eq:total time} over all the candidate values of $E_a\subseteq \widetilde{E}$. 

%In decentralized learning, the communication time in an iteration is the time to complete all the parameter exchanges over the overlay links activated in this iteration. This time depends not only on how many links are activated and their locations in the overlay topology, but also how the activated overlay links are routed in the underlay topology and the capacities of the involved underlay links. For example, suppose that each underlay link in Fig.~\ref{fig:overlay_underlay}a takes 1 unit time to transmit a parameter vector in either direction. Then although based on the overlay topology in Fig.~\ref{fig:overlay_underlay}b it seems that the communication times for activating $\{(A,B), (C,D)\}$ and $\{(A,C),(B,D)\}$ are both 2 units, the actual communication time for $\{(A,B), (C,D)\}$ is 4 units due to the shared link $(h_1,h_2)$ between the corresponding underlay paths. 

Directly solving this optimization is intractable %, not only because the solution space is discrete and exponentially large, but also 
because its solution space is exponentially large and its objective function \eqref{eq:total time} is not given explicitly. Our approach to address this challenge is to: (i) relax $\tau(E_a)$ and $K(E_a)$ into upper bounds that are explicit functions of $E_a$, (ii) decompose the optimization to separate the impacts of $\tau(E_a)$ and $K(E_a)$, and (iii) develop efficient solutions by identifying linkages to known problems. \looseness=-1

%\ting{challenges: (1) combinatorial optimization, (2) non-explicit objective function, (3) unknown parameters in evaluating $K(E_a)$}

\subsubsection{Relaxed Objective Function}\label{subsubsec:Relaxed Objective Function}

We first upper-bound $\tau(E_a)$ by considering a suboptimal but analyzable communication schedule. Consider a solution to \eqref{eq:min-time} with $z^h_{ij}=1$ if $i=s_h, j\in T_h$ and $0$ otherwise, and $r^{h,k}_{ij}=1$ if $i=s_h, j=k$ and $0$ otherwise, i.e., each parameter exchange corresponding to $(i,j)\in E_a$ is performed directly along the underlay routing paths $\up_{i,j}$ and $\up_{j,i}$. To achieve a per-iteration communication time of $\overline{\tau}$, the rate $d_{i,j}$ of sending the parameter vector of agent $i$ to its activated neighbor $j$  must satisfy
$
d_{i,j} \geq {\kappa_i\over \overline{\tau}-l_{i,j}}.
$
%Let $l_i:=\max_{(i,j)\in E}l_{i,j}$ denote the maximum propagation delay from agent $i$ to its neighbors in the base topology. Then to satisfy \eqref{eq:d_i lower bound}, it suffices to have $d_i = \kappa_i/(\overline{\tau}-l_i)$. 
%Note that in this case each multicast originating from an agent degenerates into a set of independent unicast flows, each of which can adopt a different rate. 
This is feasible for \eqref{eq:min-time} (with \eqref{min-time:link capacity} replaced by \eqref{eq:capacity constraint - category inferred}) if and only if \looseness=-1
\begin{align}\label{eq:constraint on E_a}
\sum_{(i,j)\in E_a} \left({\kappa_i\over \overline{\tau}-l_{i,j}}\mathbbm{1}_{(i,j)\in F} + {\kappa_j\over \overline{\tau}-l_{j,i}}\mathbbm{1}_{(j,i)\in F}\right) \leq \widehat{C}_F,~~\forall F\in \widehat{\mathcal{F}}.
%\sum_{(i,j)\in F\cap E_a} {\kappa_i\over \overline{\tau}-l_{i,j}} \leq \widehat{C}_F,~~~\forall F\in \widehat{\mathcal{F}},
\end{align}
The minimum value of $\overline{\tau}$ satisfying \eqref{eq:constraint on E_a}, denoted by $\overline{\tau}(E_a)$, thus provides an upper bound on the minimum per-iteration time $\tau(E_a)$ under the set of activated links in $E_a$. 

%$d_h\equiv d$ (a new decision variable), and $f^h_{ij}=d$ if $i=s_h, j\in T_h$ and $0$ otherwise. That is, the parameter exchange triggered by each activated link $(i,j)\in E_a$ is performed directly along the underlay routing paths $\up_{i,j}$ and $\up_{j,i}$, and all the flows have the same rate $d$. Such a solution is feasible to \eqref{eq:min-time} (with \eqref{min-time:link capacity} replaced by \eqref{eq:capacity constraint - category inferred}) if and only if 
%\[d\leq {\widehat{C}_F\over |F\cap E_a|},~~~\forall F\in \widehat{\mathcal{F}},\]
%\[\sum_{(i,j)\in F} d\cdot \mathbbm{1}_{(i,j)\in E_a}\leq \widehat{C}_F, \forall F\in \widehat{\mathcal{F}}\]
%which yields the following bound on the optimal value of \eqref{eq:min-time}
%\begin{align}
%\tau(E_a) \leq {s\over \min_{F\in \widehat{\mathcal{F}}}\widehat{C}_F/|F\cap E_a|} =: \overline{\tau}(E_a). \label{eq:tau_bound}
%\end{align}

We then upper-bound $K(E_a)$ by upper-bounding the optimal value $\rho^*$ of \eqref{eq:min rho wo cost}. Consider any predetermined link weight assignment $\bm{\alpha}^{(0)}$, and let $\bm{\alpha}^{(0)}(E_a)$ be the corresponding feasible solution to \eqref{eq:min rho wo cost} (i.e., $(\alpha^{(0)}(E_a))_{ij} = \alpha^{(0)}_{ij}$ if $(i,j)\in E_a$ and $0$ otherwise). 
%To this end, consider a specific solution to \eqref{eq:min rho wo cost}, where
%\begin{align}\label{eq:special alpha}
%\alpha_{ij}=\left\{
%\begin{array}{ll}
%{1\over \max(|N_{E_a}(i)|, |N_{E_a}(j)|)} & \mbox{if }(i,j)\in E_a,\\
%0 & \mbox{o.w.},
%\end{array}\right.
%\end{align}
%for $N_{E_a}(i)$ (the set of activated neighbors of agent $i$) defined as in \eqref{eq:demands H}. 
Let $\bm{L}(E_a):= \bm{B}\diag(\bm{\alpha}^{(0)}(E_a))\bm{B}^\top$ denote the Laplacian matrix for the activated graph. Under this solution, the objective value of \eqref{eq:min rho wo cost} is 
\begin{align}
\overline{\rho} &:= \|\bm{I}-\bm{L}(E_a)-\bm{J} \| \label{eq:bound rho - 1} \\
&= \max(1-\lambda_2(\bm{L}(E_a)),\: \lambda_m(\bm{L}(E_a))-1 ), \label{eq:bound rho - 2}
\end{align}
where \eqref{eq:bound rho - 1} is from the proof of Corollary~\ref{cor:optimal link weights}, and \eqref{eq:bound rho - 2} is by \cite[Lemma~IV.2]{Chiu23JSAC} (where $\lambda_i(\bm{L}(E_a))$ denotes the $i$-th smallest eigenvalue of $\bm{L}(E_a)$). 
Since $K(1-\mathop{\rho}^2, 1)$ is an increasing function of $\rho$ and $\rho^*\leq \overline{\rho}$, we have
\begin{align}
K(E_a) := K\left(1-{\rho^*}^2, 1\right) \leq K\left(1-\overline{\rho}^2, 1\right) =: \overline{K}(E_a). \label{eq:K_bound}
\end{align}

\subsubsection{Bi-level Decomposition}

Relaxing \eqref{eq:total time} into its upper bound $\overline{\tau}(E_a) \cdot \overline{K}(E_a)$ provides an objective function that can be easily evaluated for any candidate $E_a$. However, we still face the exponentially large solution space of $E_a\subseteq \widetilde{E}$. 
To address this complexity challenge, we decompose the relaxed optimization into a bi-level optimization as follows. 

\begin{lemma}\label{lem:equivalence to bilevel}
Let $\beta$ be the maximum time per iteration. Then \looseness=-1
\begin{align}
\min_{E_a\subseteq \widetilde{E}} \overline{\tau}(E_a) \cdot \overline{K}(E_a) = \min_{\beta\geq 0} \beta \cdot \left(\min_{\overline{\tau}(E_a)\leq \beta} \overline{K}(E_a) \right), \label{eq:equivalance to bilevel}
\end{align}
and the optimal solution $E_a^*$ to the RHS of \eqref{eq:equivalance to bilevel}  is also optimal for the LHS of \eqref{eq:equivalance to bilevel}. 
\end{lemma}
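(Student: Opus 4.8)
The plan is to prove the identity by establishing two inequalities, $\leq$ and $\geq$, between the LHS and RHS of \eqref{eq:equivalance to bilevel}, and then argue that the optimizer of the RHS carries over to the LHS. The key observation driving the whole argument is that $\overline{\tau}(E_a)$ plays a dual role: it appears as a factor in the objective on the LHS, but on the RHS it has been pulled out into the outer variable $\beta$ (interpreted as a budget/threshold on the per-iteration time) and simultaneously constrains the inner minimization via $\overline{\tau}(E_a)\leq \beta$. The mechanism is the standard epigraph-style reformulation, where replacing a product $\overline{\tau}(E_a)\cdot\overline{K}(E_a)$ by $\beta\cdot\overline{K}(E_a)$ subject to $\overline{\tau}(E_a)\leq\beta$ is valid because $\overline{K}(E_a)$ does not depend on $\beta$.

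\textbf{First I would} prove the RHS $\leq$ LHS direction. Let $E_a^{\flat}$ be any minimizer of the LHS, and set $\beta_0 := \overline{\tau}(E_a^{\flat})$. Then $E_a^{\flat}$ is feasible for the inner problem at $\beta=\beta_0$ (since $\overline{\tau}(E_a^{\flat})=\beta_0\leq\beta_0$), so $\min_{\overline{\tau}(E_a)\leq\beta_0}\overline{K}(E_a)\leq\overline{K}(E_a^{\flat})$. Multiplying by $\beta_0\geq 0$ and using that the outer $\min_{\beta\geq 0}$ only decreases the value further, I get
\begin{align}
\min_{\beta\geq 0}\beta\cdot\left(\min_{\overline{\tau}(E_a)\leq\beta}\overline{K}(E_a)\right) \leq \beta_0\cdot\overline{K}(E_a^{\flat}) = \overline{\tau}(E_a^{\flat})\cdot\overline{K}(E_a^{\flat}),
\end{align}
which is exactly the LHS optimal value.

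\textbf{Next I would} prove the LHS $\leq$ RHS direction, which is where the bound $\overline{\tau}(E_a)\leq\beta$ gets used. Let $\beta^\star$ achieve the outer minimum on the RHS and let $E_a^\star$ achieve the corresponding inner minimum, so $\overline{\tau}(E_a^\star)\leq\beta^\star$. Since $\overline{K}(E_a^\star)\geq 0$, monotonicity gives $\overline{\tau}(E_a^\star)\cdot\overline{K}(E_a^\star)\leq\beta^\star\cdot\overline{K}(E_a^\star)$, and the left side is at least the LHS optimal value because $E_a^\star$ is a particular feasible choice. Chaining these, the LHS optimum is $\leq\beta^\star\cdot\overline{K}(E_a^\star)$, which is the RHS value. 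Combining both directions yields equality, and the same $E_a^\star$ attains the LHS optimum (its LHS objective is sandwiched between the two equal optimal values), establishing the claim about the optimal solution.

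\textbf{The main subtlety} — rather than a deep obstacle — is verifying that the inner minimization is actually attained and that the substitution $\beta_0=\overline{\tau}(E_a^{\flat})$ does not inadvertently shrink the feasible region: one must confirm that the feasible set $\{E_a:\overline{\tau}(E_a)\leq\beta\}$ is nonempty at the relevant $\beta$ and that the minima exist. Since $E_a$ ranges over the finite power set of $\widetilde{E}$, both minima are over finite sets and are therefore attained, so no compactness argument is needed; I only need $\overline{K}(E_a)\geq 0$ and $\beta\geq 0$ (both clear, as $\overline{K}$ is a number of iterations and $\beta$ is a time) to justify the monotonicity step in the LHS $\leq$ RHS direction. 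I would state these finiteness and nonnegativity facts explicitly at the start so the two inequalities go through cleanly.
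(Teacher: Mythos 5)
Your proof is correct and follows essentially the same two-inequality route as the paper: the RHS $\leq$ LHS direction is identical (substituting $\beta_0=\overline{\tau}(E_a^{\flat})$ for the LHS minimizer), and your final sandwich argument showing $E_a^\star$ is also optimal for the LHS matches the paper's concluding step. The only difference is in the LHS $\leq$ RHS direction, where the paper argues by contradiction that the optimal outer variable must satisfy $\beta^*=\overline{\tau}(E_a^*)$ exactly, whereas you use only $\overline{\tau}(E_a^\star)\leq\beta^\star$ together with $\overline{K}(E_a^\star)\geq 0$ --- a slightly more direct step that also sidesteps the degenerate case $\overline{K}(E_a^\star)=0$, where the paper's strict-decrease argument would technically fail.
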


The bi-level decomposition in \eqref{eq:equivalance to bilevel} allows us to focus on the lower-level optimization 
\begin{align}
\min_{\overline{\tau}(E_a)\leq \beta} \overline{K}(E_a), \label{eq:lower-level optimization}
\end{align}
as the upper-level optimization only has a scalar variable $\beta$ that can be optimized numerically once we have an efficient solution to \eqref{eq:lower-level optimization}. \looseness=-1

\subsubsection{Algorithms}

To solve \eqref{eq:lower-level optimization}, we encode $E_a$ by binary variables $\bm{y}:=(y_{ij})_{(i,j)\in \widetilde{E}}$, where $y_{ij}=1$ if $(i,j)\in E_a$ and $0$ otherwise. By \eqref{eq:constraint on E_a}, $\bm{y}$ is feasible for \eqref{eq:lower-level optimization} if and only if $\forall F\in \widehat{\mathcal{F}}$,
\begin{align}
%|F\cap E_a| \leq \lfloor {\widehat{C}_F \beta\over s} \rfloor,~~~\forall F\in \widehat{\mathcal{F}}. 
\hspace{-.75em}\sum_{(i,j)\in \widetilde{E}} y_{ij} \left({\kappa_i\over \beta-l_{i,j}}\mathbbm{1}_{(i,j)\in F} + {\kappa_j\over \beta-l_{j,i}}\mathbbm{1}_{(j,i)\in F} \right) \leq \widehat{C}_F, 
\label{eq:independence constraint}
\end{align}
where $\mathbbm{1}_{\cdot}$ denotes the indicator function. 

\textbf{Exact solution:} We can directly try to solve the lower-level optimization \eqref{eq:lower-level optimization} as a convex programming problem with integer variables. Due to the monotone relationship between  $\overline{K}(E_a)$ and $\overline{\rho}$, we can equivalently minimize $\overline{\rho}$ as defined in \eqref{eq:bound rho - 1} by solving
\begin{subequations}\label{eq:lower-level ICP}
    \begin{align}
\min_{\bm{y}} &\quad \rho \label{lower-ICP:obj} \\
\mbox{s.t. } &  - \rho\bm{I} \preceq \bm{I} - \sum_{(i,j)\in \widetilde{E}}y_{ij} \bm{L}_{ij} -\bm{J} \preceq  \rho\bm{I}, \label{lower-ICP:linear matrix inequality} \\
& \eqref{eq:independence constraint},~~\forall F\in \widehat{\mathcal{F}}, \label{lower-ICP:capacity} \\
& y_{ij}\in \{0,1\},~~\forall (i,j)\in \widetilde{E}, \label{lower-ICP:integer}
    \end{align}
\end{subequations}
where $\bm{L}_{ij}$ is the Laplacian matrix representation of link $(i,j)$ with weight $\alpha^{(0)}_{ij}$, i.e., entries $(i,j)$ and $(j,i)$ are $-\alpha^{(0)}_{ij}$ and entries $(i,i)$ and $(j,j)$ are $\alpha^{(0)}_{ij}$ (rest are zero), 
and \eqref{lower-ICP:linear matrix inequality} ensures that the auxiliary variable $\rho = \overline{\rho}$ as  in \eqref{eq:bound rho - 1} under the optimal solution. 

The optimization \eqref{eq:lower-level ICP} is an integer convex programming (ICP) problem, and thus in theory can be solved by ICP solvers such as \cite{Lubin17thesis}. In practice, however, ICP solvers can be  slow due to their super-polynomial complexity, and in contrast to the communication schedule optimization \eqref{eq:min-time} that only needs to be solved once, \eqref{eq:lower-level ICP} has to be solved many times to optimize the variable $\beta$ in the upper-level optimization. Thus, we need more efficient algorithms. \looseness=-1

\textbf{Efficient heuristics:} 
To improve the computational efficiency, we have explored two approaches: (i) developing heuristics for \eqref{eq:lower-level ICP}, and (ii) further simplifying the objective function.

\begin{algorithm}[tb]
\small
\SetKwInOut{Input}{input}\SetKwInOut{Output}{output}
\Input{Initial link weights $\valpha^{(0)}$, candidate links $\widetilde{E}$, threshold $\epsilon$. }
\Output{Set of activated links $E_a$.}
%\BlankLine
initialize $E_s \leftarrow \emptyset$ and $E_o \leftarrow \emptyset$\;
\While{True}
{
obtain $\vy^*$ by solving the SDP relaxation of \eqref{eq:lower-level ICP} with additional constraints that $y_{\tilde{e}} = 0, \forall \tilde{e}\in E_o$ and $y_{\tilde{e}} = 1, \forall \tilde{e}\in E_s$\; \label{SCA:3}
\If{\eqref{lower-ICP:capacity} is satisfied by $\vy$ such that $y_{\tilde{e}}=1$ iff $y^*_{\tilde{e}} \geq \epsilon$ \label{SCA:4}}
{Break\;}
\Else
{
%sort $\vy^*$ to $\tilde{\vy}^*$ so that $\tilde{y}_i \le \tilde{y}_j, \forall i < j$\;
Find $\tilde{e}_s = \arg\max\{ y^*_{\tilde{e}}: \tilde{e}\in \widetilde{E}\setminus(E_s\cup E_o), \eqref{lower-ICP:capacity} \text{ is satisfied by } $\vy$ \text{ corresponding to } E_s \cup \{\tilde{e}\}\}$\; \label{SCA:7}
$E_s \leftarrow E_s \cup \{\tilde{e}_s\}$\; \label{SCA:8}
Find $\tilde{e}_o = \arg\min\{ y^*_{\tilde{e}}: \tilde{e}\in \widetilde{E}\setminus(E_s\cup E_o)\}$\; \label{SCA:9}
$E_o \leftarrow E_o \cup \{\tilde{e}_o\}$\; \label{SCA:10}
}
}
return $E_a \leftarrow \{\tilde{e}\in \widetilde{E}:\: y^*_{\tilde{e}} \geq \epsilon\}$\; 
\caption{Topology Design via SCA}
\vspace{-.0em}
\label{Alg:SCA_rouding_min_rho}
\end{algorithm}
\normalsize

$\bullet$ \emph{Heuristics for \eqref{eq:lower-level ICP}:} Once we relax the integer constraint \eqref{lower-ICP:integer} into $y_{ij}\in [0,1]$, \eqref{eq:lower-level ICP} becomes an SDP that can be solved in polynomial time \cite{Jiang20FOCS}. We can thus round the fractional solution into a feasible solution. However, we observe that simple rounding schemes such as greedily activating links with the largest fractional $y$-values do not yield a good solution (see results for `Relaxation-$\rho$' in Section~\ref{sec:Performance Evaluation}). Thus, we propose an iterative rounding algorithm based on successive convex approximation (SCA) as in Algorithm~\ref{Alg:SCA_rouding_min_rho}. The algorithm gradually rounds the $y$-values for a subset of links $E_s$ to $1$ and those for another subset of links $E_o$ to $0$ to satisfy feasibility. In each iteration, it solves the SDP relaxation of \eqref{eq:lower-level ICP} with rounding constraints according to the previously computed $E_s$ and $E_o$ (line~\ref{SCA:3}), and then adds the link with the largest fractional $y$-value to $E_s$ (lines~\ref{SCA:7}--\ref{SCA:8}) and the link with the smallest fractional $y$-value to $E_o$ (lines~\ref{SCA:9}--\ref{SCA:10}). The iteration repeats until the integer solution rounded according to a given threshold $\epsilon$ is feasible for \eqref{eq:lower-level ICP} (line~\ref{SCA:4}). 

$\bullet$ \emph{Heuristics based on algebraic connectivity:} Another approach is to simplify the objective based on the following observation. 

\begin{lemma}\label{lem:reduction to algebraic connectivity}
Minimizing $\overline{\rho}$ in \eqref{eq:bound rho - 2} is equivalent to maximizing $\lambda_2(\bm{L}(E_a))$ (the second smallest eigenvalue) if  $\bm{\alpha}^{(0)}$ satisfies \looseness=0
\begin{align}
\max_{i\in V}\sum_{j: (i,j)\in \widetilde{E}}\alpha^{(0)}_{ij} + m\cdot \max_{(i,j)\in \widetilde{E}}|\alpha^{(0)}_{ij}|\leq 1. \label{eq:requirement on alpha^0}
\end{align}
\end{lemma}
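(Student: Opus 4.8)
The plan is to show that, under condition \eqref{eq:requirement on alpha^0}, the second argument of the maximum in \eqref{eq:bound rho - 2}, namely $\lambda_m(\bm{L}(E_a))-1$, is never the active (larger) term, so that $\overline{\rho} = 1-\lambda_2(\bm{L}(E_a))$ holds for \emph{every} candidate $E_a\subseteq \widetilde{E}$. Once this is established, minimizing $\overline{\rho}$ coincides with minimizing $1-\lambda_2(\bm{L}(E_a))$, which is the same as maximizing $\lambda_2(\bm{L}(E_a))$, giving the claimed equivalence. Concretely, it suffices to prove the uniform bound $\lambda_m(\bm{L}(E_a))\leq 1$ for all $E_a\subseteq \widetilde{E}$: then $\lambda_m(\bm{L}(E_a))-1\leq 0$, while $\lambda_2(\bm{L}(E_a))\leq \lambda_m(\bm{L}(E_a))\leq 1$ forces $1-\lambda_2(\bm{L}(E_a))\geq 0$, so the first term attains the maximum.

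To establish $\lambda_m(\bm{L}(E_a))\leq 1$, I would apply the Gershgorin circle theorem to the symmetric matrix $\bm{L}(E_a)=\bm{B}\diag(\bm{\alpha}^{(0)}(E_a))\bm{B}^\top$. Its diagonal entries are the weighted degrees $L_{ii}=\sum_{j:(i,j)\in E_a}\alpha^{(0)}_{ij}$, and its off-diagonal entries are $L_{ij}=-\alpha^{(0)}_{ij}$ for $(i,j)\in E_a$ (zero otherwise), so the absolute off-diagonal row sum (the Gershgorin radius at node $i$) is $\sum_{j:(i,j)\in E_a}|\alpha^{(0)}_{ij}|$. Hence every eigenvalue, and in particular $\lambda_m$, is at most
\begin{align}
\max_{i\in V}\Big(\textstyle\sum_{j:(i,j)\in E_a}\alpha^{(0)}_{ij} + \sum_{j:(i,j)\in E_a}|\alpha^{(0)}_{ij}|\Big). \nonumber
\end{align}
I would then bound the two sums separately: since $E_a\subseteq \widetilde{E}$, the first is at most $\max_{i\in V}\sum_{j:(i,j)\in\widetilde{E}}\alpha^{(0)}_{ij}$, and since each node has at most $m-1$ incident links the second is at most $(m-1)\max_{(i,j)\in\widetilde{E}}|\alpha^{(0)}_{ij}|\leq m\max_{(i,j)\in\widetilde{E}}|\alpha^{(0)}_{ij}|$. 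Their sum is exactly the left-hand side of \eqref{eq:requirement on alpha^0}, assumed $\leq 1$, which yields $\lambda_m(\bm{L}(E_a))\leq 1$ uniformly in $E_a$.

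Combining the two steps gives $\overline{\rho}=1-\lambda_2(\bm{L}(E_a))$ for every feasible $E_a$, and therefore minimizing $\overline{\rho}$ is equivalent to maximizing $\lambda_2(\bm{L}(E_a))$ (the same monotone reasoning also links this to minimizing $\overline{K}(E_a)=K(1-\overline{\rho}^2,1)$ as used in \eqref{eq:K_bound}).

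The step I expect to be most delicate is the diagonal bound $\sum_{j:(i,j)\in E_a}\alpha^{(0)}_{ij}\leq \max_{i}\sum_{j:(i,j)\in\widetilde{E}}\alpha^{(0)}_{ij}$. It is immediate when the predetermined weights $\alpha^{(0)}_{ij}$ are nonnegative, since restricting from $\widetilde{E}$ to $E_a$ only removes nonnegative terms; if negative weights were permitted, one must instead control the degree sum by its positive part and verify that the estimate still fits within the single factor $m\max|\alpha^{(0)}|$ in \eqref{eq:requirement on alpha^0} rather than a larger one. The other main point is that the bound must hold simultaneously over the exponentially many subsets $E_a$, which the Gershgorin argument conveniently handles in one stroke since it depends on $E_a$ only through the weighted degrees and radii that are each dominated by the $\widetilde{E}$-level quantities in \eqref{eq:requirement on alpha^0}.
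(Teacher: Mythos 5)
Your proposal is correct, and it reaches the same key intermediate claim as the paper --- that under \eqref{eq:requirement on alpha^0} one has $\lambda_m(\bm{L}(E_a))\leq 1$ uniformly over all $E_a\subseteq\widetilde{E}$, whence the maximum in \eqref{eq:bound rho - 2} is always attained by $1-\lambda_2(\bm{L}(E_a))$ and the two optimizations coincide --- but you prove that claim by a different technique. The paper uses the variational characterization $\lambda_m(\bm{L}(E_a))=\max\{\bm{v}^\top\bm{L}(E_a)\bm{v}:\|\bm{v}\|=1\}$, splits $\bm{L}(E_a)=\bm{D}(E_a)-\bm{A}(E_a)$, bounds the degree part by $\max_{i\in V}\sum_{j:(i,j)\in\widetilde{E}}\alpha^{(0)}_{ij}$, and bounds the adjacency part by $m\cdot\max_{(i,j)\in\widetilde{E}}|\alpha^{(0)}_{ij}|$ via Cauchy--Schwarz over index pairs; you instead invoke Gershgorin discs, which deliver $\lambda_m(\bm{L}(E_a))\leq\max_{i}\bigl(\sum_{j:(i,j)\in E_a}\alpha^{(0)}_{ij}+\sum_{j:(i,j)\in E_a}|\alpha^{(0)}_{ij}|\bigr)$ in one stroke. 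Your route is more elementary and in fact slightly sharper: the Gershgorin radius is at most $(m-1)\max_{(i,j)\in\widetilde{E}}|\alpha^{(0)}_{ij}|$, compared with the paper's $m\max_{(i,j)\in\widetilde{E}}|\alpha^{(0)}_{ij}|$ from Cauchy--Schwarz, and for nonnegative weights it recovers the classical bound $\lambda_{\max}(\bm{L})\leq 2d_{\max}$, showing the sufficient condition \eqref{eq:requirement on alpha^0} has some slack. The caveat you flag --- that the monotonicity step $\sum_{j:(i,j)\in E_a}\alpha^{(0)}_{ij}\leq\sum_{j:(i,j)\in\widetilde{E}}\alpha^{(0)}_{ij}$ needs nonnegative $\alpha^{(0)}_{ij}$ --- applies equally to the paper's proof (its inequality \eqref{eq:proof - algebraic - 1} makes the same implicit assumption, consistent with the remark suggesting $\alpha^{(0)}_{ij}\equiv 1/(2m-1)$), so it is not a gap relative to the paper.
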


\emph{Remark:} We can satisfy \eqref{eq:requirement on alpha^0} by making $\alpha^{(0)}_{ij}$'s sufficiently small, e.g., $\alpha^{(0)}_{ij}\equiv 1/(2m-1)$, $\forall (i,j)\in \widetilde{E}$. 

Under condition \eqref{eq:requirement on alpha^0}, we can convert the minimization of $\overline{\rho}$ into a maximization of $\lambda_2(\bm{L}(E_a))$, known as the \emph{algebraic connectivity}:
\begin{subequations}\label{eq:algebraic connectivity maximization}
\begin{align}
\max_{\bm{y}} &\quad \lambda_2(\sum_{(i,j)\in \widetilde{E}}y_{ij} \bm{L}_{ij}) \label{algebraic:obj} \\
\mbox{s.t. } & \eqref{eq:independence constraint},~~\forall F\in \widehat{\mathcal{F}}, \label{algebraic:capacity} \\
& y_{ij}\in \{0,1\},~~\forall (i,j)\in \widetilde{E}, \label{algebraic:integer}
\end{align}
\end{subequations}
which selects links to maximize the {algebraic connectivity} under the linear constraints \eqref{algebraic:capacity}. 
A similar problem of maximizing the algebraic connectivity of \emph{unweighted} graphs under \emph{cardinality constraint} (i.e., $\sum_{(i,j)\in \widetilde{E}}y_{ij}\leq k$) has been studied with some efficient heuristics \cite{Ghosh06CDC,He19arXiv}. Although our problem \eqref{eq:algebraic connectivity maximization} addresses a weighted graph and more general linear constraints, the existing heuristics can still be adapted for our problem. 

Specifically, as $\lambda_2(\sum_{(i,j)\in \widetilde{E}} y_{ij}\bm{L}_{ij})$ is a concave function of $\bm{y}$~\cite{Ghosh06CDC}, relaxing the integer constraint \eqref{algebraic:integer} into $y_{ij}\in [0,1]$ turns \eqref{eq:algebraic connectivity maximization} into a convex optimization that can be solved in polynomial time, based on which we can extract an integer solution via rounding. 
We can also adapt the greedy perturbation heuristic in \cite{Ghosh06CDC} as follows. Let $\bm{v}(E_a)$ denote the \emph{Fiedler vector} of $\bm{L}(E_a)$ (i.e., the unit-norm eigenvector corresponding to $\lambda_2(\bm{L}(E_a))$). It is easy to extend \cite[(10)]{Ghosh06CDC} into 
\begin{align}
\hspace{-.75em}\lambda_2(\bm{L}(E_a\cup \{(i,j)\})) - \lambda_2(\bm{L}(E_a)) \leq \alpha^{(0)}_{ij}\left(v(E_a)_i - v(E_a)_j\right)^2.
\end{align}
Based on this bound, we can apply the greedy heuristic to \eqref{eq:algebraic connectivity maximization} by repeatedly: (i) computing the Fiedler vector $\bm{v}(E_a)$ based on the current $E_a$, and (ii) augmenting $E_a$ with the link $(i,j)\in \widetilde{E}\setminus E_a$ that maximizes $\alpha^{(0)}_{ij}\left(v(E_a)_i - v(E_a)_j\right)^2$ subject to \eqref{eq:independence constraint}. 
%Other heuristics for algebraic connectivity maximization can also be applied. 

\emph{Remark:} %Despite the rich set of existing solutions, the objective of algebraic connectivity is further away from our original design objective. Moreover, 
Even if we can enforce the condition in Lemma~\ref{lem:reduction to algebraic connectivity} by suitably setting the initial link weights $\valpha^{(0)}$, the final link weights are determined by the optimization \eqref{eq:min rho wo cost} and thus cannot guarantee the equivalence between $\overline{\rho}$ minimization and algebraic connectivity maximization. As a result, our evaluation shows that $E_a$'s designed by the above heuristics based on the algebraic connectivity are less effective than those designed based on $\overline{\rho}$ (see Section~\ref{sec:Performance Evaluation}). However, connecting our problem with algebraic connectivity maximization opens the possibility of leveraging a rich set of existing results, for which the above is just a starting point. In this regard, our contribution is to formally establish this connection and the corresponding condition.\looseness=-1 % \eqref{eq:requirement on alpha^0}.   

\subsection{Overall Solution}

\begin{figure}[t!]%{.75\linewidth}
\vspace{-0em}
\centerline{\mbox{\includegraphics[width=1\linewidth]{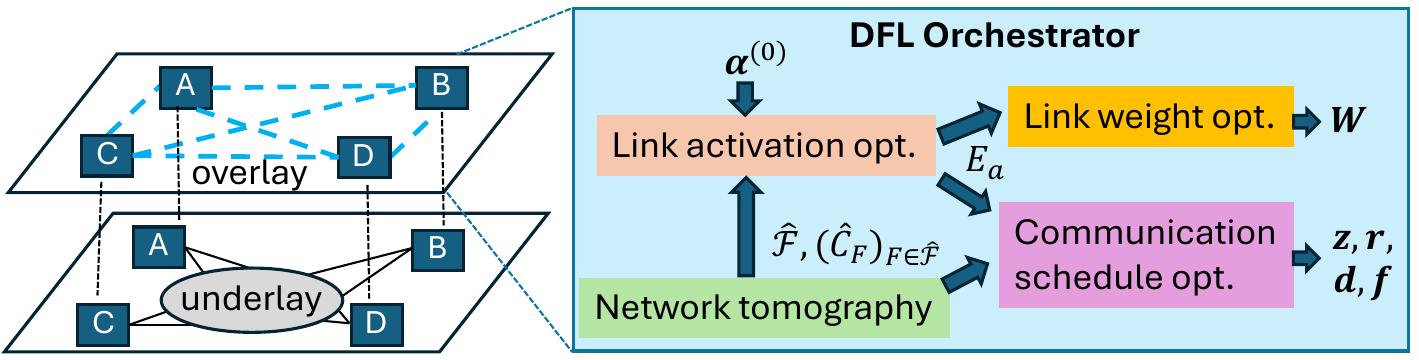}}}
\vspace{-1em}
\caption{\small \rev{Workflow of overall solution.} }
\label{fig:overall}
\vspace{-.25em}
\end{figure}

Fig.~\ref{fig:overall} illustrates the overall proposed solution \rev{as deployed on a centralized orchestrator when initializing DFL tasks}, which starts by inferring the necessary information about the underlay using network tomography \cite{Huang23MobiHoc}, and then selects the links to activate based on predetermined weights $\bm{\alpha}^{(0)}$ as in Section~\ref{subsec:Link Activation Optimization}, based on which the link weights are optimized as in Section~\ref{subsec:Conditional Link Weight Optimization} and the communication schedule is optimized as in Section~\ref{subsec:Communication Schedule Optimization}. 
\rev{Our solution is suitable for the centralized deployment as it only uses predetermined information.}\looseness=-1 % We leave distribution implementations to future work.}

The performance of this solution is guaranteed as follows.

\begin{theorem}\label{thm:overall solution}
Under the assumption of $\widehat{\mathcal{F}}\supseteq \mathcal{F}$ and $\widehat{C}_F \leq C_F$ ($\forall F\in \mathcal{F}$), if the proposed solution activates a set of links $E_a^*$, then D-PSGD under the corresponding design is guaranteed to achieve $\epsilon_0$-convergence as defined in Theorem~\ref{thm:new convergence bound} within time $\overline{\tau}(E_a^*) \cdot \overline{K}(E_a^*)$ for $\overline{\tau}(\cdot)$ and $\overline{K}(\cdot)$ defined in Section~\ref{subsubsec:Relaxed Objective Function}. 
\end{theorem}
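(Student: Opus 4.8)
The plan is to split the claimed product bound into its two factors — the per-iteration time $\overline{\tau}(E_a^*)$ and the iteration count $\overline{K}(E_a^*)$ — verify that each separately dominates the corresponding quantity actually realized by the design, and then multiply. Since the proposed solution is deterministic (the same mixing matrix $\bm{W}^*$ and the same communication schedule are reused in every iteration), the true total training time factorizes as (true per-iteration time) $\times$ (number of iterations needed for $\epsilon_0$-convergence), so it suffices to bound each factor.

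For the iteration factor I would invoke Corollary~\ref{cor:optimal link weights}. The deterministic matrix $\bm{W}^* = \bm{I}-\bm{B}\diag(\bm{\alpha}^*)\bm{B}^\top$ obtained by solving \eqref{eq:min rho wo cost} over $E_a^*$ is a degenerate i.i.d.\ sequence with $\E[\|\bm{W}^*-\bm{J}\|^2] = {\rho^*}^2$, so the corollary guarantees $\epsilon_0$-convergence within $K(1-{\rho^*}^2,1)=K(E_a^*)$ iterations. Because the predetermined assignment $\bm{\alpha}^{(0)}(E_a^*)$ is feasible for \eqref{eq:min rho wo cost} with objective value $\overline{\rho}$, optimality gives $\rho^*\leq\overline{\rho}$, and the monotonicity of $K(1-\rho^2,1)$ in $\rho$ already used in \eqref{eq:K_bound} yields $K(E_a^*)\leq\overline{K}(E_a^*)$. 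This disposes of the iteration factor.

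For the time factor the crux is \emph{realizability}: I must show that the schedule computed under the inferred per-category constraint \eqref{eq:capacity constraint - category inferred} can actually be run in the true, unobserved underlay, and that its value is at most $\overline{\tau}(E_a^*)$. Here the two hypotheses enter. For any true category $F\in\mathcal{F}\subseteq\widehat{\mathcal{F}}$, the computed flows satisfy $\sum_{(i,j)\in F}\sum_{h\in H}f^h_{ij}\leq\widehat{C}_F\leq C_F$, so the inferred constraints imply the true per-category constraint \eqref{eq:capacity constraint - category}, which is equivalent to the true per-link constraint \eqref{min-time:link capacity} (the equivalence established above). Consequently the scheduled rates never exceed a genuine underlay capacity, every flow sustains its rate, and by \eqref{min-time:time} the round completes within $\tau(E_a^*)$ in the real network. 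The bound $\tau(E_a^*)\leq\overline{\tau}(E_a^*)$ then follows because the direct-routing point used to define $\overline{\tau}(E_a^*)$ through \eqref{eq:constraint on E_a} is a feasible (generally suboptimal) solution of the inferred \eqref{eq:min-time}.

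Combining the two factors gives a true total time of $\tau(E_a^*)\cdot K(E_a^*)\leq\overline{\tau}(E_a^*)\cdot\overline{K}(E_a^*)$, as claimed. I expect the realizability argument to be the main obstacle: it requires carefully tracking an \emph{inclusion of feasible regions}, so that $\widehat{\mathcal{F}}\supseteq\mathcal{F}$ (more constraints) together with $\widehat{C}_F\leq C_F$ (tighter constraints) guarantees that a schedule feasible under the inferred constraints stays feasible under the true underlay, and invoking the per-category/per-link equivalence in the correct direction — the remaining steps are direct applications of results already proved in the excerpt.
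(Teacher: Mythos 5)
Your proposal is correct and follows essentially the same route as the paper's proof: feasibility transfer from the inferred per-category constraints to the true underlay constraints (via $\widehat{\mathcal{F}}\supseteq\mathcal{F}$, $\widehat{C}_F\leq C_F$, and the per-category/per-link equivalence) handles the time factor, while Corollary~\ref{cor:optimal link weights} together with \eqref{eq:K_bound} handles the iteration factor, and the two bounds are multiplied. You merely spell out some steps the paper leaves implicit (the degenerate i.i.d.\ view of $\bm{W}^*$, and $\tau(E_a^*)\leq\overline{\tau}(E_a^*)$ via the direct-routing feasible point), which is fine.
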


\emph{Remark:} Theorem~\ref{thm:overall solution} upper-bounds the total training time under our design if network tomography detects all the nonempty categories and does not overestimate the category capacities. According to \cite{Huang23MobiHoc}, the second assumption holds approximately as the estimation of category capacities is highly accurate, but the first assumption may not hold due to misses in detecting nonempty categories. However, in underlays following symmetric tree-based routing as commonly encountered in edge networks, a new algorithm in \cite{Huang24TONsub} can detect nearly all the nonempty categories in networks of moderate sizes. 
We will empirically evaluate the impact of inference errors on the final performance of DFL in Section~\ref{sec:Performance Evaluation}. 
%, which leads to missing capacity constraints and underestimation of per-iteration time. Analytical characterization of such underestimation error is intractable, for which we will resort to empirical evaluations. 

\if0 % alternative formulation to jointly optimize y and alpha:

\emph{Remark:} The argument in the proof of Lemma~\ref{lem:equivalence to bilevel} also holds for other relaxations. Specifically, we have %$\overline{\tau}(E_a) \cdot {K}(E_a)$, i.e., 
\begin{align}
\min_{E_a\subseteq \widetilde{E}} \overline{\tau}(E_a) \cdot {K}(E_a) = \min_{\beta\geq 0} \beta \cdot \left(\min_{\overline{\tau}(E_a)\leq \beta} {K}(E_a) \right). \label{eq:equivalance to bilevel, K}
\end{align}
This implies that we can directly minimize $K(E_a)$ in the lower-level optimization. As minimizing $K(E_a)$ is equivalent to minimizing the optimal value of \eqref{eq:min rho wo cost} by varying $E_a$, this leads to a joint optimization of both the link activation (denoted by $\bm{y}$) and the link weights (denoted by $\bm{\alpha}$) as follows: 
\begin{subequations}\label{eq:lower-level MICP}
\begin{align}
\min_{\bm{y},\bm{\alpha}} &\quad \rho \label{lower-level:obj} \\
\mbox{s.t. } 
& - \rho\bm{I} \preceq \bm{I} - \bm{B} \diag(\bm{\alpha}) \bm{B}^\top - \bm{J} \preceq  \rho\bm{I}, \label{lower-level:linear matrix inequality} \\
& \eqref{eq:independence constraint},~~\forall F\in \widehat{\mathcal{F}}, \label{lower-level:capacity} \\
& 0\leq \alpha_{ij}\leq y_{ij},~~\forall (i,j)\in \widetilde{E}, \label{lower-level:alpha bound} \\
& y_{ij}\in \{0,1\}, ~~\forall (i,j)\in \widetilde{E}, \label{lower-level:integer}
\end{align}
\end{subequations}
where \eqref{lower-level:obj} and \eqref{lower-level:linear matrix inequality} are inherited from \eqref{eq:min rho wo cost}, and \eqref{lower-level:alpha bound} enforces \eqref{wo cost:activation} while additionally requiring $\alpha_{ij}\in [0,1]$. We note that this additional requirement is necessary for the mixing matrix to be doubly stochastic, as commonly assumed in the literature \cite{Lian17NIPS,Koloskova20ICML}. 
However, \eqref{eq:lower-level MICP} is even harder to solve than \eqref{eq:lower-level ICP} due to its enlarged solution space. We have observed that the proposed heuristics, when applied to \eqref{eq:lower-level MICP}, actually yield worse performance for DFL due to a larger optimality gap.  

\fi

\section{Performance Evaluation}\label{sec:Performance Evaluation}

We evaluate the proposed algorithms against benchmarks through realistic data-driven simulations in the context of bandwidth-limited wireless edge networks. \looseness=0

\subsection{Simulation Setup}

\subsubsection{Dataset and ML Model}

We train a ResNet-50 model with 23,616,394 parameters and a model size of 90.09 MB for image classification on the CIFAR-10 dataset, which consists of 60,000 color images divided into 10 classes. We use 50,000 images for training and the remaining 10,000 images for testing. The dataset undergoes standard preprocessing, including normalization and one-hot encoding of the labels.
We set the learning rate to $0.02$ and the mini-batch size to 64 for each agent. These settings are sufficient for D-PSGD to achieve convergence under all evaluated designs. As a sanity check, we also train a 4-layer CNN model based on \cite{McMahan17AISTATS}, which has 582,026 parameters and a model size of 2.22 MB, for digit recognition on the MNIST dataset, which comprises 60,000 training images and 10,000 testing images. 
\rev{In both cases, we evenly divide the training data among all the agents after a random shuffle.}

\subsubsection{Network Topology}

We simulate the underlay based on the topologies and link attributes of real wireless edge networks. We consider two important types of networks: (i) WiFi-based wireless mesh networks represented by the Roofnet~\cite{Roofnet}, which has 33 nodes, 187 links, and a data rate of 1 Mbps, and (ii) millimeter-wave-based Integrated Access and Backhaul (IAB) networks used to extend the coverage of 5G networks~\cite{3GPPTR38874}, represented by a hexagon topology with 19 nodes, 56 links, and a data rate of $0.4$ Gbps \cite{IAB20Ericsson}. In each topology, we select $10$ low-degree nodes as learning agents (i.e., overlay nodes). We assume the base topology to be a clique among the agents (i.e., any two agents are allowed to communicate), and use the shortest paths (based on hop count) between the agents as the underlay routing paths. 
\if\thisismainpaper1
See \cite{Huang24:report} for the simulated topologies. 
\else 
The simulated topologies are illustrated in Fig.~\ref{fig:underlay_topo}. 

\begin{figure}[t!]%{.75\linewidth}
\vspace{-0em}
\centerline{\mbox{\includegraphics[width=1\linewidth]{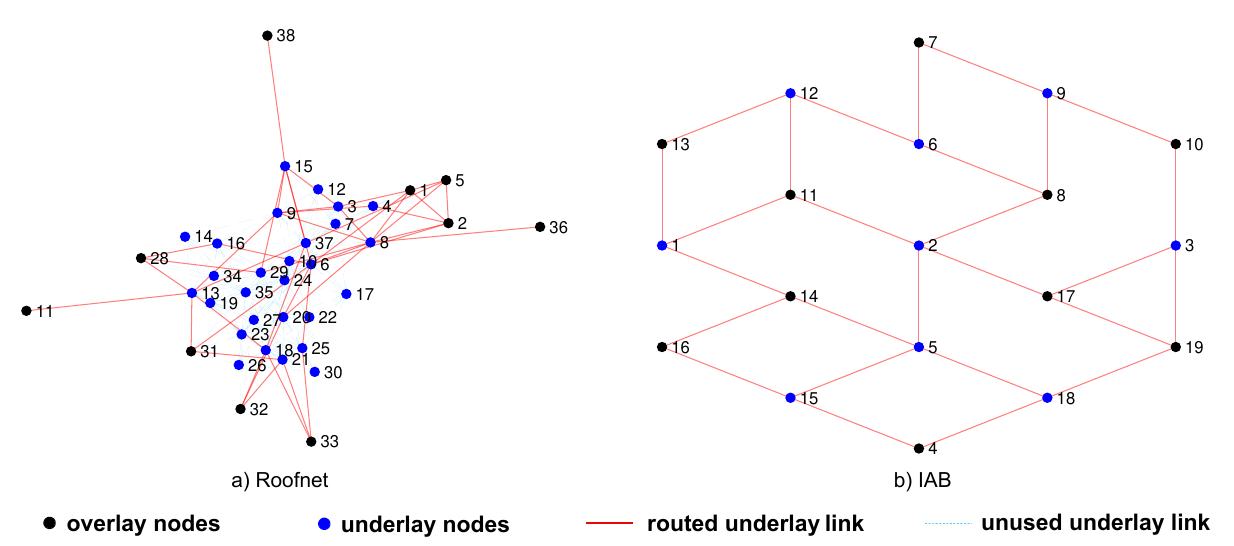}}}
\vspace{-1.25em}
\caption{\small Underlay network topologies. }
\label{fig:underlay_topo}
\vspace{-.05em}
\end{figure}

\fi

%\subsubsection{Hyperparameters}

\subsubsection{Benchmarks}

We compare the proposed  Algorithm~\ref{Alg:SCA_rouding_min_rho} (`SCA') against the following benchmarks: \begin{itemize}
    \item the baseline of activating all the (overlay) links (`Clique');
    \item the ring topology (`Ring') commonly adopted by industry; %\cite{Wang22Electronics};
    \item the minimum spanning tree computed by Prim's algorithm (`Prim'),  proposed by \cite{marfoq2020throughput} as the state of the art for overlay-based DFL; 
    \item the simplistic heuristic for \eqref{eq:lower-level ICP} based on SDP relaxation plus greedy rounding (`Relaxation-$\rho$');
    \item the heuristics for \eqref{eq:algebraic connectivity maximization} based on convex relaxation plus rounding (`Relaxation-$\lambda$') or greedy perturbation \cite{Ghosh06CDC} (`Greedy').
\end{itemize}
We will first evaluate a basic setting where we provide accurate information about the underlay to all the algorithms, use \eqref{eq:min rho wo cost} to optimize the link weights under each topology design, and let all the communications occur directly over the underlay routing paths (i.e., without overlay routing). We will separately evaluate the impact of overlay routing, inference errors, \rev{and weight design}. \looseness=0

\if\thisismainpaper1
\subsection{Simulation Results}
\else
\subsection{Simulation Results on Roofnet}
\fi 

\if\thisismainpaper1
Due to space limitation, we will only present the results based on CIFAR-10 and Roofnet, and defer the other results to \cite{Huang24:report}. 
\else
\fi

\subsubsection{Results without Overlay Routing}

\begin{figure}[t!]
\begin{minipage}{.495\linewidth}
\centerline{
\includegraphics[width=1\linewidth]{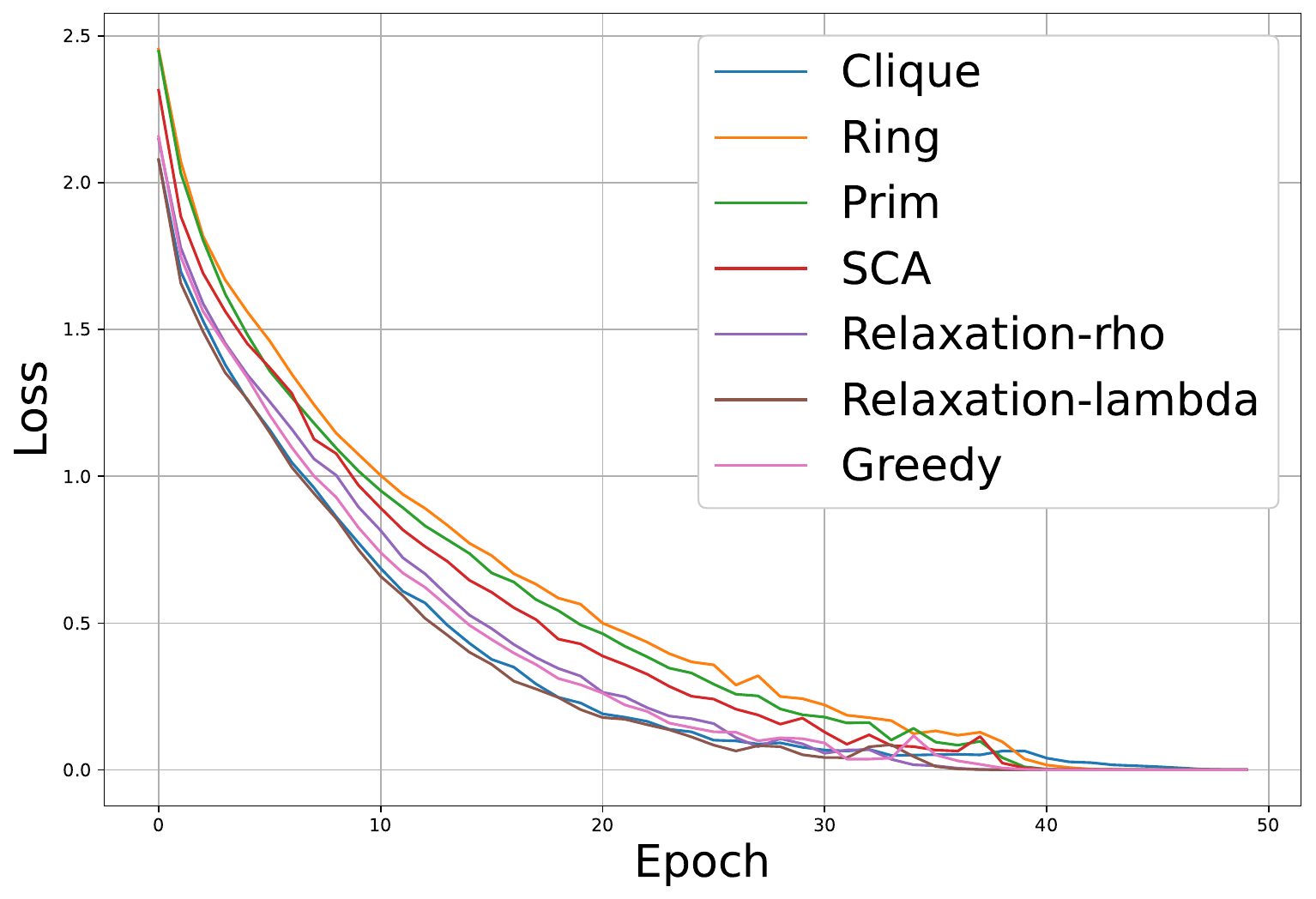}}
\vspace{-.1em}
%\centerline{\scriptsize (a) AboveNet: profit}
\end{minipage}
\begin{minipage}{.495\linewidth}
\centerline{
\includegraphics[width=1\linewidth]{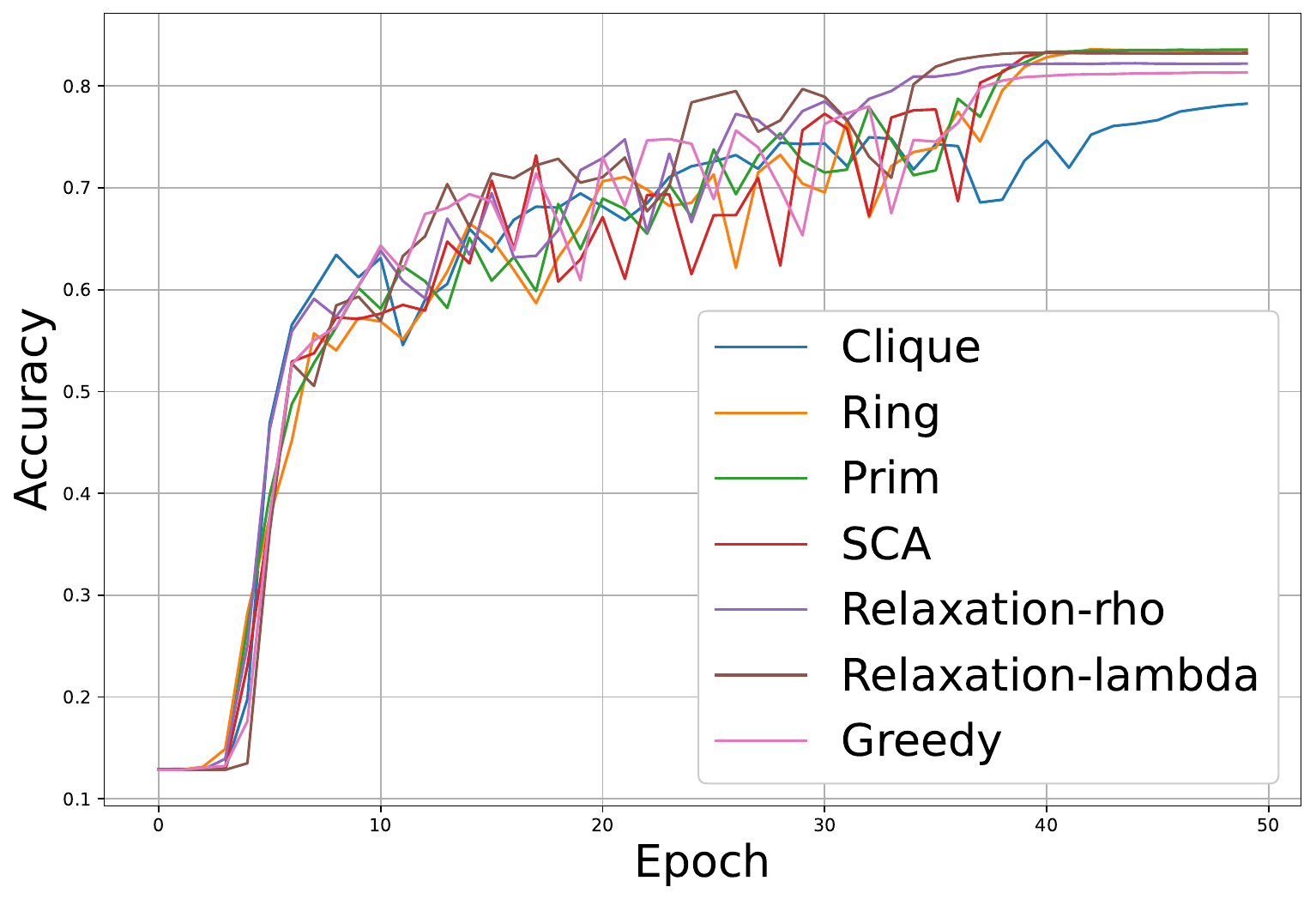}}
\vspace{-.1em}
%\centerline{\scriptsize (b) AboveNet: time}
\end{minipage}
\begin{minipage}{.495\linewidth}
\centerline{
\includegraphics[width=1\linewidth]{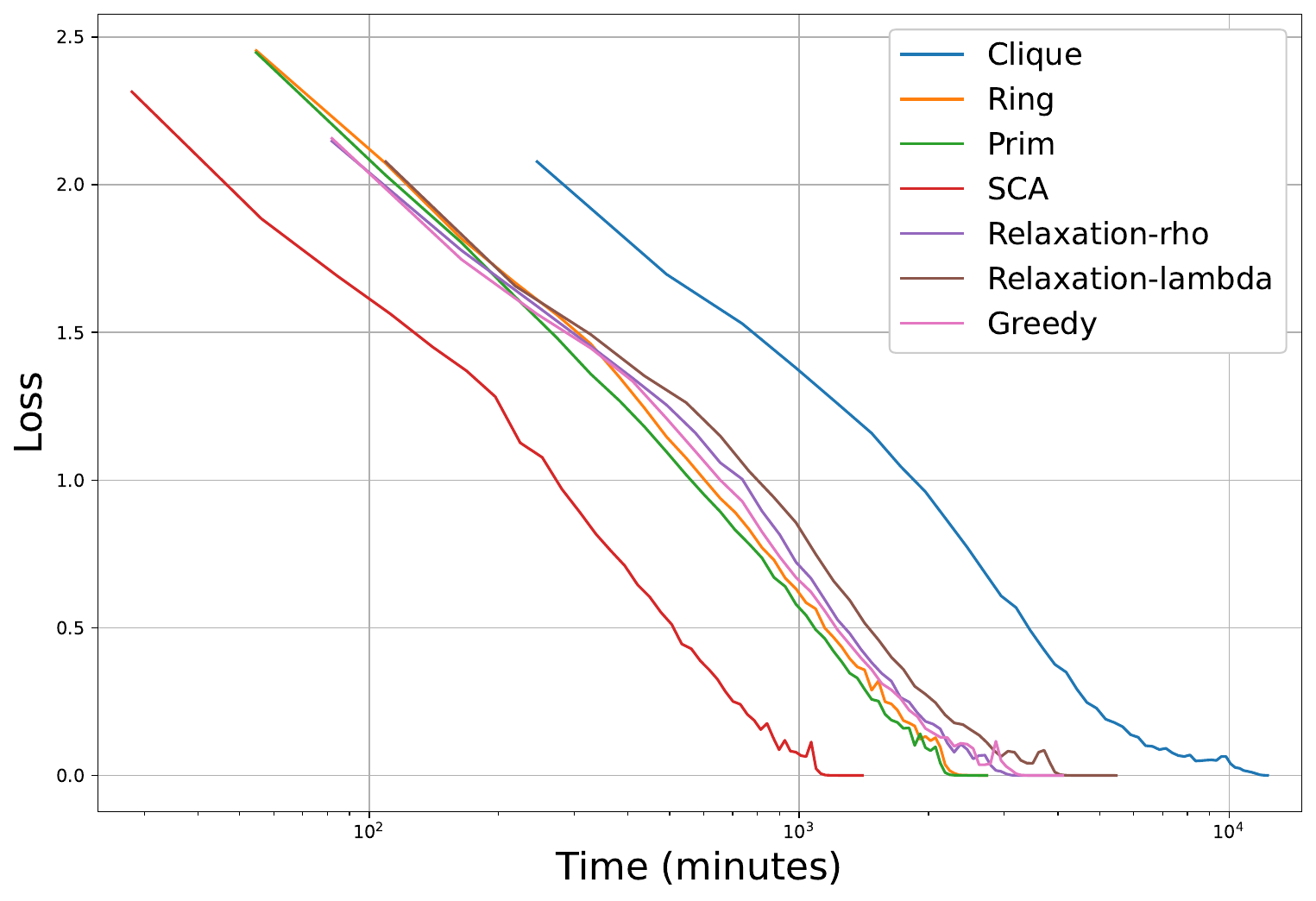}}
\vspace{-.1em}
%\centerline{\scriptsize (c) BellCanada: profit}
\end{minipage}
\begin{minipage}{.495\linewidth}
\centerline{
\includegraphics[width=1\linewidth]{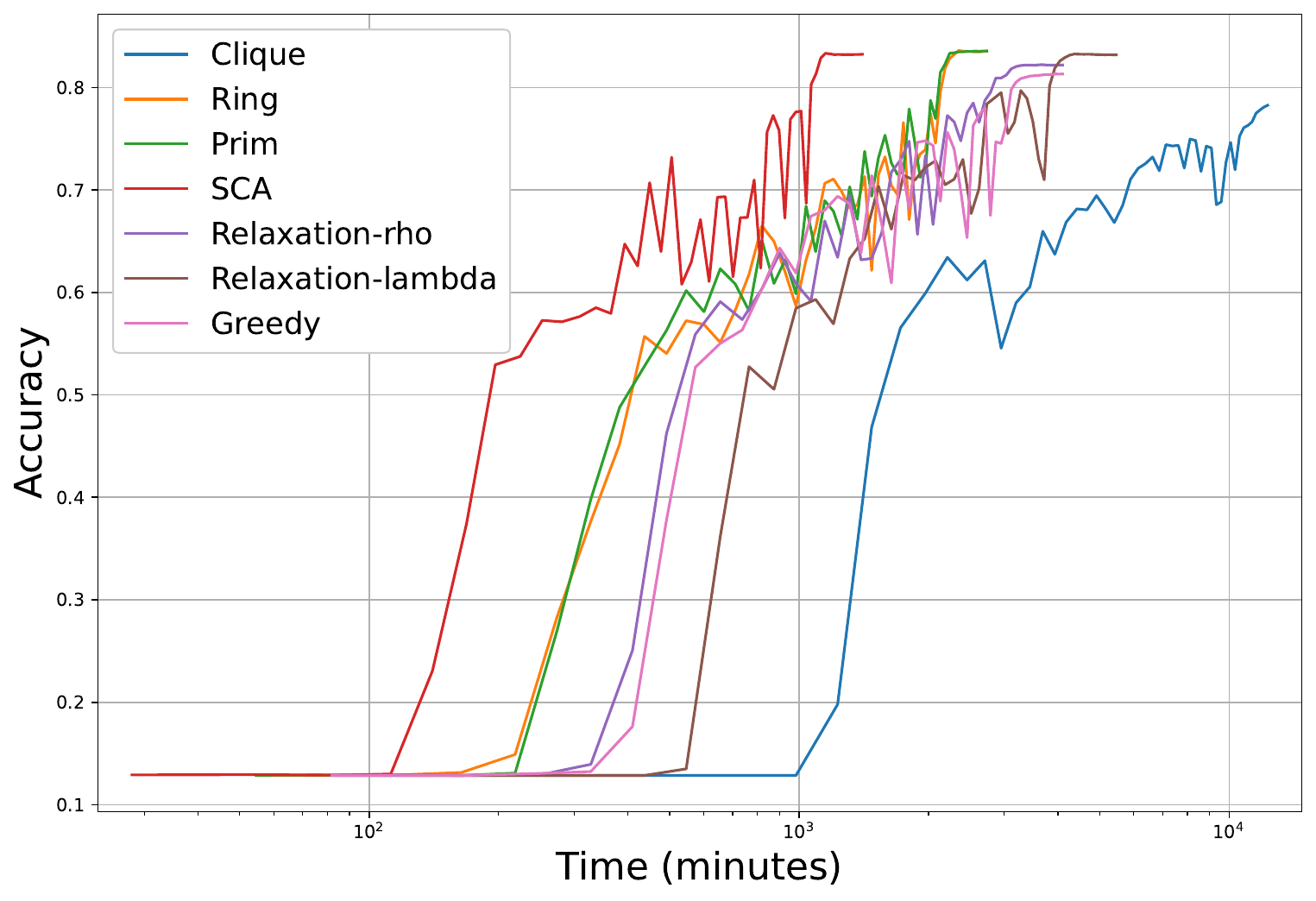}}
\vspace{-.1em}
%\centerline{\scriptsize (d) BellCanada: time}
\end{minipage}
% \vspace{-.1em}
% \begin{minipage}{.495\linewidth}
% \centerline{
% \includegraphics[width=1\linewidth]{figures/Loss_over_time_Roofnet_routing.eps}}
% \vspace{-.1em}
% %\centerline{\scriptsize (d) BellCanada: time}
% \end{minipage}
% \vspace{-.1em}
% \begin{minipage}{.495\linewidth}
% \centerline{
% \includegraphics[width=1\linewidth]{figures/Accuracy_over_time_Roofnet_routing.eps}}
% \vspace{-.1em}
% %\centerline{\scriptsize (d) BellCanada: time}
% \end{minipage}
\vspace{-1.25em}
\caption{CIFAR-10 over Roofnet: without overlay routing. 
} \label{fig:roofnet_no_routing}
\vspace{-.5em}
\end{figure}

As the most difficult part of our problem is topology design (i.e., optimization of $E_a$), we first compare the topology design solutions without overlay routing. As shown in Fig.~\ref{fig:roofnet_no_routing}, (i) using sparse topologies rather than the clique can effectively reduce the training time without compromising the performance at convergence, (ii) different topology designs only slightly differ in terms of the convergence rate over epochs, but can differ significantly in terms of the convergence rate over the actual (wall-clock) time, and (iii) the proposed design by `SCA' notably outperforms the others in terms of training time, while achieving the same loss/accuracy at convergence. Note that the time axis is in log scale. A closer examination further shows that the existing topology designs (`Prim', `Ring') work relatively well in that they not only converge much faster than the baseline (`Clique') but also outperform the other heuristics based on the optimizations we formulate (`Relaxation-$\rho$', `Relaxation-$\lambda$', `Greedy'). Nevertheless, `SCA' is able to converge even faster by better approximating the optimal solution to \eqref{eq:lower-level ICP}. 
Note that although we have not optimized overlay routing, the proposed algorithm still benefits from the knowledge of how links are shared by routing paths within the underlay (via constraint \eqref{eq:independence constraint}), which allows it to better balance the convergence rate and the communication time per iteration, while the state-of-the-art design (`Prim') ignores such link sharing. This result highlights the importance of underlay-aware design for overlay-based DFL.

Meanwhile, we note that the simpler heuristics `Relaxation-$\rho$' and `Relaxation-$\lambda$' outperform `SCA' in terms of running time, as shown in Table~\ref{tab: run_time_algs}, and all the algorithms based on our optimizations are slower than `Prim'. This indicates further room for improvement for future work in terms of the tradeoff between the quality of design and the computational efficiency.

\begin{table}[]
\small
    \centering
    \begin{tabular}{c|c|c|c|c}
         &  SCA & Relaxation-$\rho$ & Relaxation-$\lambda$ & Greedy\\
         \hline
MNIST & 21.31  & 5.96 & 6.71 & 55.84  \\       
CIFAR-10 & 19.55  & 6.12 &  6.35 & 51.74
    \end{tabular}
    \caption{Running times of the proposed algorithms relative to `Prim' (as ratios) for Roofnet.
    }
    \label{tab: run_time_algs}
    \vspace{-1.5em}
\end{table}

\subsubsection{Results with Overlay Routing}

\begin{figure}[t!]
% \begin{minipage}{.495\linewidth}
% \centerline{
% \includegraphics[width=1\linewidth]{figures/Loss_Roofnet.eps}}
% \vspace{-.1em}
% %\centerline{\scriptsize (a) AboveNet: profit}
% \end{minipage}
% \begin{minipage}{.495\linewidth}
% \centerline{
% \includegraphics[width=1\linewidth]{figures/Accuracy_Roofnet.eps}}
% \vspace{-.1em}
% %\centerline{\scriptsize (b) AboveNet: time}
% \end{minipage}
% \begin{minipage}{.495\linewidth}
% \centerline{
% \includegraphics[width=1\linewidth]{figures/Loss_over_time_Roofnet.eps}}
% \vspace{-.1em}
% %\centerline{\scriptsize (c) BellCanada: profit}
% \end{minipage}
% \begin{minipage}{.495\linewidth}
% \centerline{
% \includegraphics[width=1\linewidth]{figures/Accuracy_over_time_Roofnet.eps}}
% \vspace{-.1em}
% %\centerline{\scriptsize (d) BellCanada: time}
% \end{minipage}
% \vspace{-.1em}
\begin{minipage}{.495\linewidth}
\centerline{
\includegraphics[width=1\linewidth]{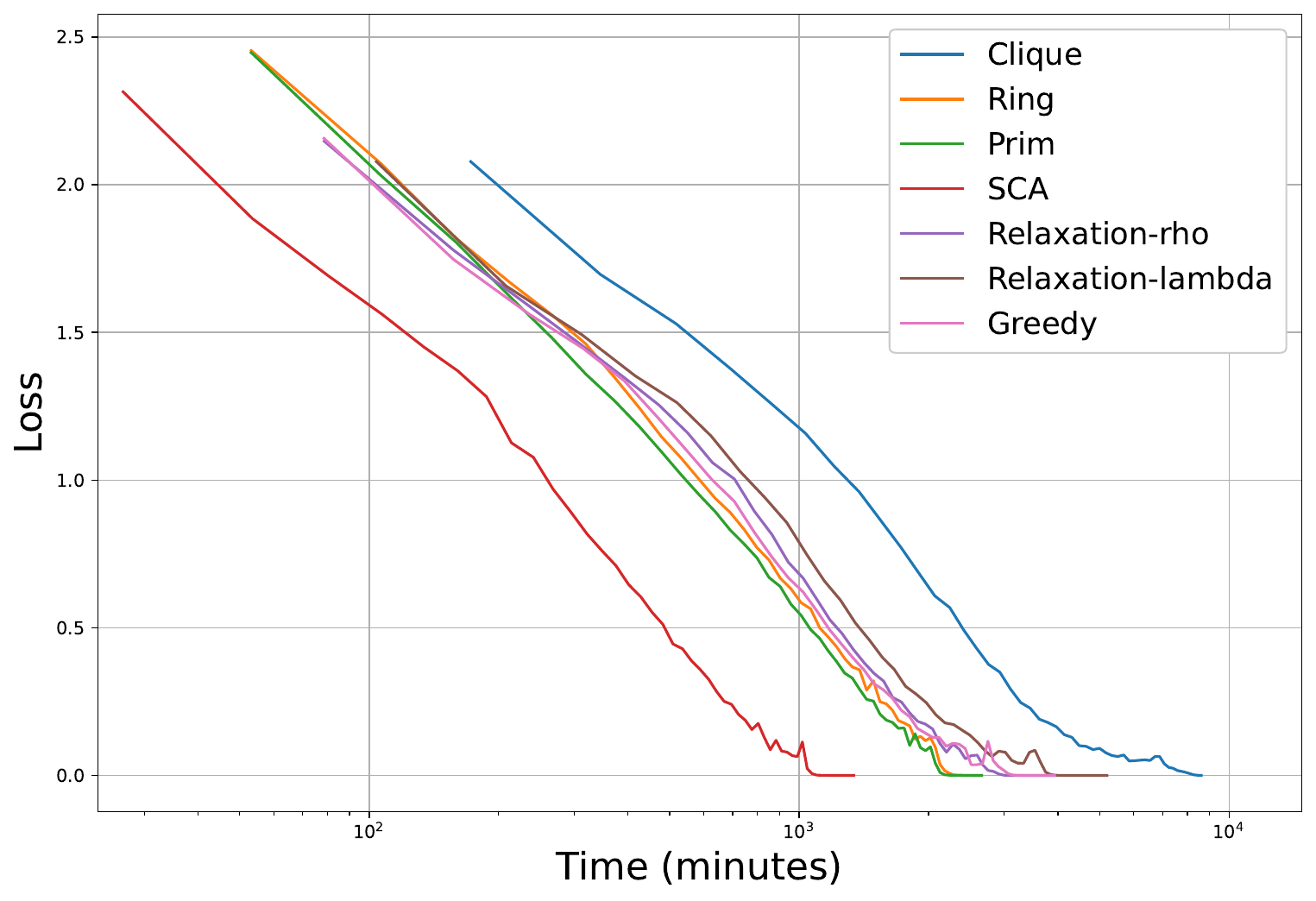}}
\vspace{-.1em}
%\centerline{\scriptsize (e) BellCanada: profit}
\end{minipage}
\vspace{-.1em}
\begin{minipage}{.495\linewidth}
\centerline{
\includegraphics[width=1\linewidth]{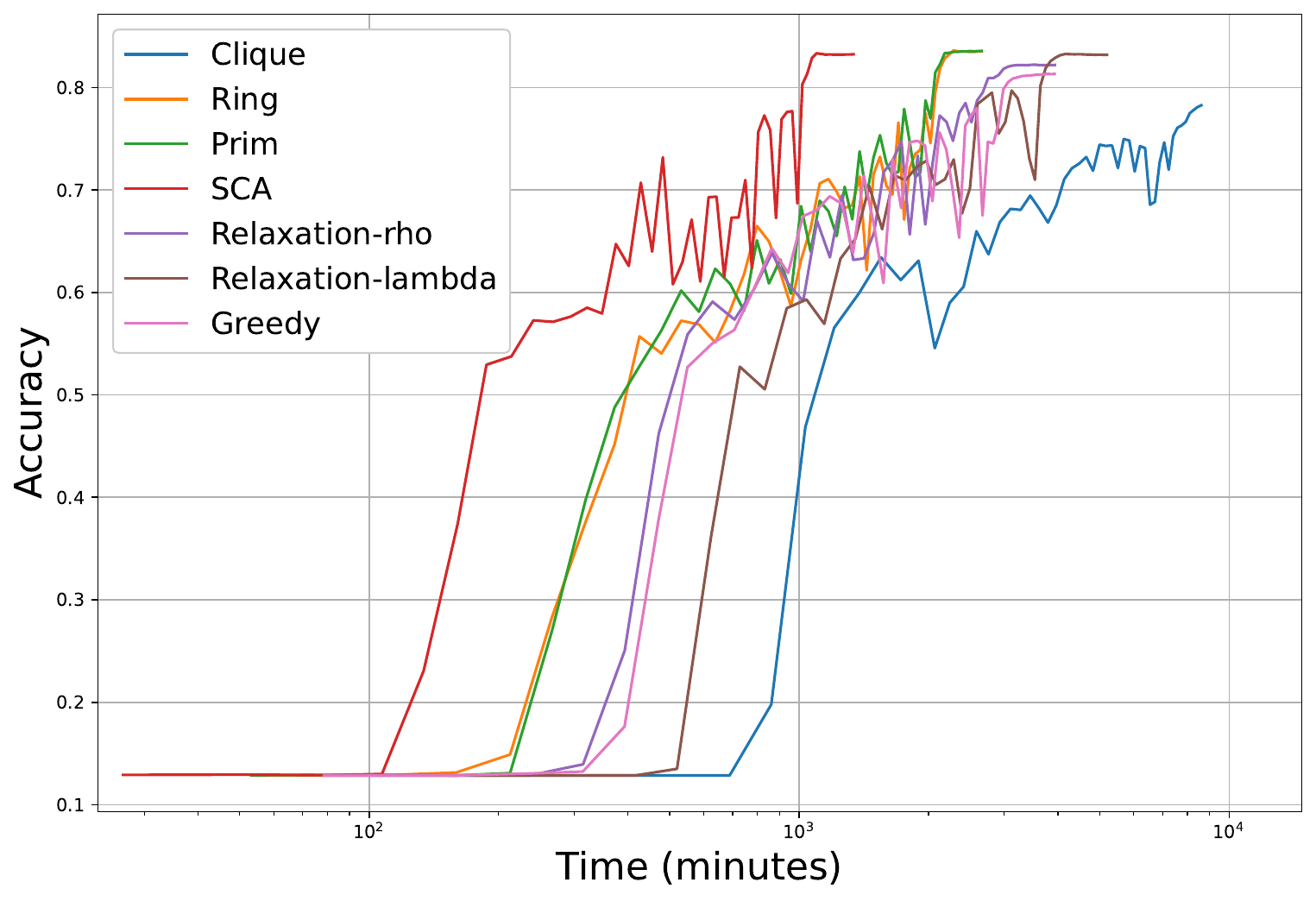}}
\vspace{-.1em}
%\centerline{\scriptsize (f) BellCanada: time}
\end{minipage}
\vspace{-1.25em}
\caption{CIFAR-10 over Roofnet: with overlay routing. 
} \label{fig:roofnet_routing}
\vspace{-.5em}
\end{figure}

Fig.~\ref{fig:roofnet_routing} shows the results after optimizing the communication schedule under each design by the overlay routing optimization \eqref{eq:min-time}. Compared with Fig.~\ref{fig:roofnet_no_routing} (second row), we see that the training time is further reduced for all the topology designs. However, the improvement is only prominent for the dense topology (`Clique') with a reduction of $28\%$, while the improvement for the other topologies is  incremental ($2$--$4\%$). Intuitively, this is because these sparse topologies generate much less load on the underlay network, and hence leave less room for improvement for overlay routing.

\subsubsection{Results with Inference Errors}

While the above results are obtained under the perfect knowledge of the nonempty categories $\mathcal{F}$ and the category capacities $(C_F)_{F\in \mathcal{F}}$, the observations therein remain valid under the inferred values %$\widehat{\mathcal{F}}$ and $(\widehat{C}_F)_{F\in \widehat{\mathcal{F}}}$ 
of these parameters, 
\if\thisismainpaper1
as our inference algorithms proposed in \cite{Huang24TONsub} are able to infer these parameters with sufficient accuracy. We thus defer the detailed results to \cite{Huang24:report} due to space limitation. \looseness=0 
\else
as shown in Fig.~\ref{fig:Roofnet_CIFAR10_inference}, where the inference is performed by the T-COIN algorithm in \cite{Huang24TONsub} based on packet-level simulations in NS3. Compared with the results under perfect knowledge in Fig.~\ref{fig:roofnet_no_routing}--\ref{fig:roofnet_routing}, designing the communication demands and schedule based on the inferred information only slightly perturbs the training performance, and the comparison between different designs remains roughly the same. 

\begin{figure}[t!]
\begin{minipage}{.495\linewidth}
\centerline{
\includegraphics[width=1\linewidth]{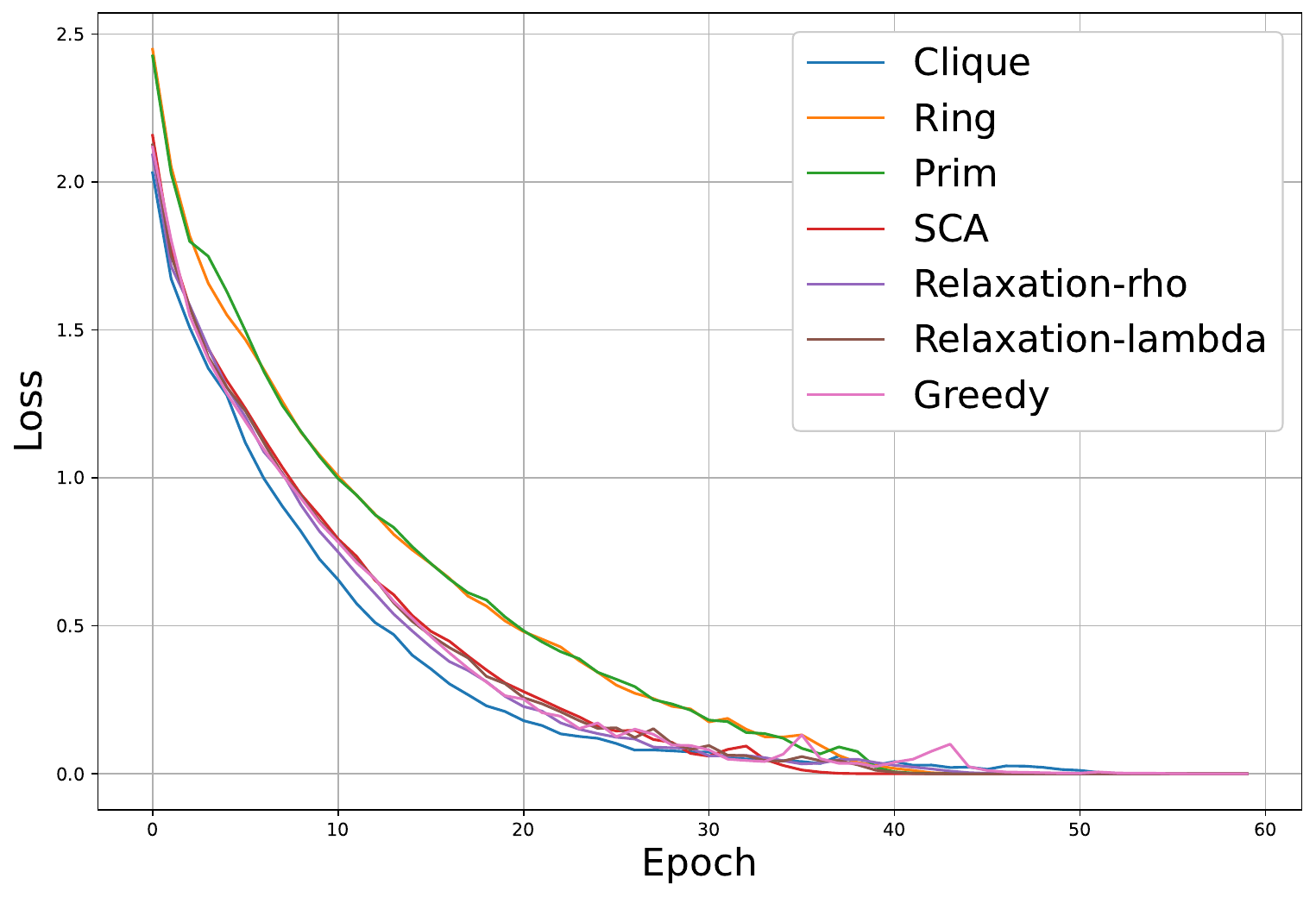}}
\vspace{-.1em}
%\centerline{\scriptsize (a) AboveNet: profit}
\end{minipage}
\begin{minipage}{.495\linewidth}
\centerline{
\includegraphics[width=1\linewidth]{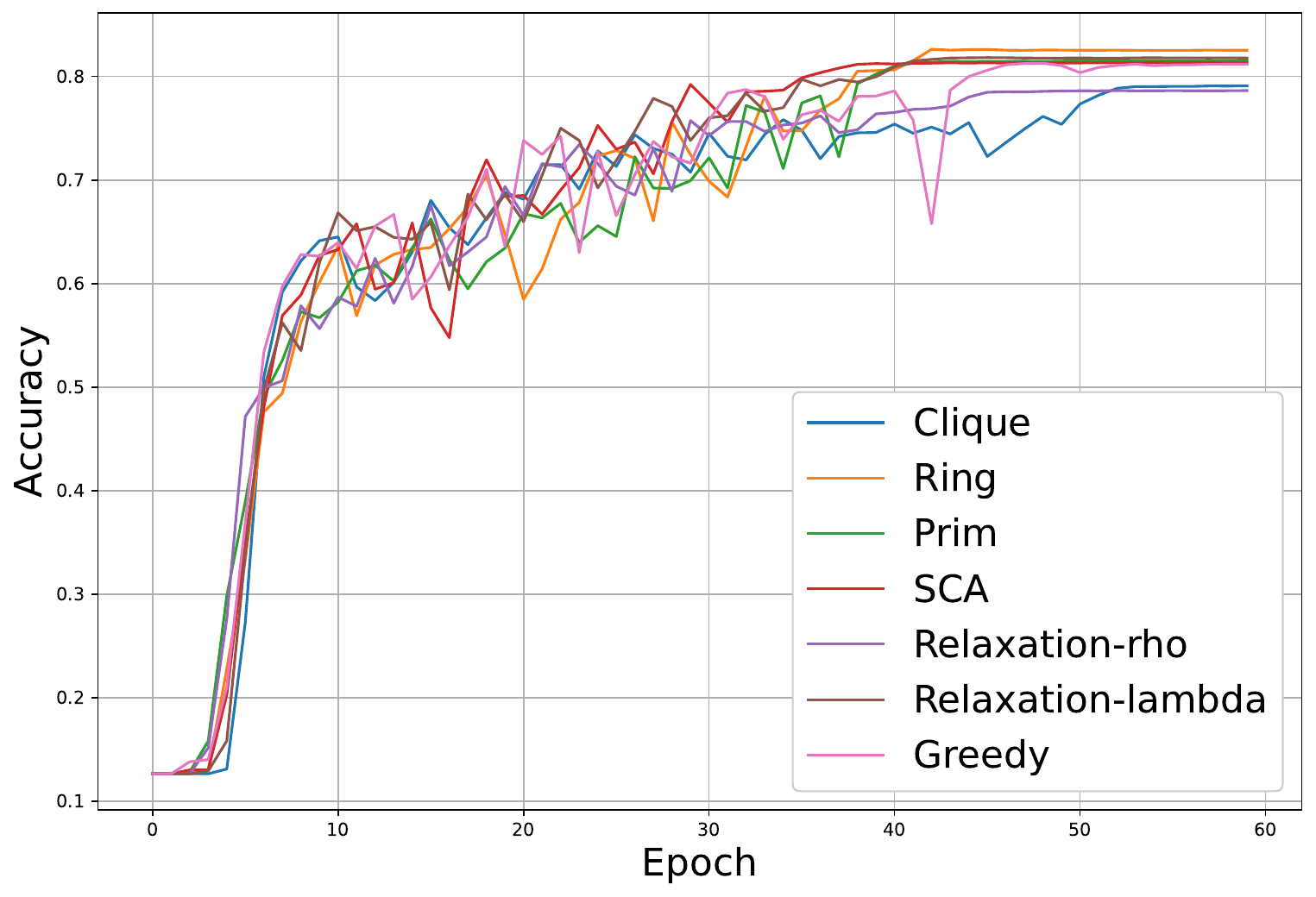}}
\vspace{-.1em}
%\centerline{\scriptsize (b) AboveNet: time}
\end{minipage}
\begin{minipage}{.495\linewidth}
\centerline{
\includegraphics[width=1\linewidth]{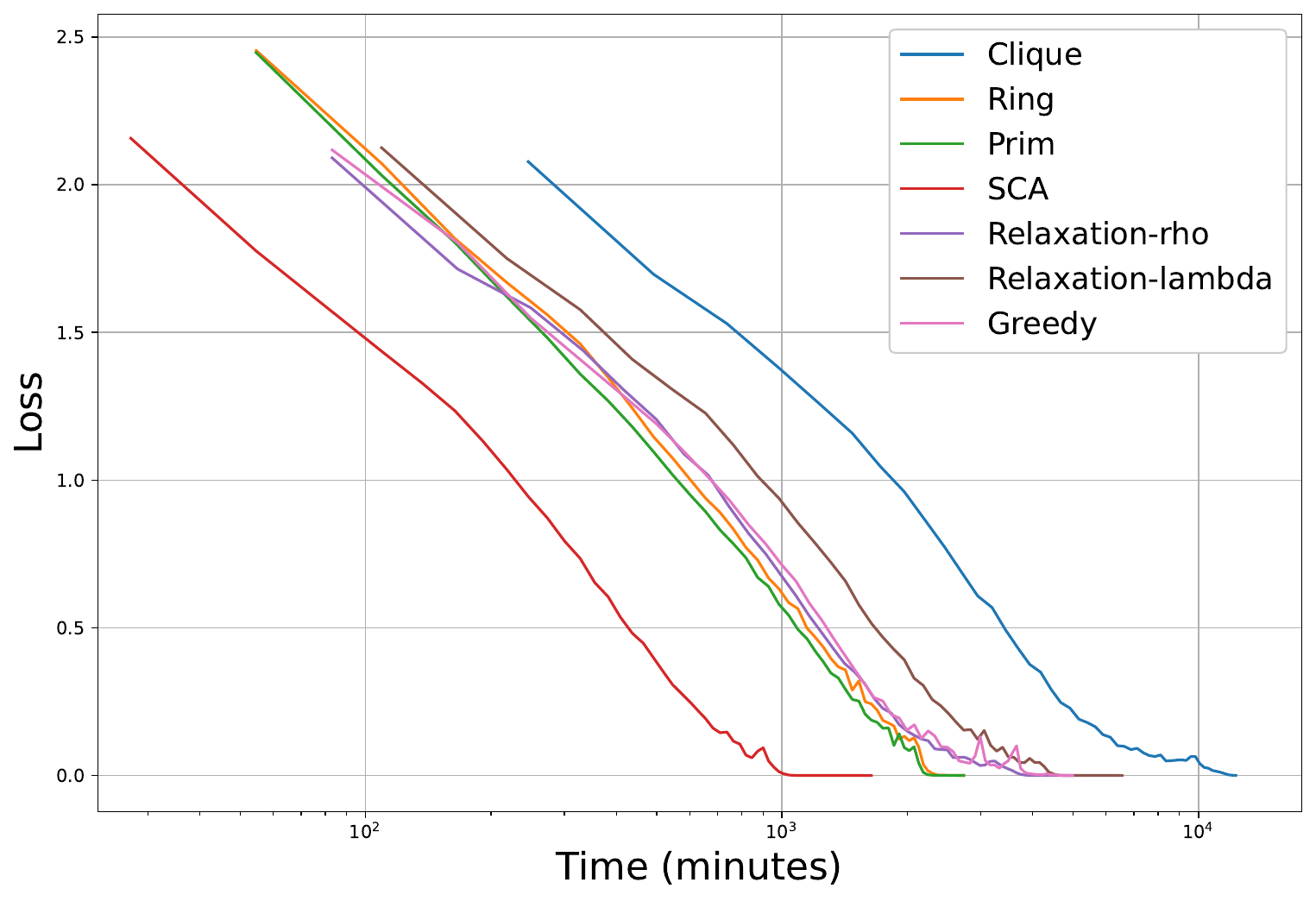}}
\vspace{-.1em}
%\centerline{\scriptsize (c) BellCanada: profit}
\end{minipage}
\begin{minipage}{.495\linewidth}
\centerline{
\includegraphics[width=1\linewidth]{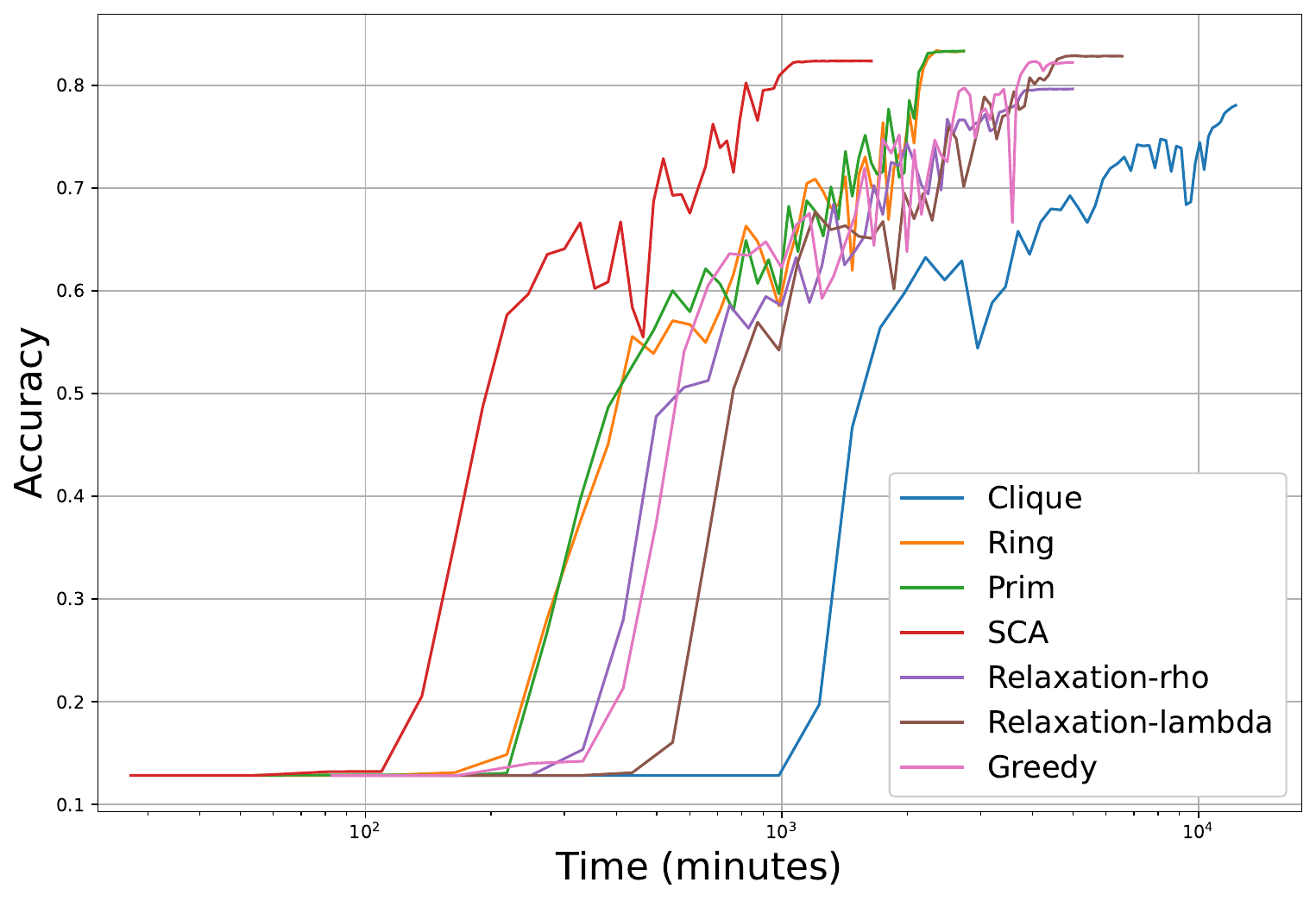}}
\vspace{-.1em}
%\centerline{\scriptsize (d) BellCanada: time}
\end{minipage}
\begin{minipage}{.495\linewidth}
\centerline{
\includegraphics[width=1\linewidth]{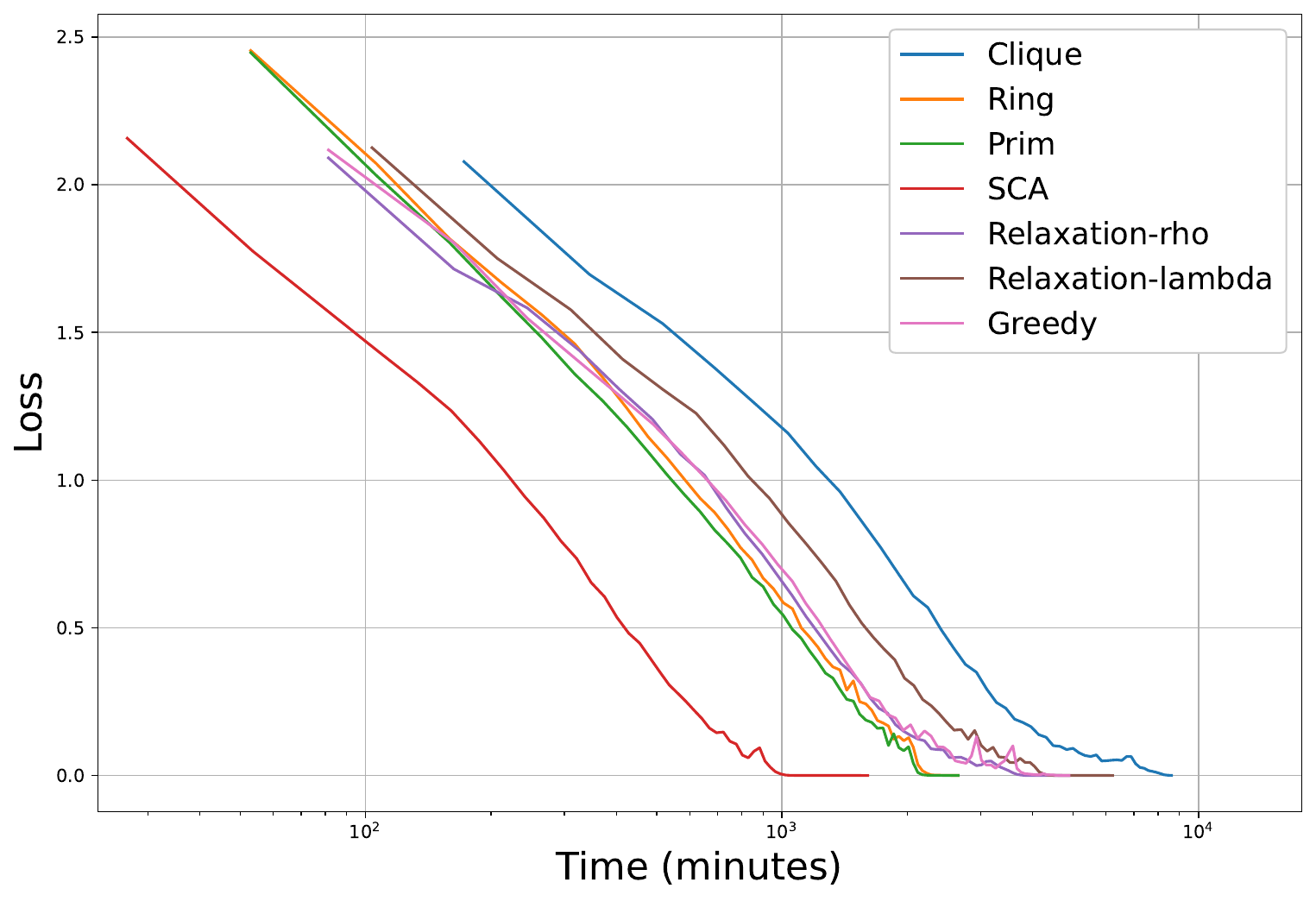}}
\vspace{-.1em}
%\centerline{\scriptsize (e) BellCanada: profit}
\end{minipage}
\begin{minipage}{.495\linewidth}
\centerline{
\includegraphics[width=1\linewidth]{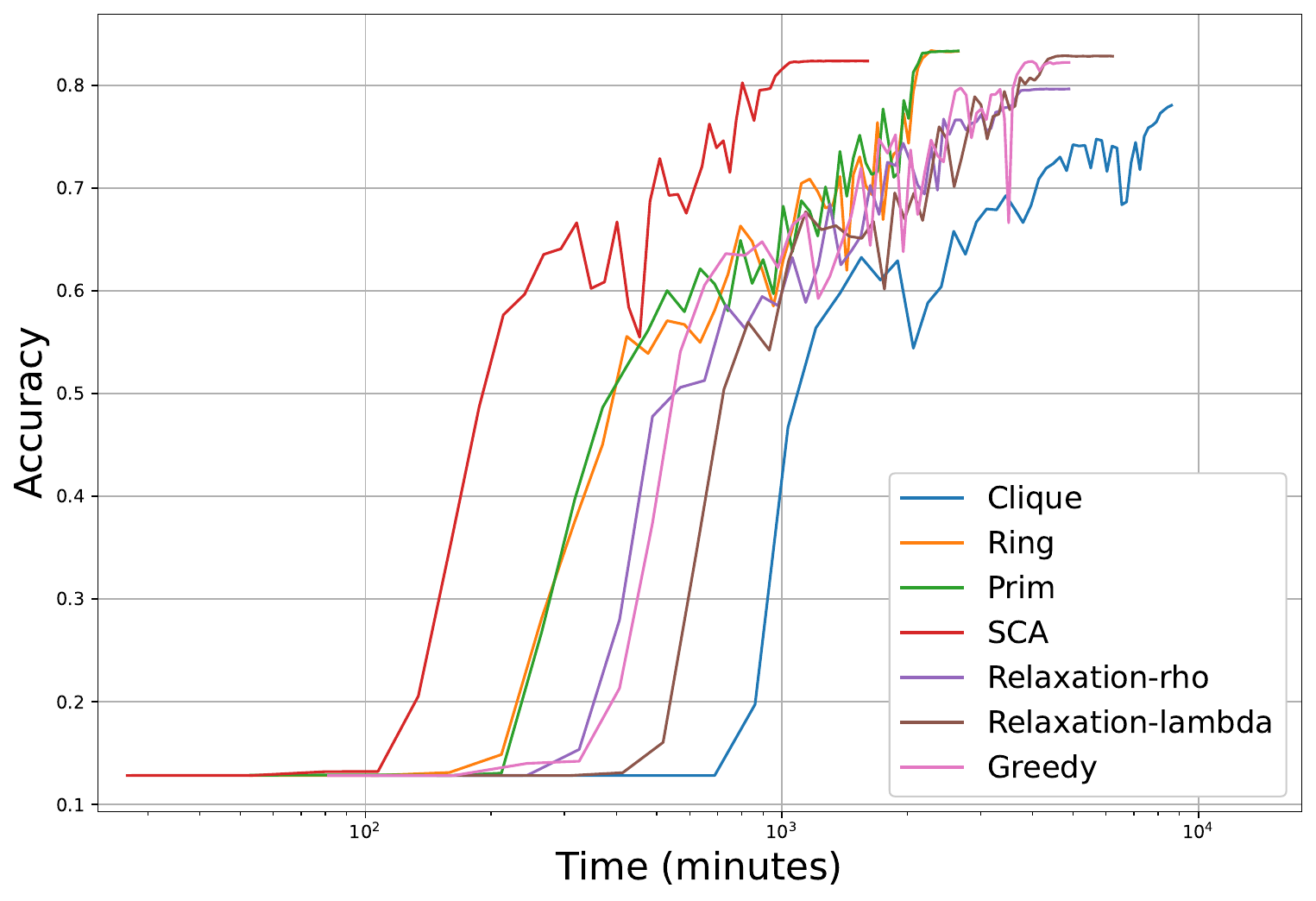}}
\vspace{-.1em}
%\centerline{\scriptsize (f) BellCanada: time}
\end{minipage}
\vspace{-1.25em}
\caption{CIFAR-10 over Roofnet with inference errors (second row: time without overlay routing; third row: time with overlay routing). 
} \label{fig:Roofnet_CIFAR10_inference}
\vspace{-.5em}
\end{figure}

%\ting{add results here. Note: We should compute the overlay routing (paths and rates) based on the inferred parameters. Strictly speaking, we should measure the actual $\tau$ by NS3 simulations, but due to time limit, I propose that we then scale the rates to satisfy the real capacity constraints to calculate the $\tau$-value.}
%\ting{Another thought: if you verify that equal bandwidth sharing is conditionally optimal (for given routing), then we just use the routing computed on inferred parameters, and then \eqref{eq:tau - special case, per-category} to compute the $\tau$ (based on true $C_F$ and $\mathcal{F}$)}
\fi

\subsubsection{Results under Other Weight Design}

\if\thisismainpaper1
\rev{Instead of solving the SDP \eqref{eq:min rho wo cost}, one could use alternative designs for link weights. To assess the impact of different weight designs, we conducted additional simulations for the case of Fig.~\ref{fig:roofnet_no_routing}--\ref{fig:roofnet_routing} using the widely-adopted Metropolis-Hasting weights. The results, provided in \cite{Huang24:report}, show that the Metropolis-Hasting weights introduce a noticeable delay in convergence compared to our proposed weight design. %, as evidenced by the examination of each performance curve.
}
\else
Instead of solving the SDP \eqref{eq:min rho wo cost}, one could use other designs of link weights. To see the impact of weight design, we have repeated the simulations in Fig.~\ref{fig:roofnet_no_routing}--\ref{fig:roofnet_routing} under a common design called Metropolis-Hasting weights~\cite{Xiao2006DistributedAC}. The results in Fig.~\ref{fig:roofnet_CIFAR10_metropolis} show that the Metropolis-Hasting weights introduce a noticeable delay in convergence compared to our proposed weight design, as evidenced by comparing each curve in Fig.~\ref{fig:roofnet_CIFAR10_metropolis} with its counterpart in Fig.~\ref{fig:roofnet_no_routing}--\ref{fig:roofnet_routing}.

\begin{figure}[t!]
\begin{minipage}{.495\linewidth}
\centerline{
\includegraphics[width=1\linewidth]{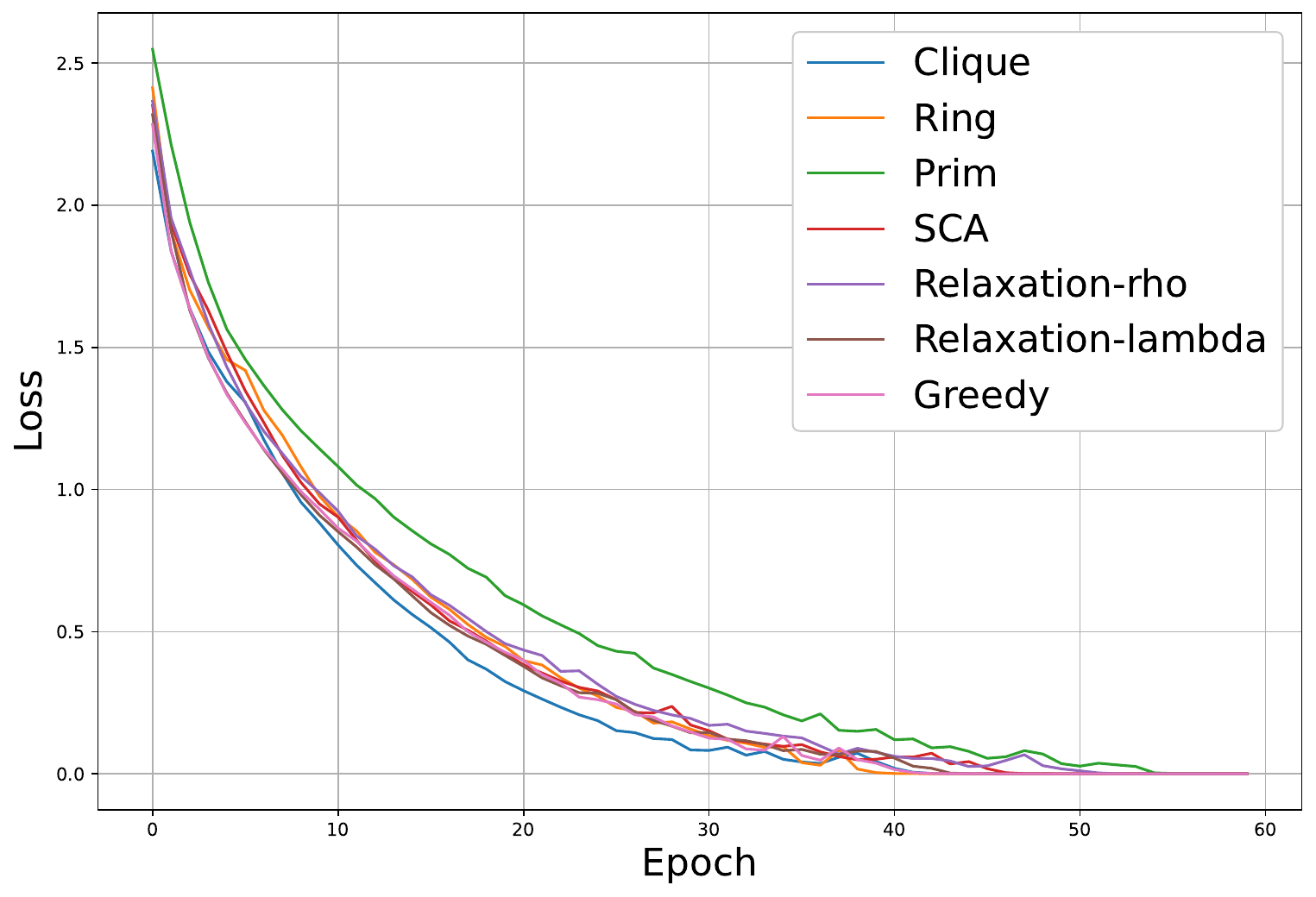}}
\vspace{-.1em}
%\centerline{\scriptsize (a) AboveNet: profit}
\end{minipage}
\begin{minipage}{.495\linewidth}
\centerline{
\includegraphics[width=1\linewidth]{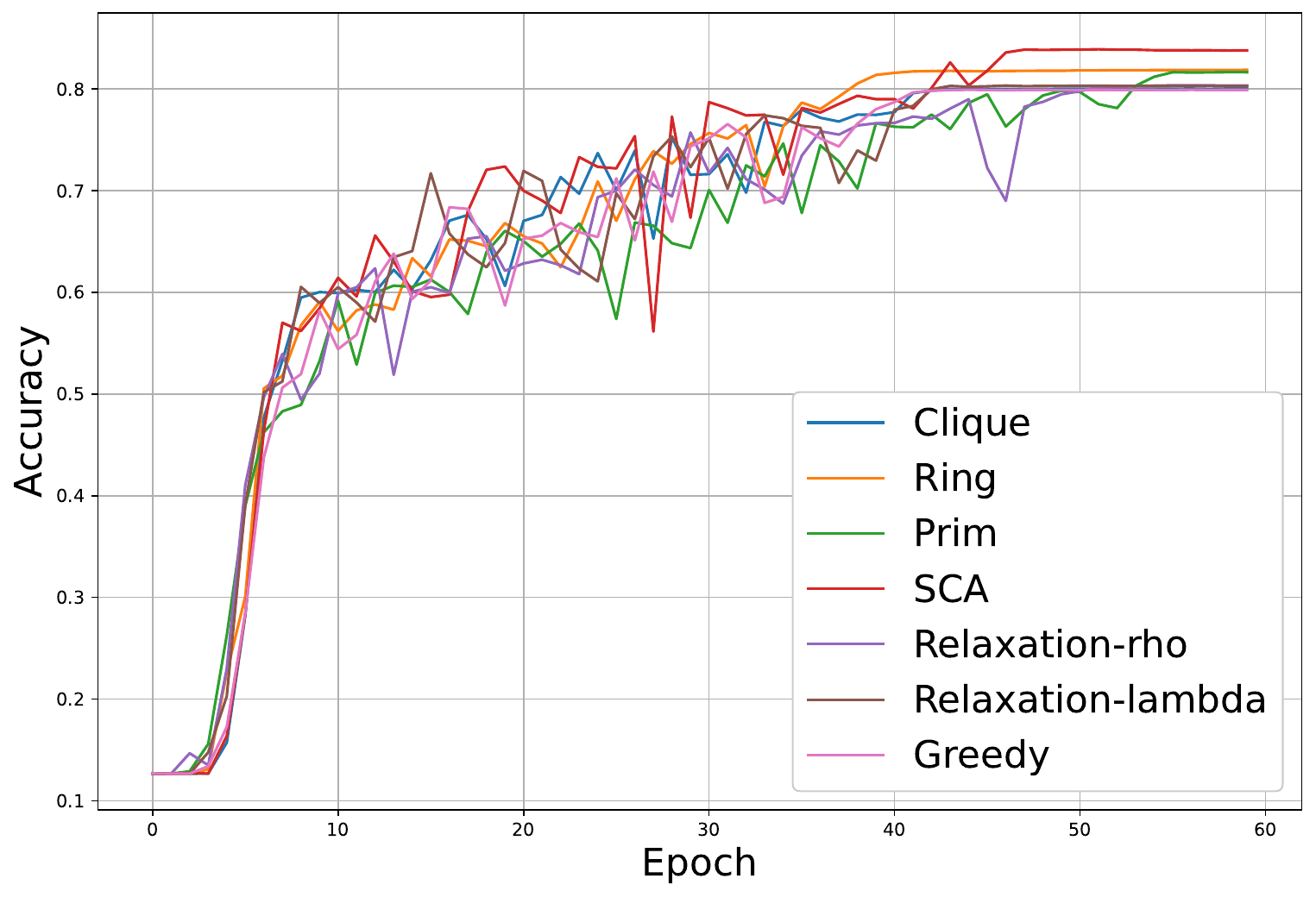}}
\vspace{-.1em}
%\centerline{\scriptsize (b) AboveNet: time}
\end{minipage}
\begin{minipage}{.495\linewidth}
\centerline{
\includegraphics[width=1\linewidth]{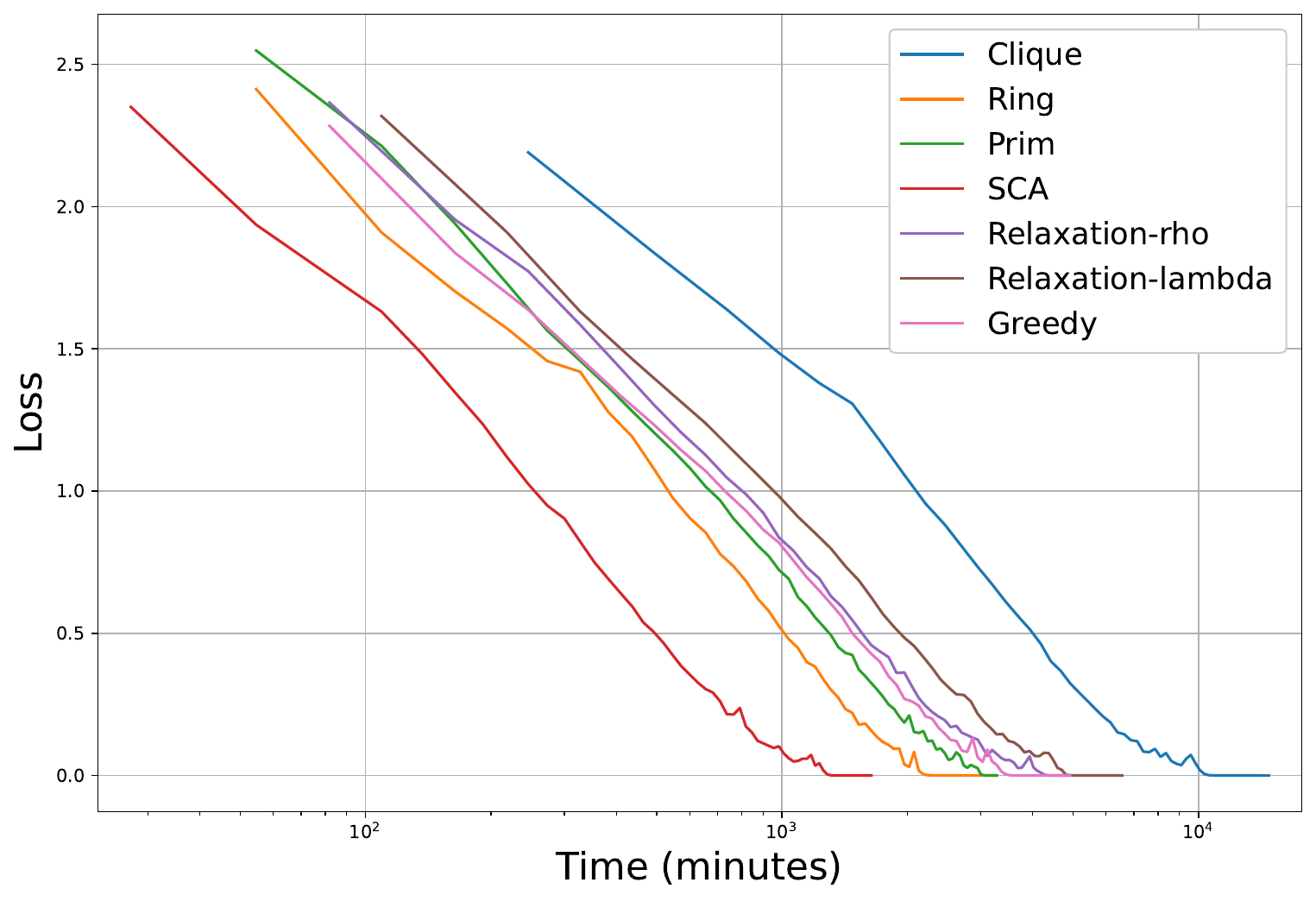}}
\vspace{-.1em}
%\centerline{\scriptsize (c) BellCanada: profit}
\end{minipage}
\begin{minipage}{.495\linewidth}
\centerline{
\includegraphics[width=1\linewidth]{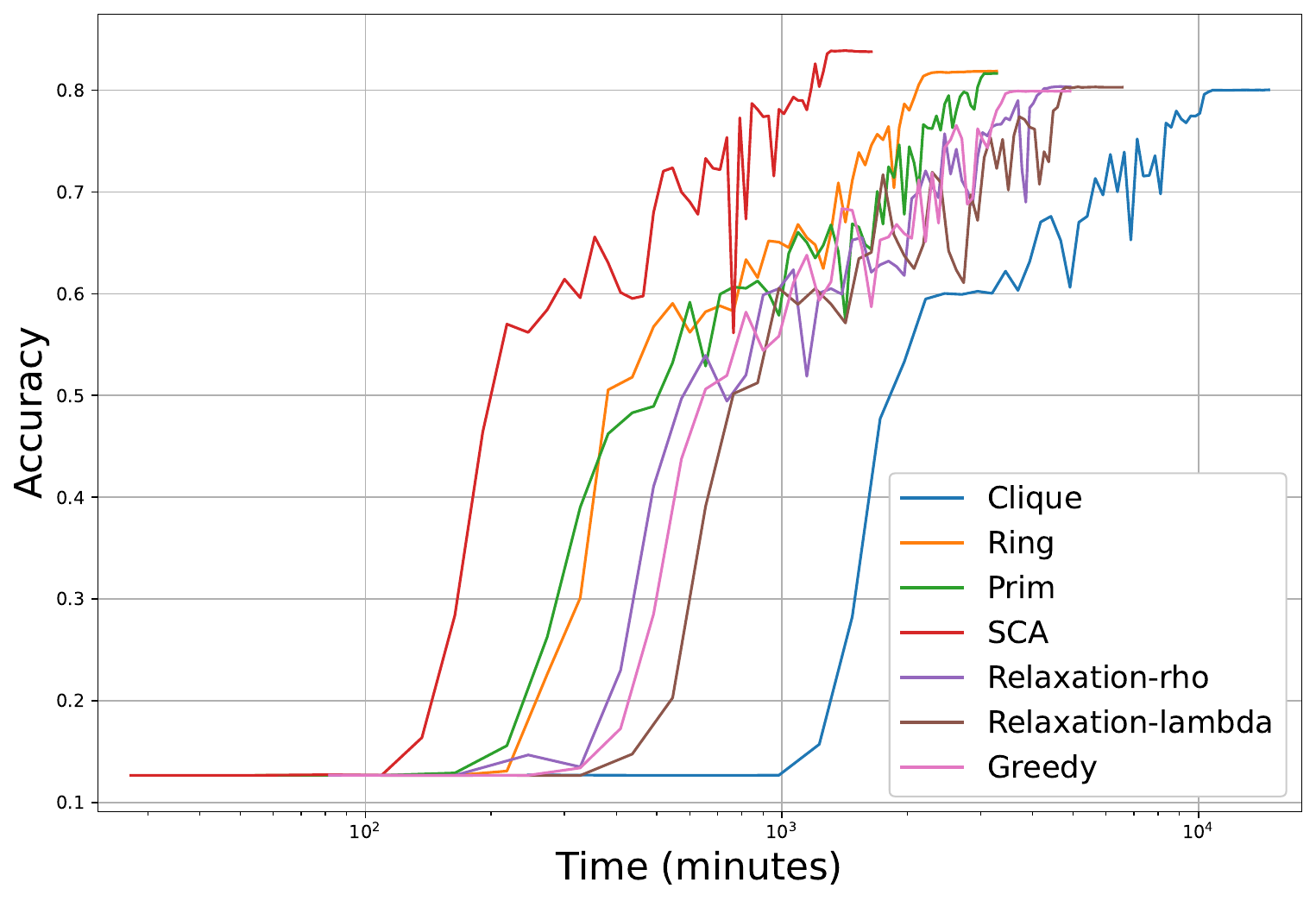}}
\vspace{-.1em}
%\centerline{\scriptsize (d) BellCanada: time}
\end{minipage}
\begin{minipage}{.495\linewidth}
\centerline{
\includegraphics[width=1\linewidth]{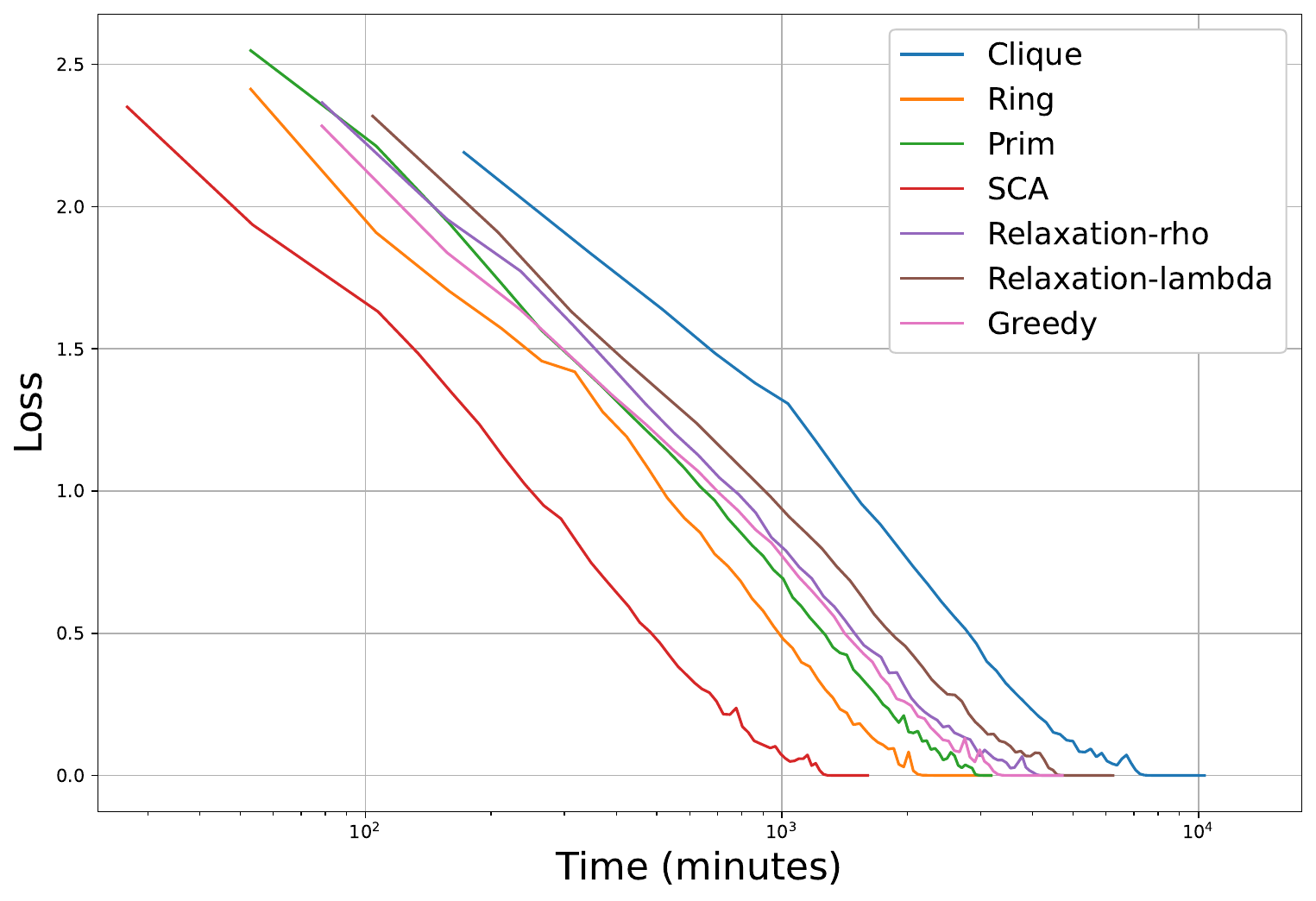}}
\vspace{-.1em}
%\centerline{\scriptsize (e) BellCanada: profit}
\end{minipage}
\begin{minipage}{.495\linewidth}
\centerline{
\includegraphics[width=1\linewidth]{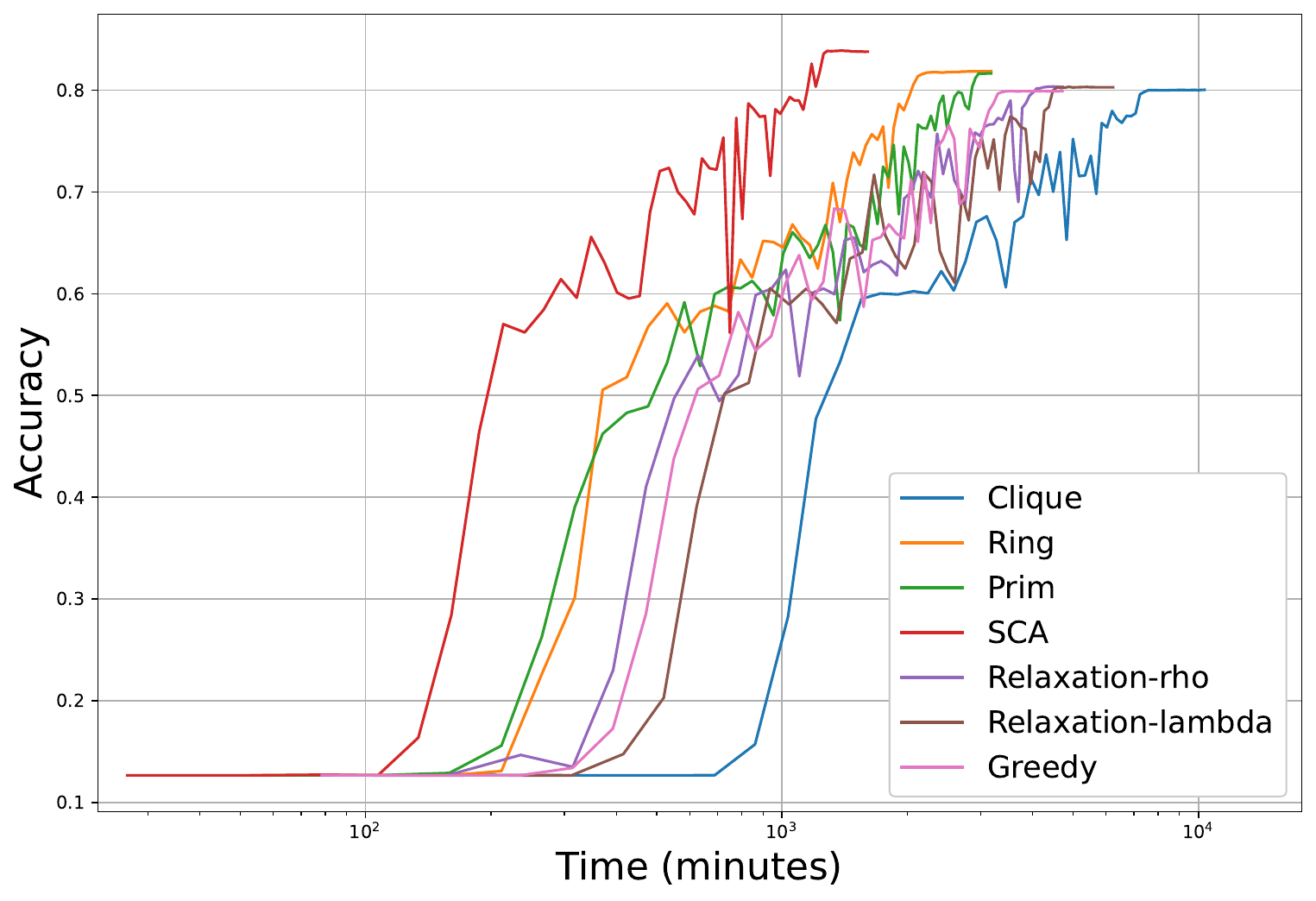}}
\vspace{-.1em}
%\centerline{\scriptsize (f) BellCanada: time}
\end{minipage}
\vspace{-1.25em}
\caption{CIFAR10 over Roofnet using Metropolis-Hasting weights (second row: time without overlay routing; third row: time with overlay routing). 
} \label{fig:roofnet_CIFAR10_metropolis}
\vspace{-.5em}
\end{figure}
\fi

\if\thisismainpaper0

\subsubsection{Results on MNIST}

\begin{figure}[t!]
\begin{minipage}{.495\linewidth}
\centerline{
\includegraphics[width=1\linewidth]{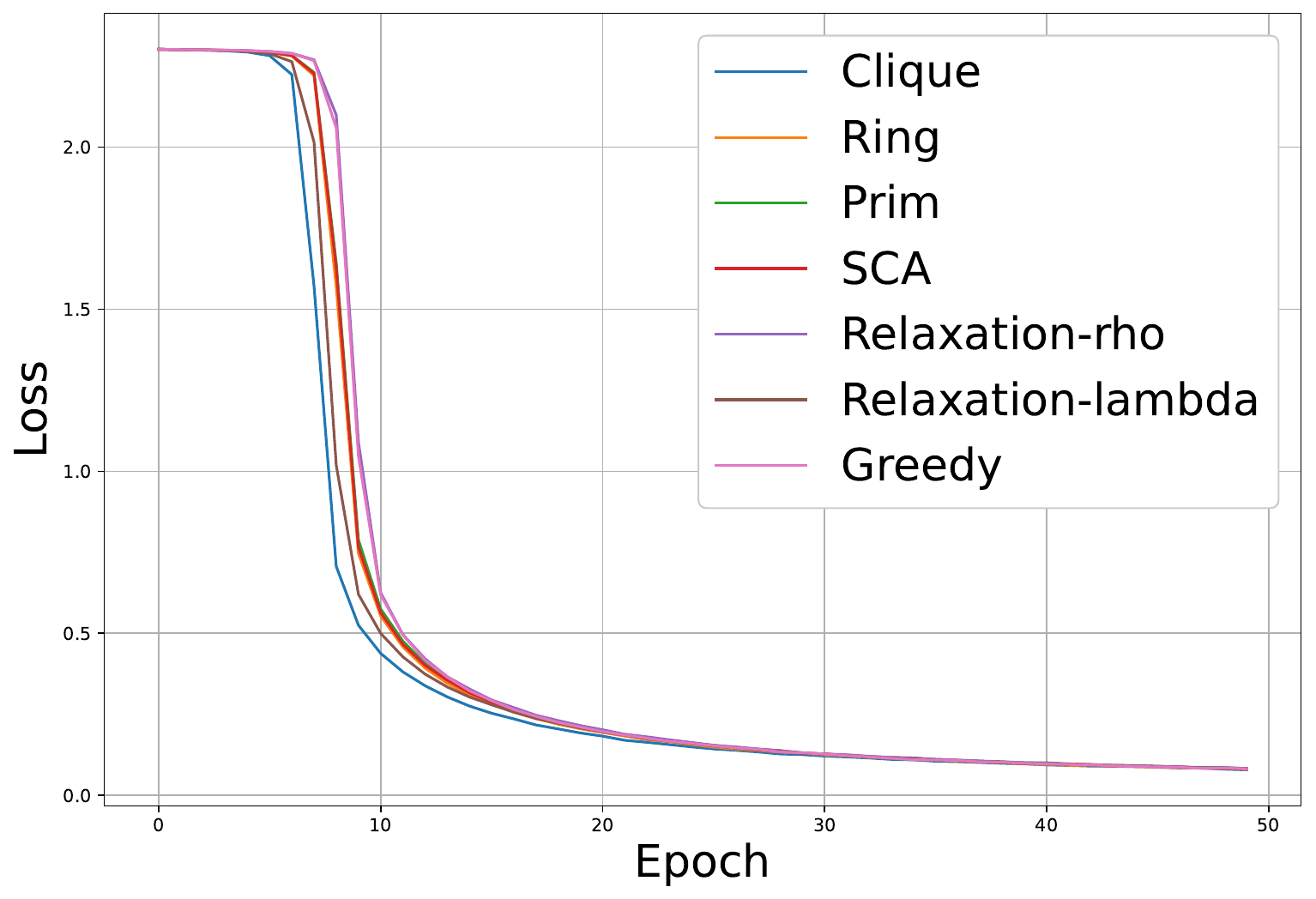}}
\vspace{-.1em}
%\centerline{\scriptsize (a) AboveNet: profit}
\end{minipage}
\begin{minipage}{.495\linewidth}
\centerline{
\includegraphics[width=1\linewidth]{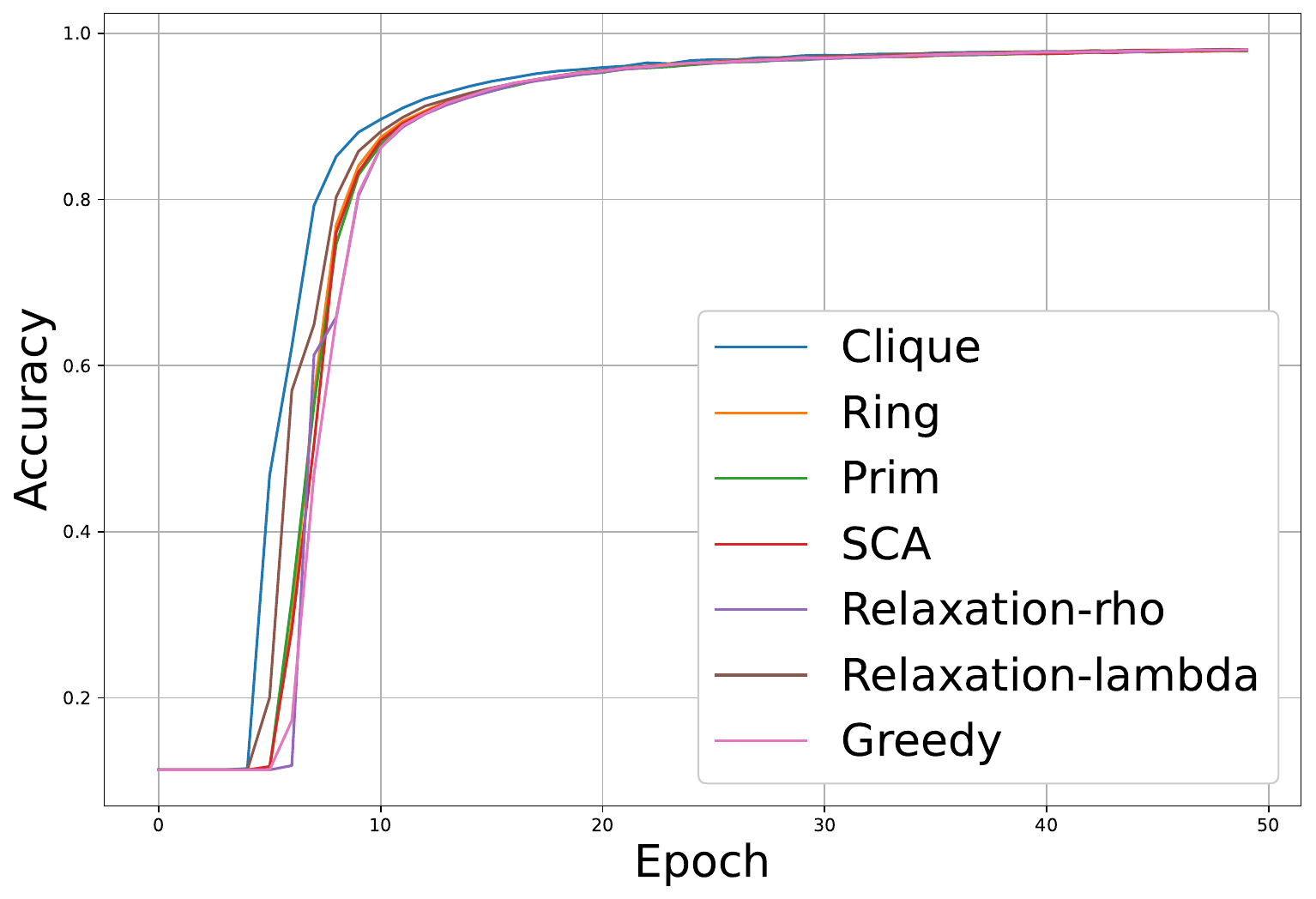}}
\vspace{-.1em}
%\centerline{\scriptsize (b) AboveNet: time}
\end{minipage}
\begin{minipage}{.495\linewidth}
\centerline{
\includegraphics[width=1\linewidth]{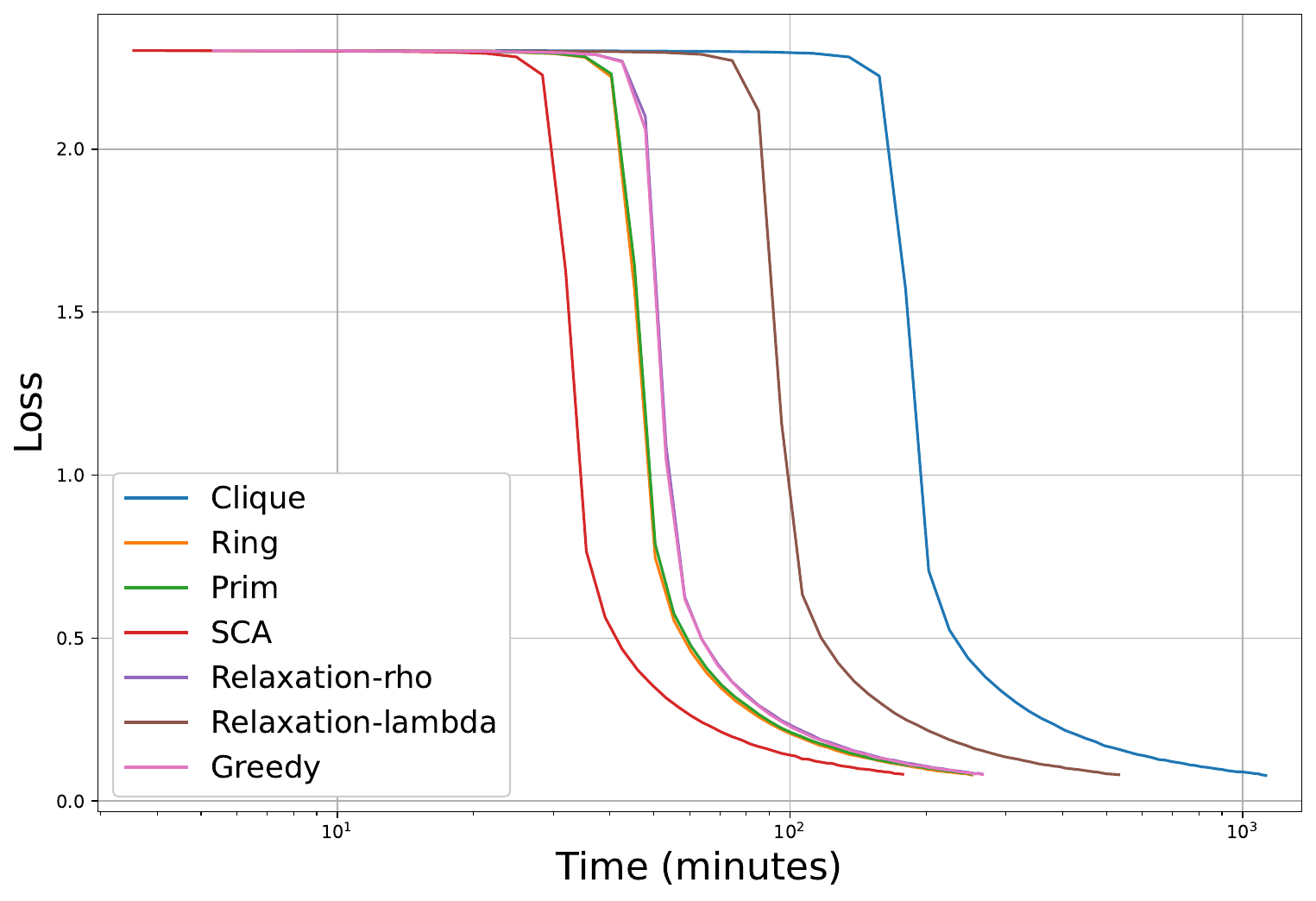}}
\vspace{-.1em}
%\centerline{\scriptsize (c) BellCanada: profit}
\end{minipage}
\begin{minipage}{.495\linewidth}
\centerline{
\includegraphics[width=1\linewidth]{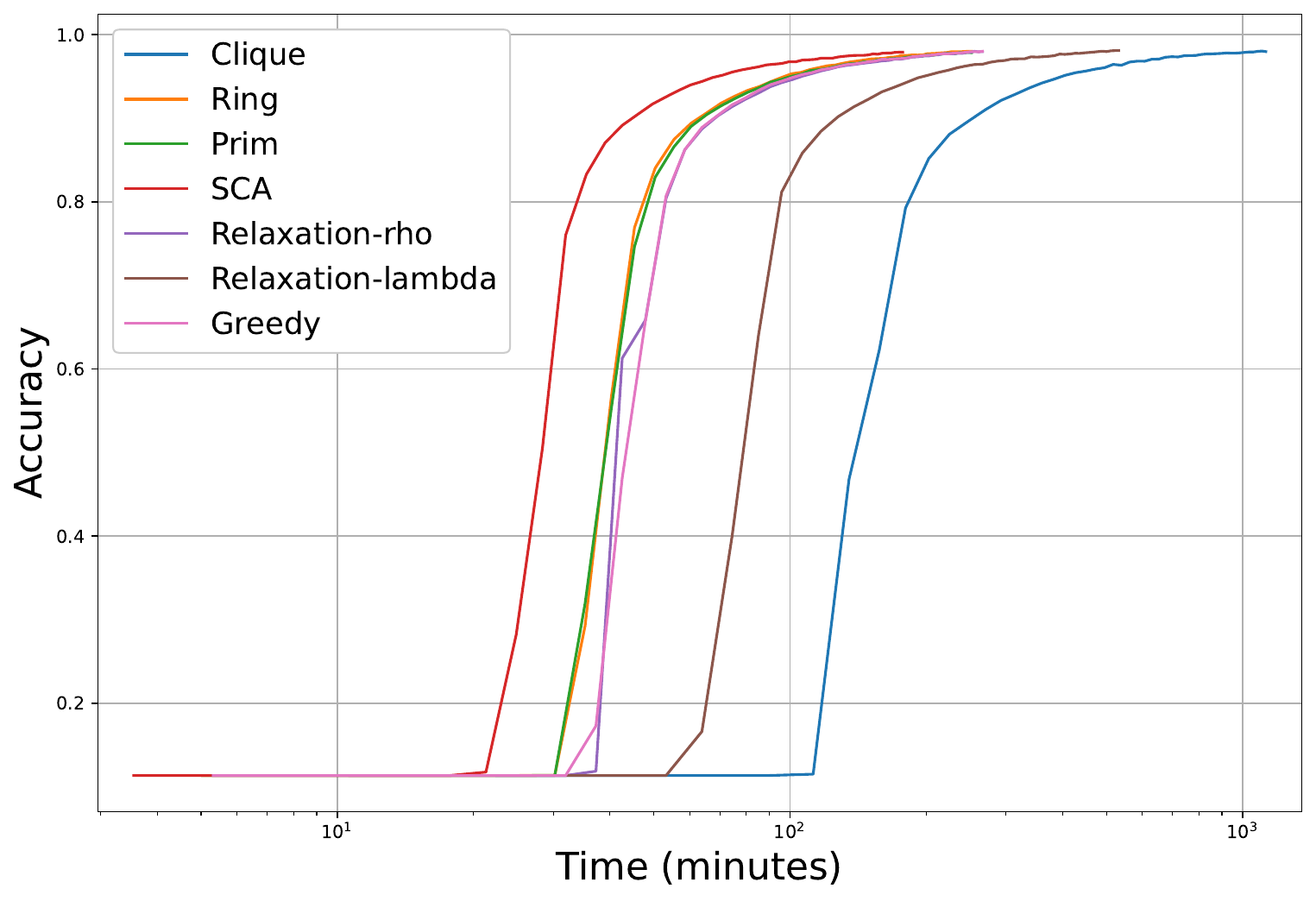}}
\vspace{-.1em}
%\centerline{\scriptsize (d) BellCanada: time}
\end{minipage}
\begin{minipage}{.495\linewidth}
\centerline{
\includegraphics[width=1\linewidth]{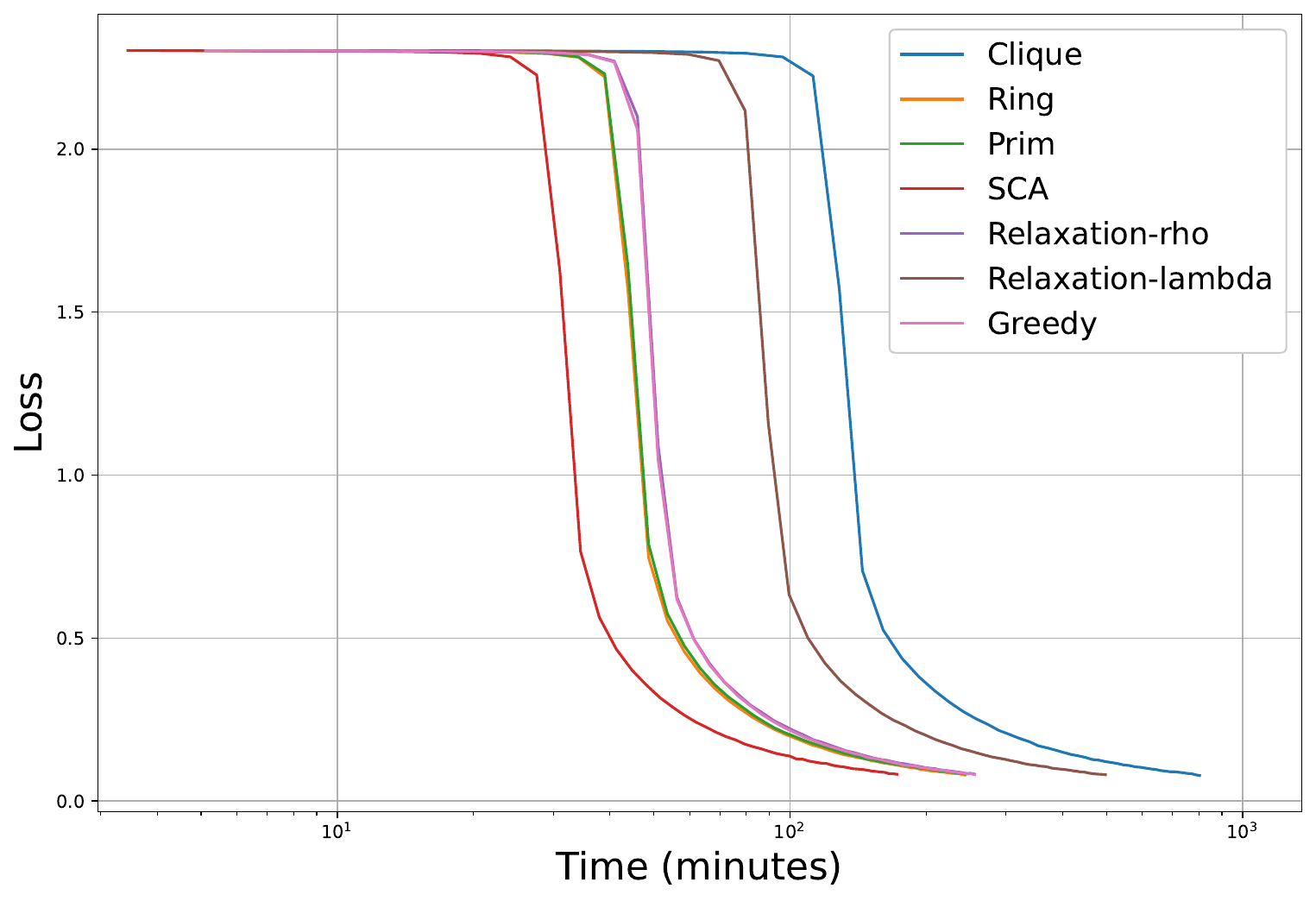}}
\vspace{-.1em}
%\centerline{\scriptsize (e) BellCanada: profit}
\end{minipage}
\begin{minipage}{.495\linewidth}
\centerline{
\includegraphics[width=1\linewidth]{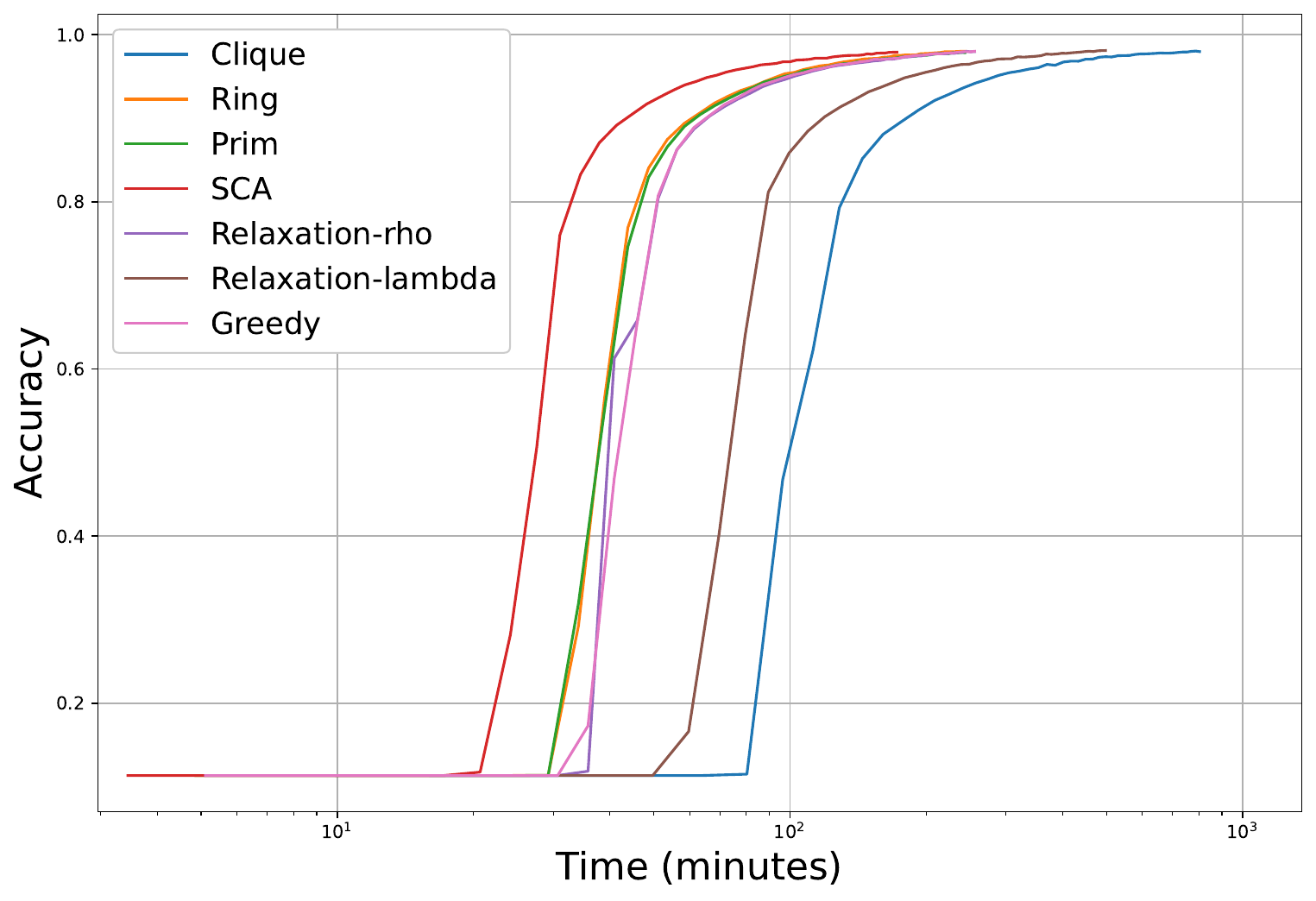}}
\vspace{-.1em}
%\centerline{\scriptsize (f) BellCanada: time}
\end{minipage}
\vspace{-1.25em}
\caption{MNIST over Roofnet without inference errors (second row: time without overlay routing; third row: time with overlay routing). 
} \label{fig:roofnet_MNIST}
\vspace{-.5em}
\end{figure}

\begin{figure}[t!]
\begin{minipage}{.495\linewidth}
\centerline{
\includegraphics[width=1\linewidth]{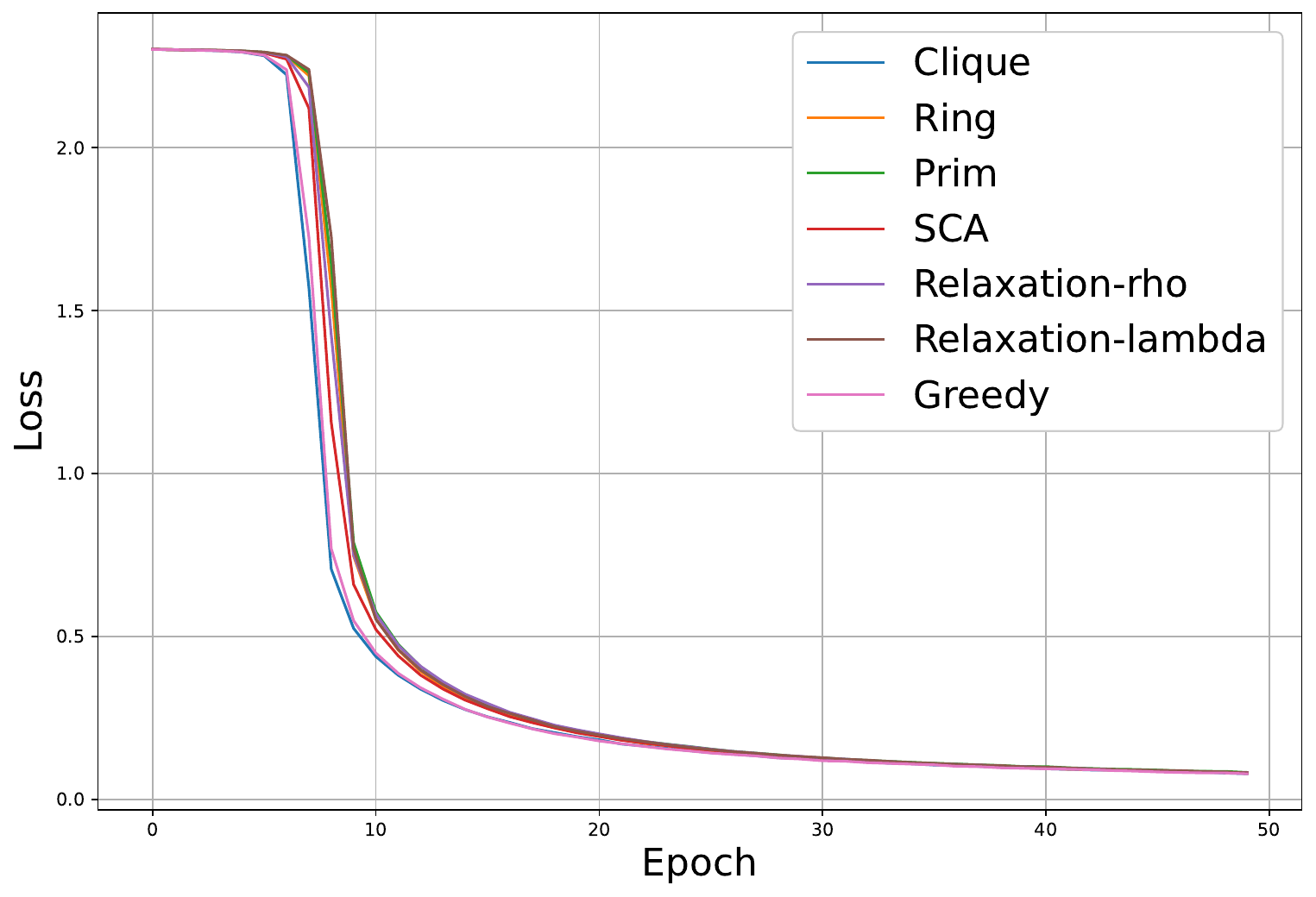}}
\vspace{-.1em}
%\centerline{\scriptsize (a) AboveNet: profit}
\end{minipage}
\begin{minipage}{.495\linewidth}
\centerline{
\includegraphics[width=1\linewidth]{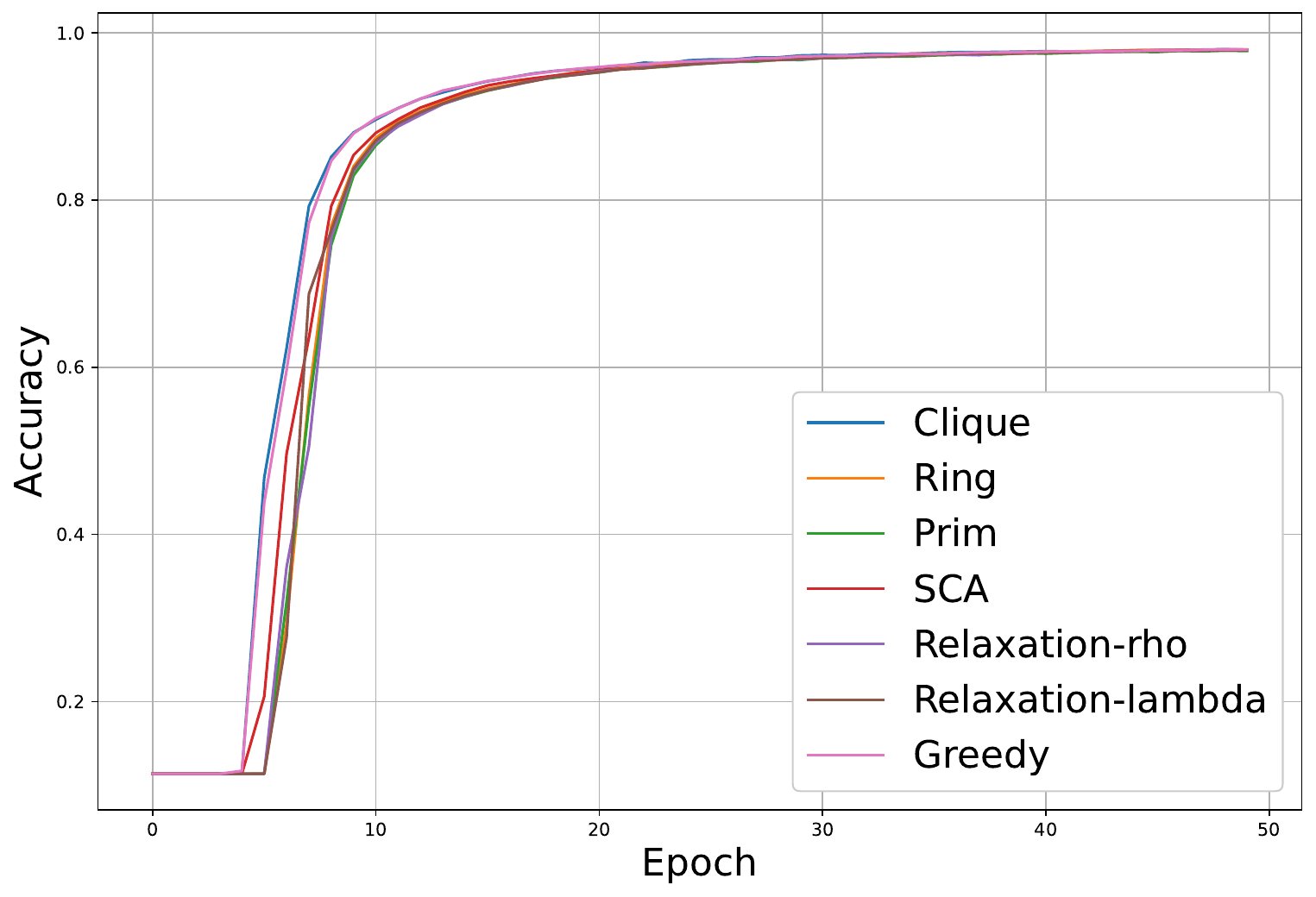}}
\vspace{-.1em}
%\centerline{\scriptsize (b) AboveNet: time}
\end{minipage}
\begin{minipage}{.495\linewidth}
\centerline{
\includegraphics[width=1\linewidth]{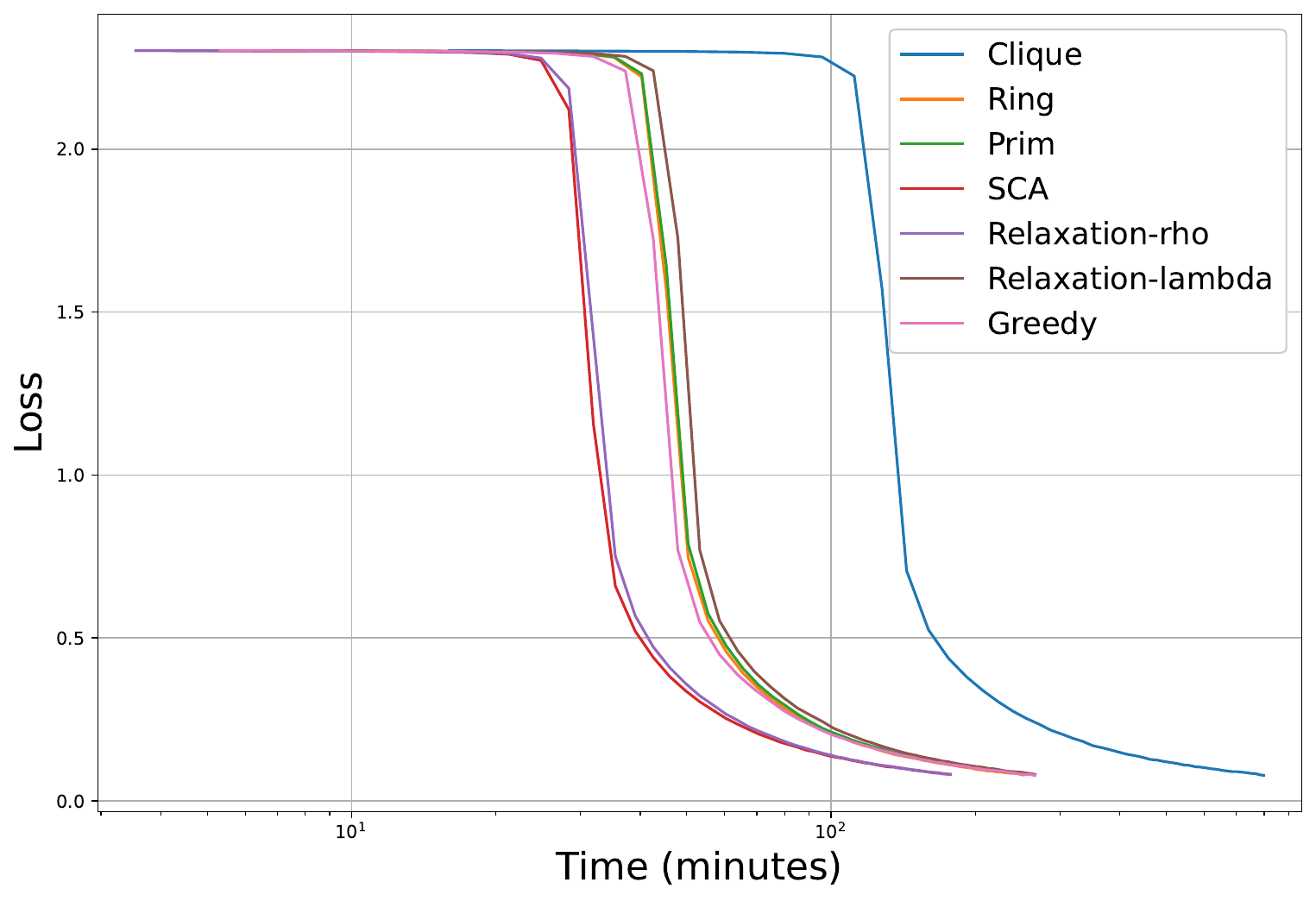}}
\vspace{-.1em}
%\centerline{\scriptsize (c) BellCanada: profit}
\end{minipage}
\begin{minipage}{.495\linewidth}
\centerline{
\includegraphics[width=1\linewidth]{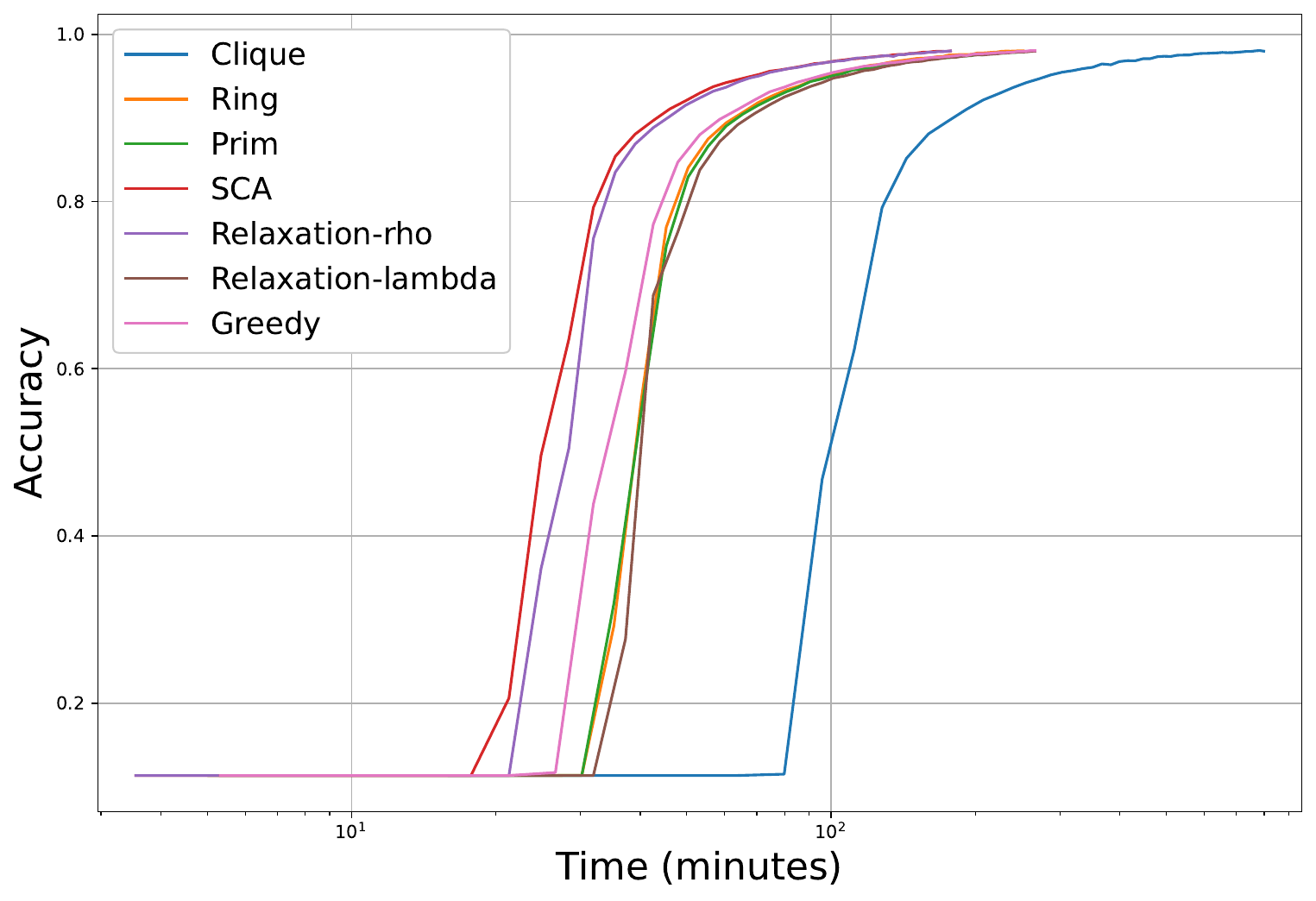}}
\vspace{-.1em}
%\centerline{\scriptsize (d) BellCanada: time}
\end{minipage}
\begin{minipage}{.495\linewidth}
\centerline{
\includegraphics[width=1\linewidth]{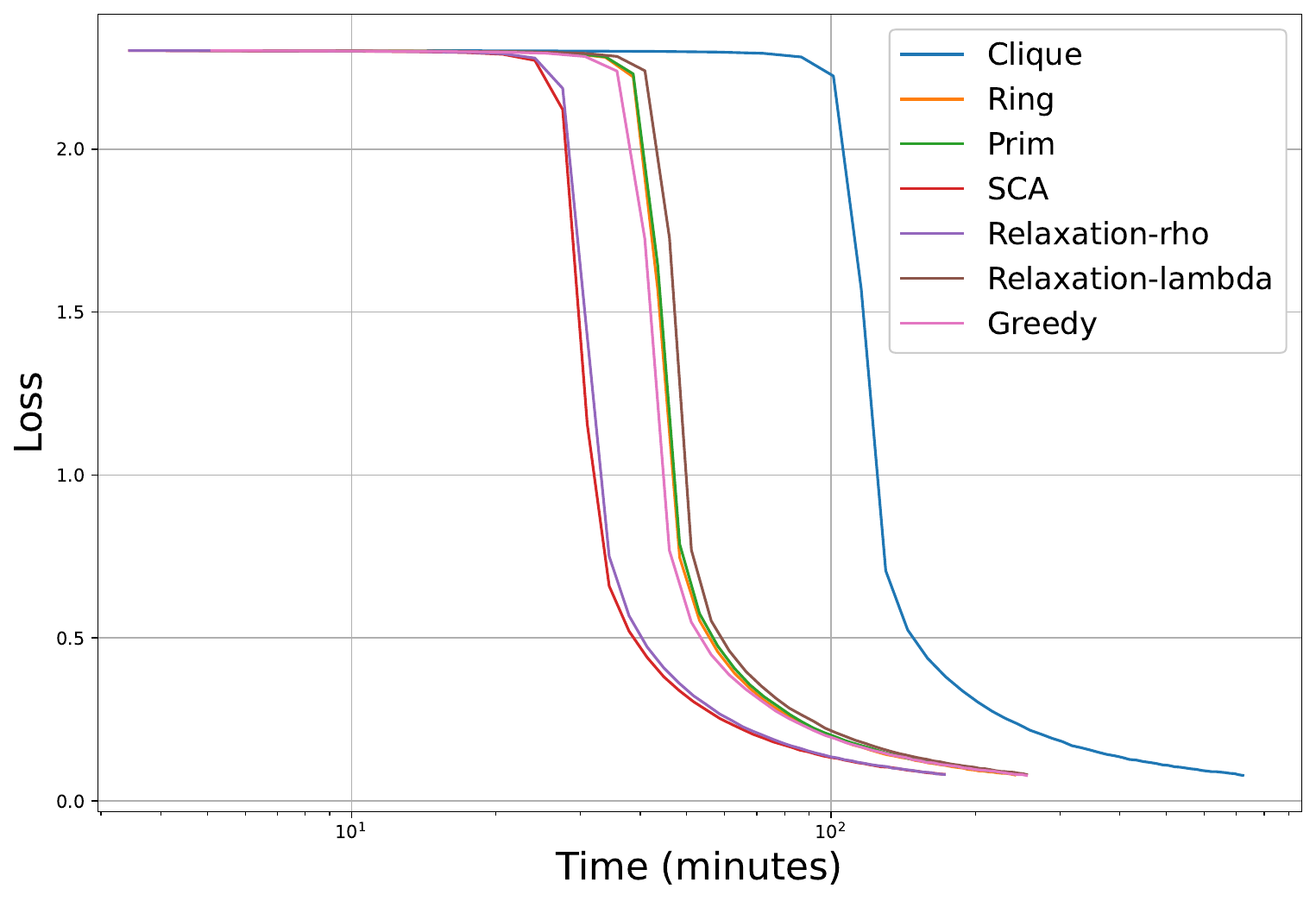}}
\vspace{-.1em}
%\centerline{\scriptsize (e) BellCanada: profit}
\end{minipage}
\begin{minipage}{.495\linewidth}
\centerline{
\includegraphics[width=1\linewidth]{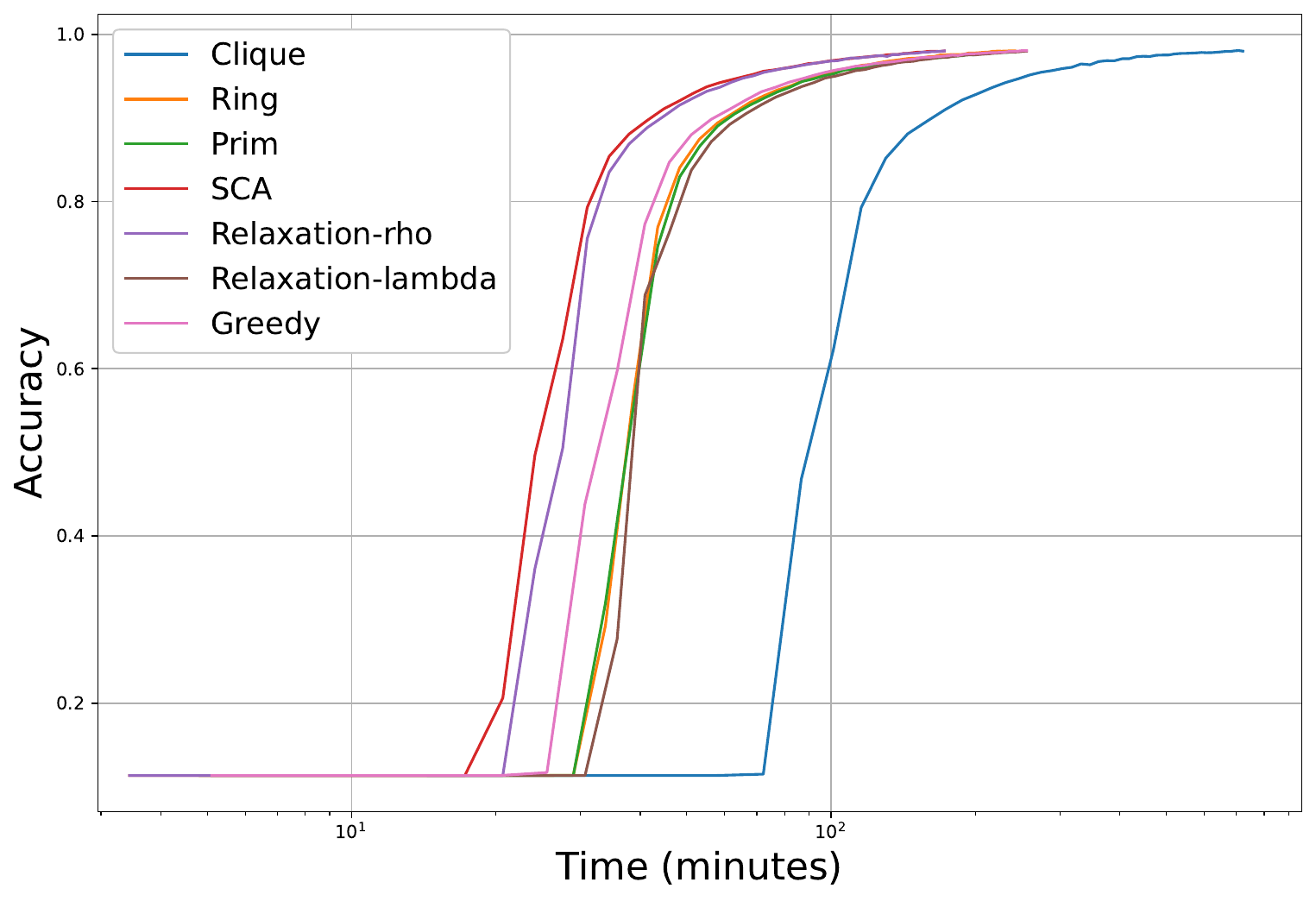}}
\vspace{-.1em}
%\centerline{\scriptsize (f) BellCanada: time}
\end{minipage}
\vspace{-1.25em}
\caption{MNIST over Roofnet with inference errors (second row: time without overlay routing; third row: time with overlay routing). 
} \label{fig:Roofnet_MNIST_inference}
\vspace{-.5em}
\end{figure}

To check the generalizability of our observations wrt the learning task, we repeat the above tests on MNIST based on the same network topology. The results in Fig.~\ref{fig:roofnet_MNIST}--\ref{fig:Roofnet_MNIST_inference} show the following ranking of the algorithms: `SCA' performs the best, `Clique' performs the worst, and the rest (`Prim', `Ring', `Relaxation-$\rho$', `Greedy', and `Relaxation-$\lambda$') fall in between. The inference errors may change the exact performance of each design, but their comparison remains largely the same. 
%Again, overlay routing provides the most improvement for `Clique'. 
Compared with CIFAR-10, we see that although the small size of MNIST causes the gaps between algorithms to shrink, their comparison remains the same. \looseness=0

\subsection{Simulation Results on IAB Network}

To further validate our observations in different network scenarios, we repeat the previous tests on the IAB network, as shown in Fig.~\ref{fig:IAB_CIFAR10}--\ref{fig:IAB_MNIST_inference}. The results exhibit similar trends as those on Roofnet (Fig.~\ref{fig:roofnet_no_routing}--\ref{fig:Roofnet_MNIST_inference}), in that the sparse topologies significantly reduce the time in achieving the same level of convergence as the clique, and overlay routing brings more improvement to dense topologies like the clique. Meanwhile, there is also a notable difference: except for the clique, all the topologies perform similarly on the IAB network. Further examination shows that this is because the IAB network is smaller, with fewer hops between the learning agents and less link sharing, and thus leaves less room for improvement. Nevertheless, the proposed algorithm `SCA' is still among the best-performing solutions. This result demonstrates the generalizability of our previous observations. 
%, together with the results on MNIST, highlights the importance of  designing the activated topology for DFL for training large models over large networks.  

\begin{figure}[t!]
\begin{minipage}{.495\linewidth}
\centerline{
\includegraphics[width=1\linewidth]{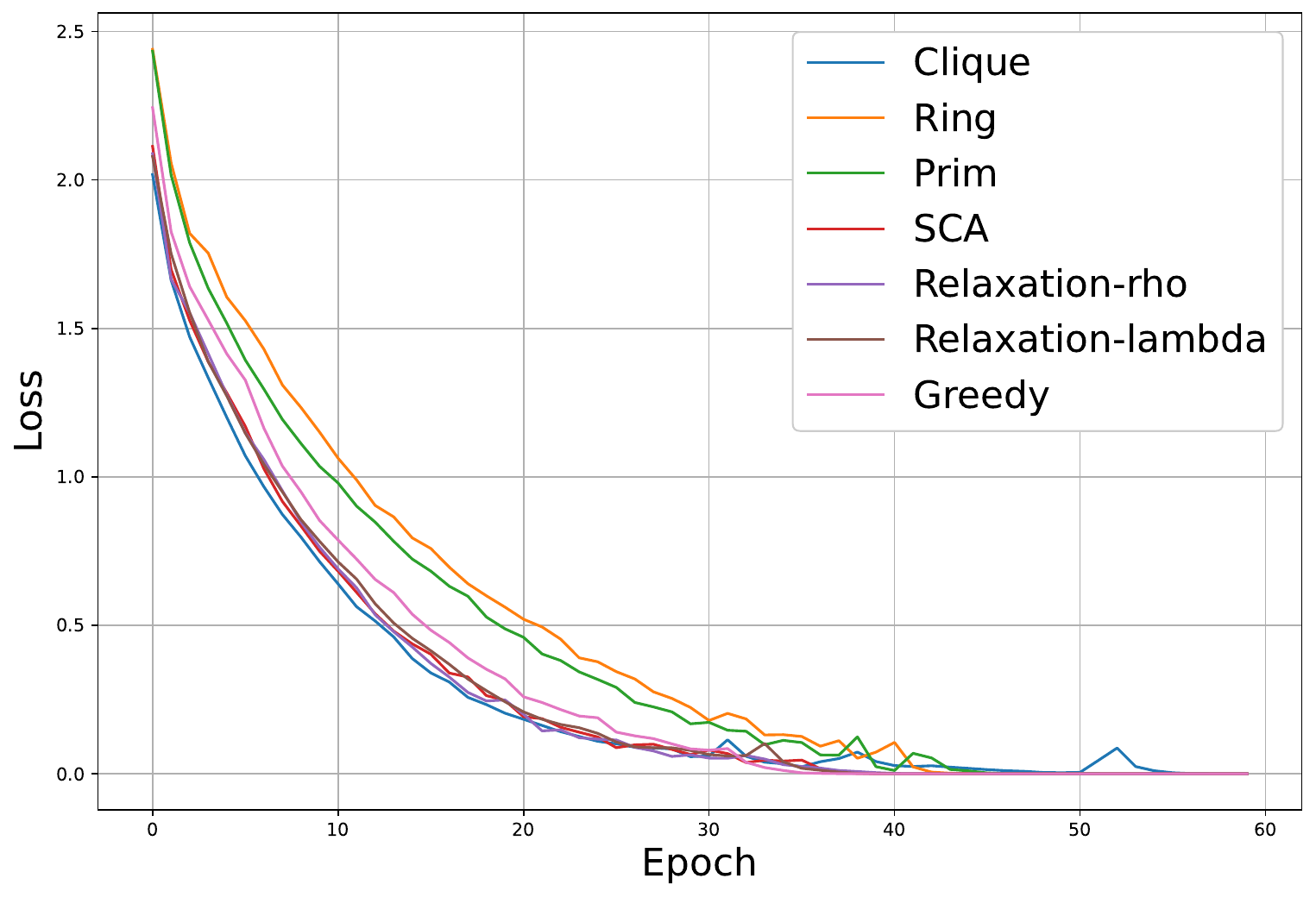}}
\vspace{-.1em}
%\centerline{\scriptsize (a) AboveNet: profit}
\end{minipage}
\begin{minipage}{.495\linewidth}
\centerline{
\includegraphics[width=1\linewidth]{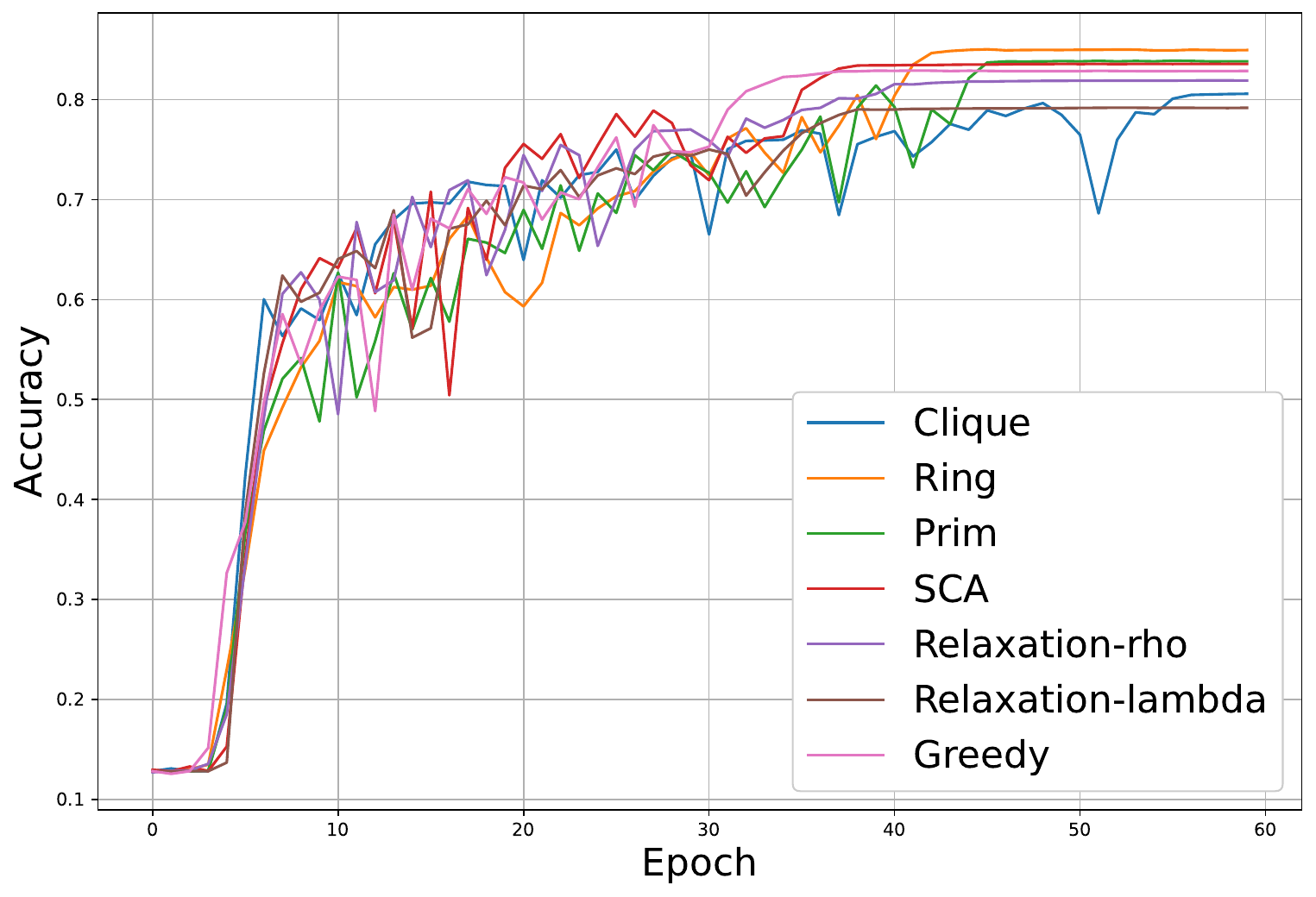}}
\vspace{-.1em}
%\centerline{\scriptsize (b) AboveNet: time}
\end{minipage}
\begin{minipage}{.495\linewidth}
\centerline{
\includegraphics[width=1\linewidth]{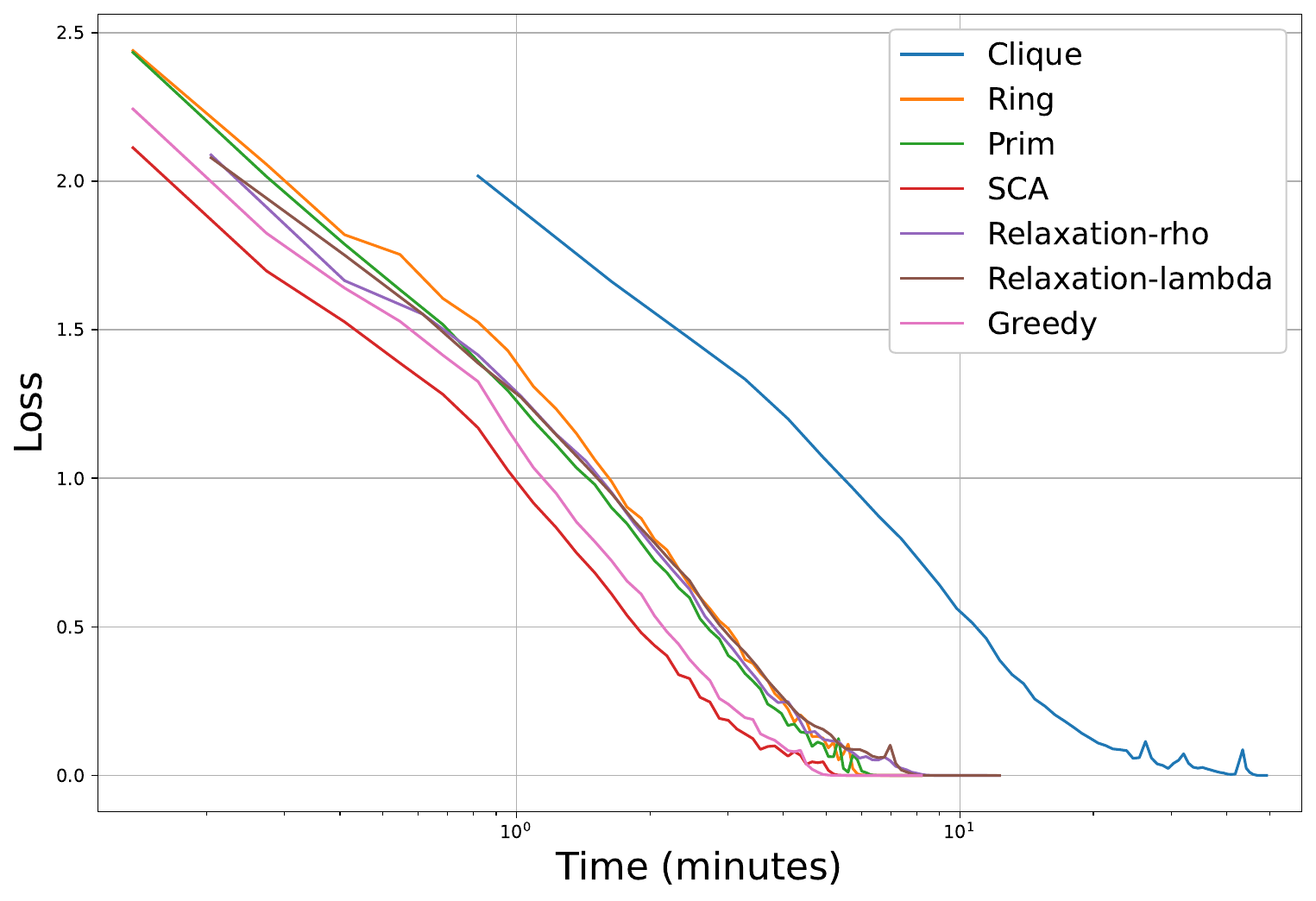}}
\vspace{-.1em}
%\centerline{\scriptsize (c) BellCanada: profit}
\end{minipage}
\begin{minipage}{.495\linewidth}
\centerline{
\includegraphics[width=1\linewidth]{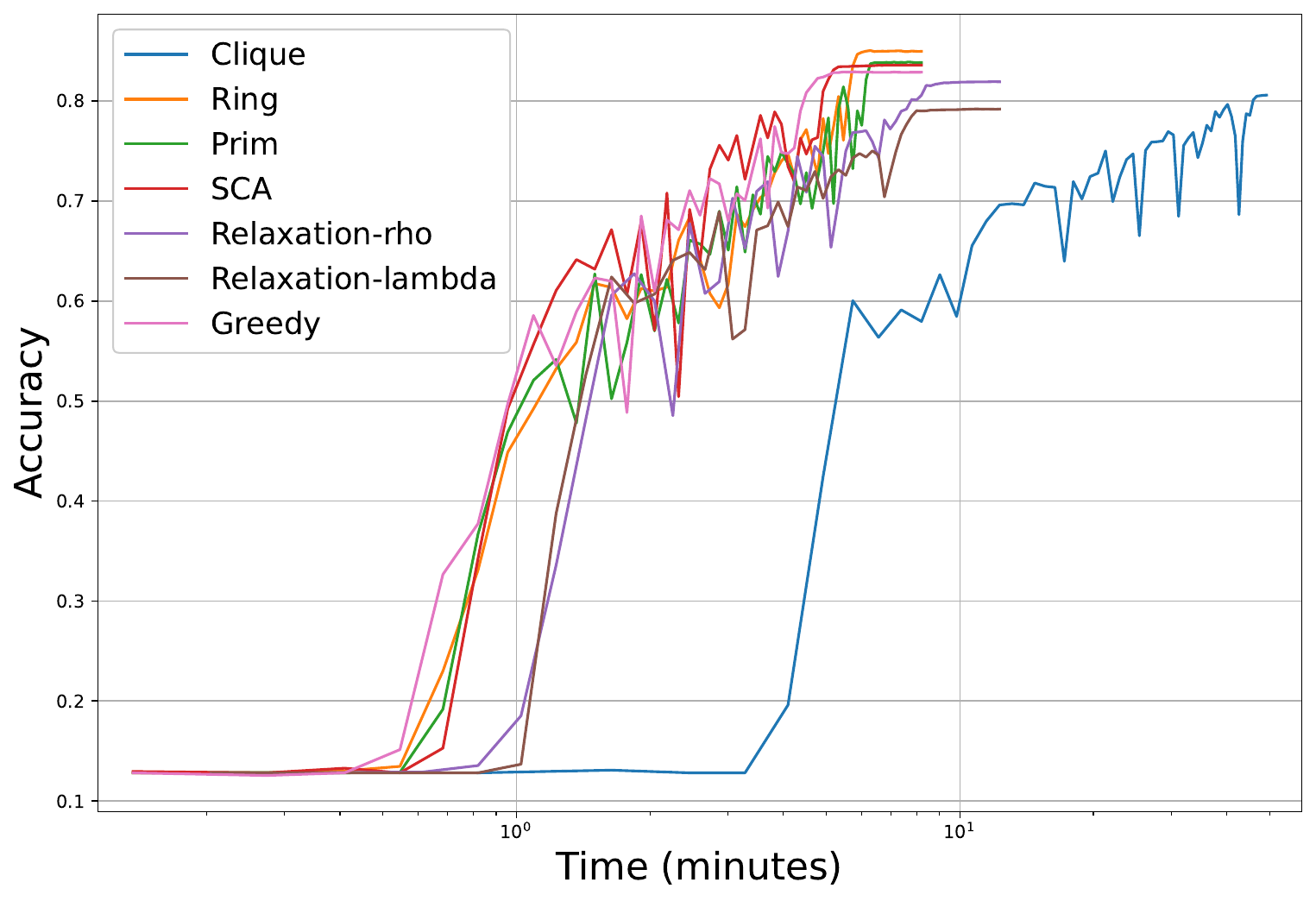}}
\vspace{-.1em}
%\centerline{\scriptsize (d) BellCanada: time}
\end{minipage}
\begin{minipage}{.495\linewidth}
\centerline{
\includegraphics[width=1\linewidth]{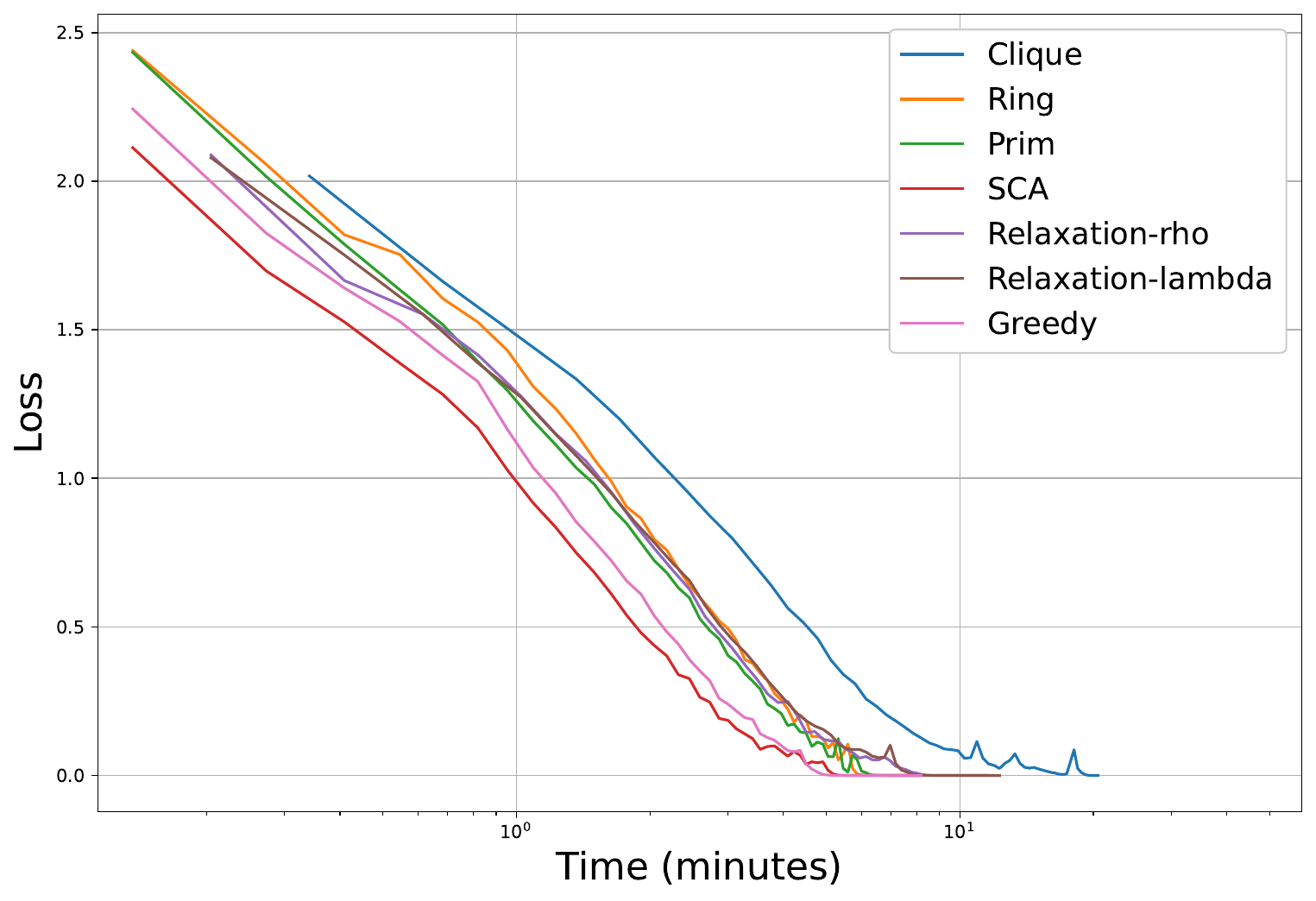}}
\vspace{-.1em}
%\centerline{\scriptsize (e) BellCanada: profit}
\end{minipage}
\begin{minipage}{.495\linewidth}
\centerline{
\includegraphics[width=1\linewidth]{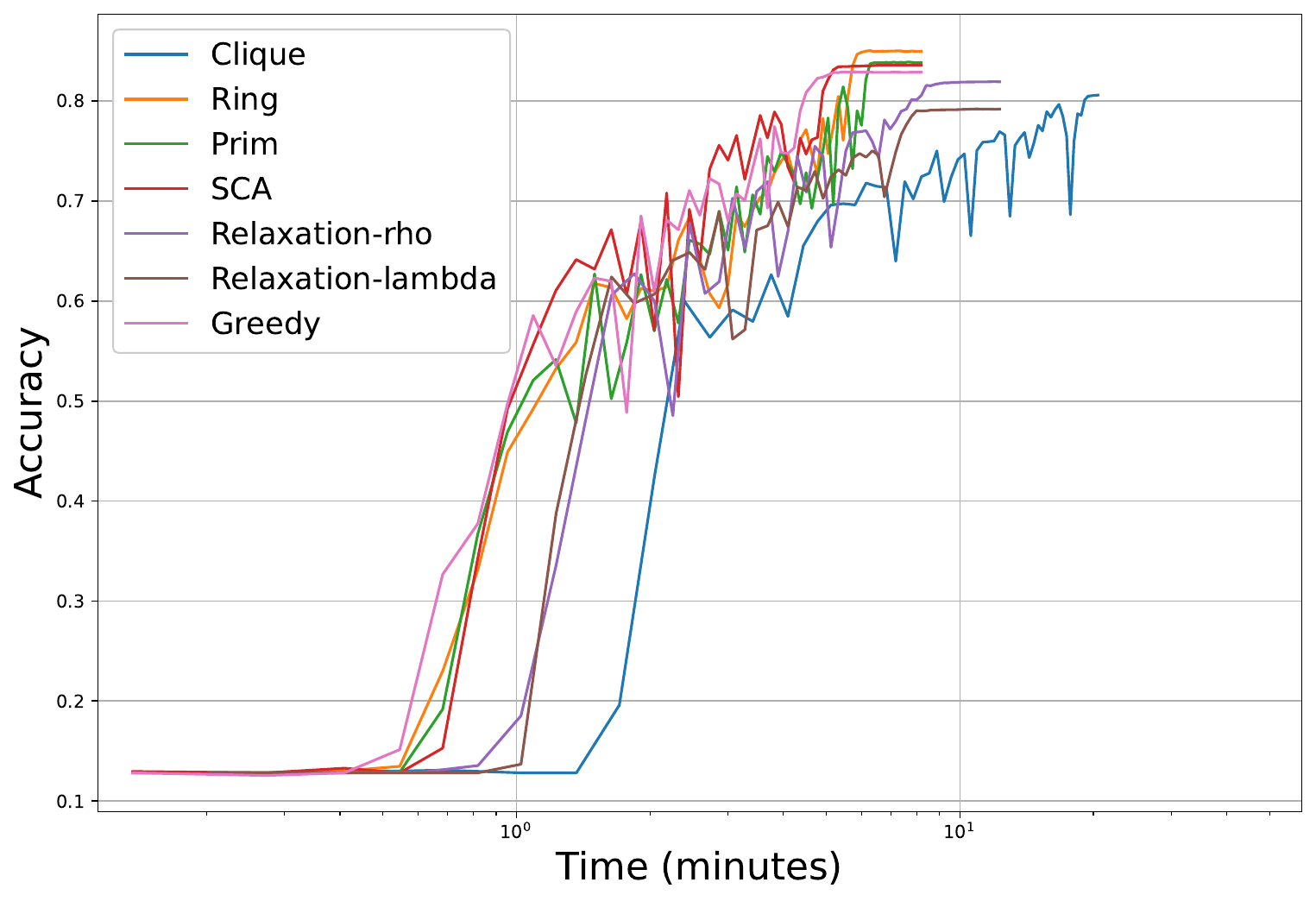}}
\vspace{-.1em}
%\centerline{\scriptsize (f) BellCanada: time}
\end{minipage}
\vspace{-1.25em}
\caption{CIFAR-10 over IAB (second row: time without overlay routing; third row: time with overlay routing). 
} \label{fig:IAB_CIFAR10}
\end{figure}

\begin{figure}[t!]
\begin{minipage}{.495\linewidth}
\centerline{
\includegraphics[width=1\linewidth]{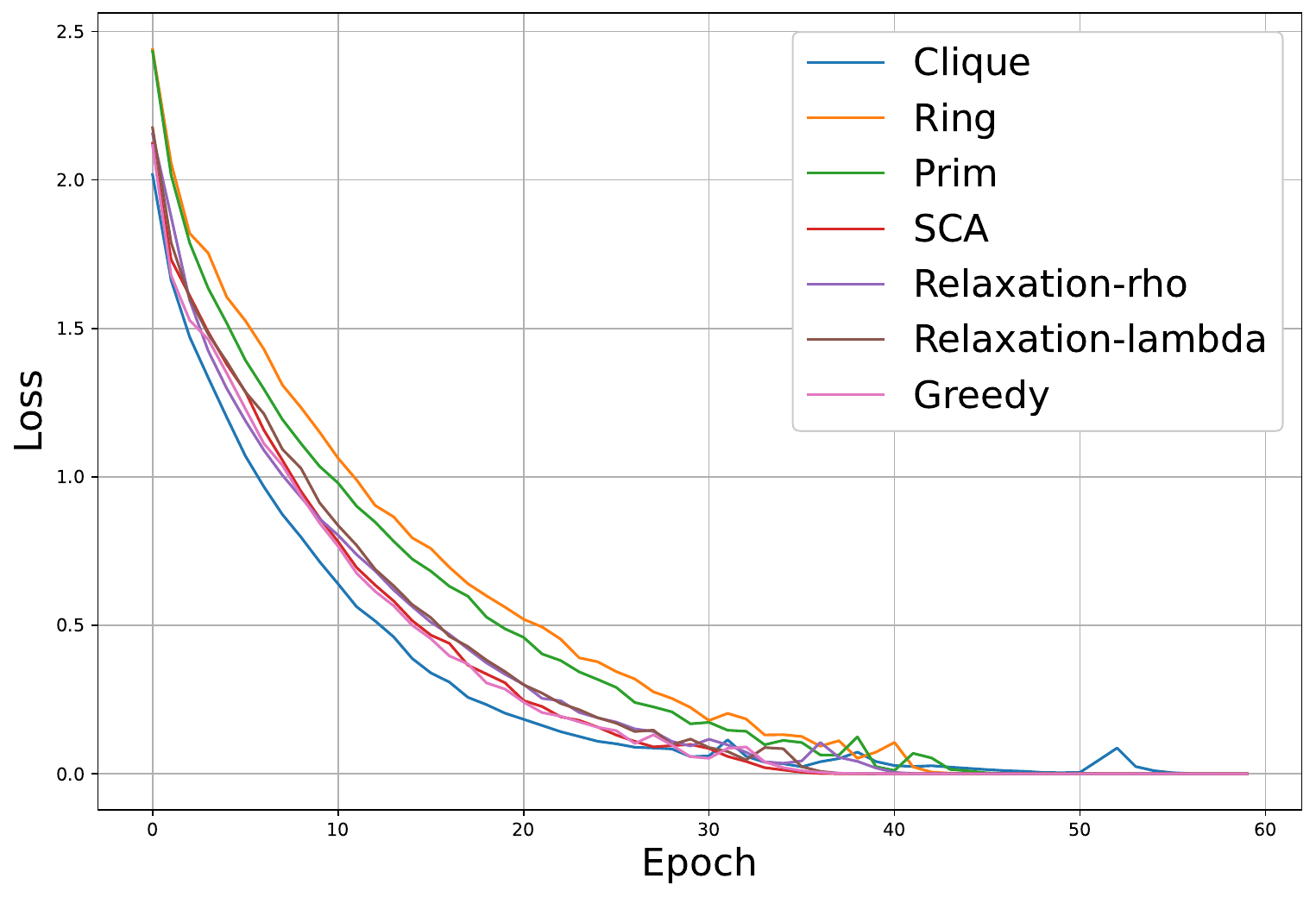}}
\vspace{-.1em}
%\centerline{\scriptsize (a) AboveNet: profit}
\end{minipage}
\begin{minipage}{.495\linewidth}
\centerline{
\includegraphics[width=1\linewidth]{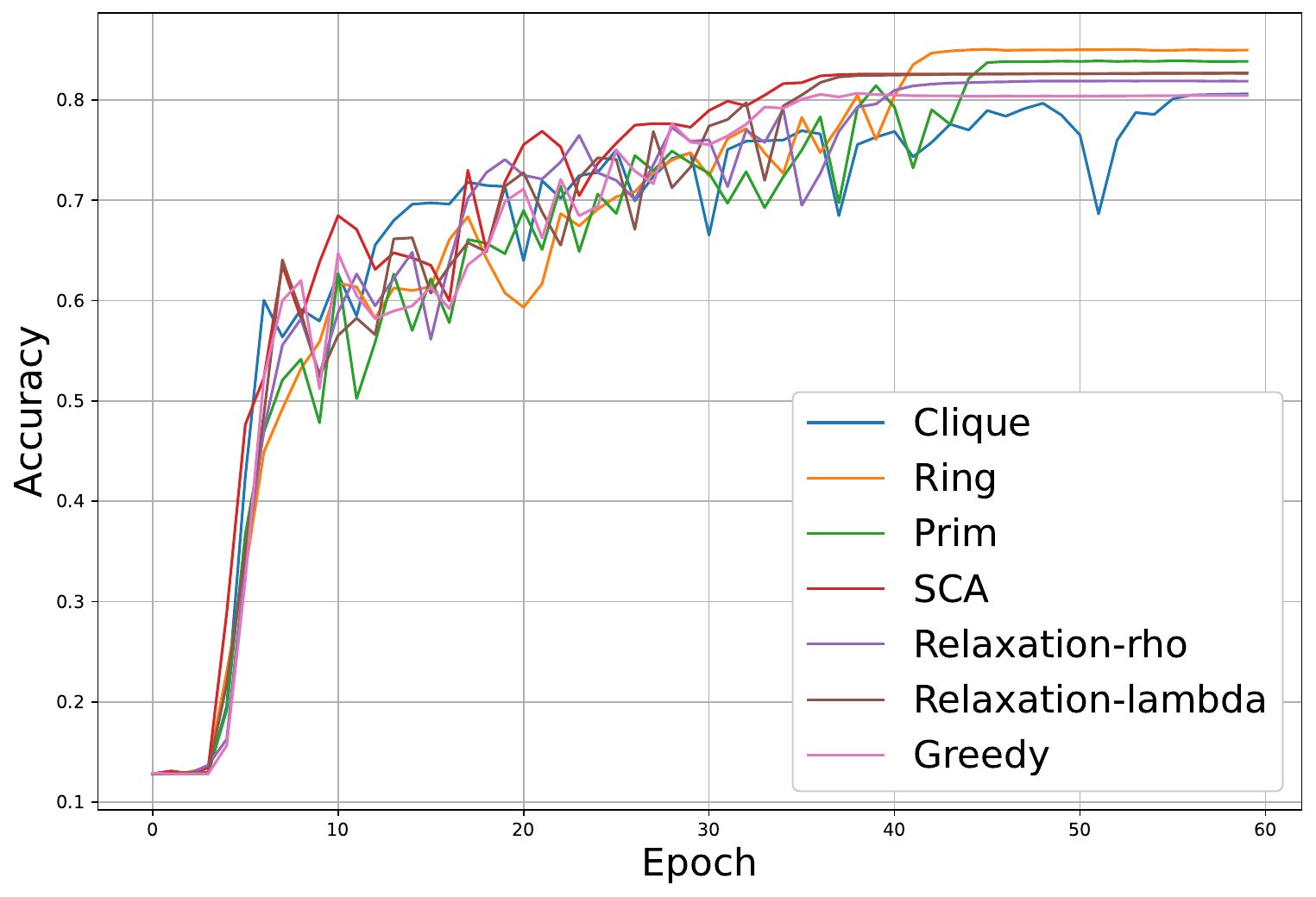}}
\vspace{-.1em}
%\centerline{\scriptsize (b) AboveNet: time}
\end{minipage}
\begin{minipage}{.495\linewidth}
\centerline{
\includegraphics[width=1\linewidth]{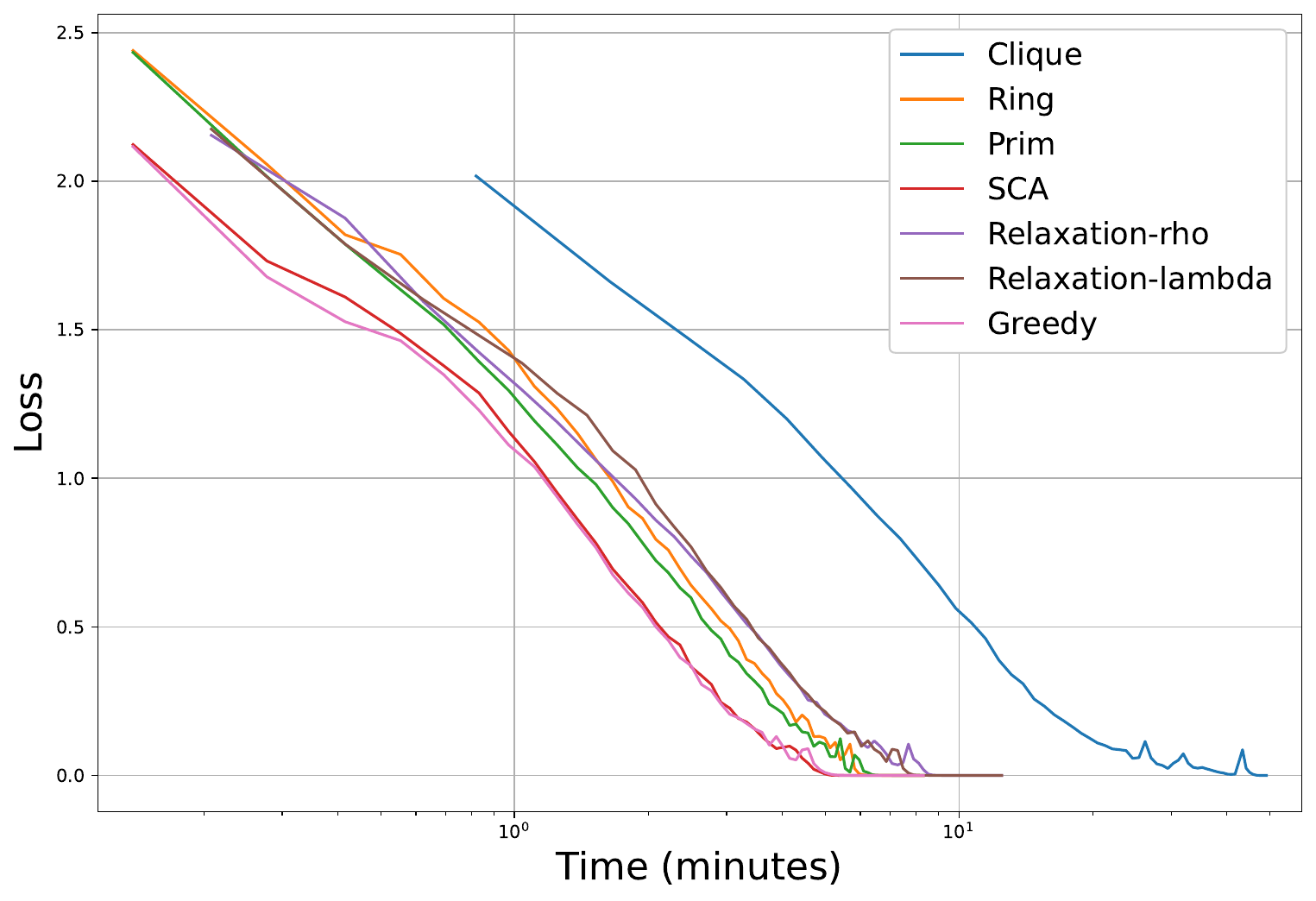}}
\vspace{-.1em}
%\centerline{\scriptsize (c) BellCanada: profit}
\end{minipage}
\begin{minipage}{.495\linewidth}
\centerline{
\includegraphics[width=1\linewidth]{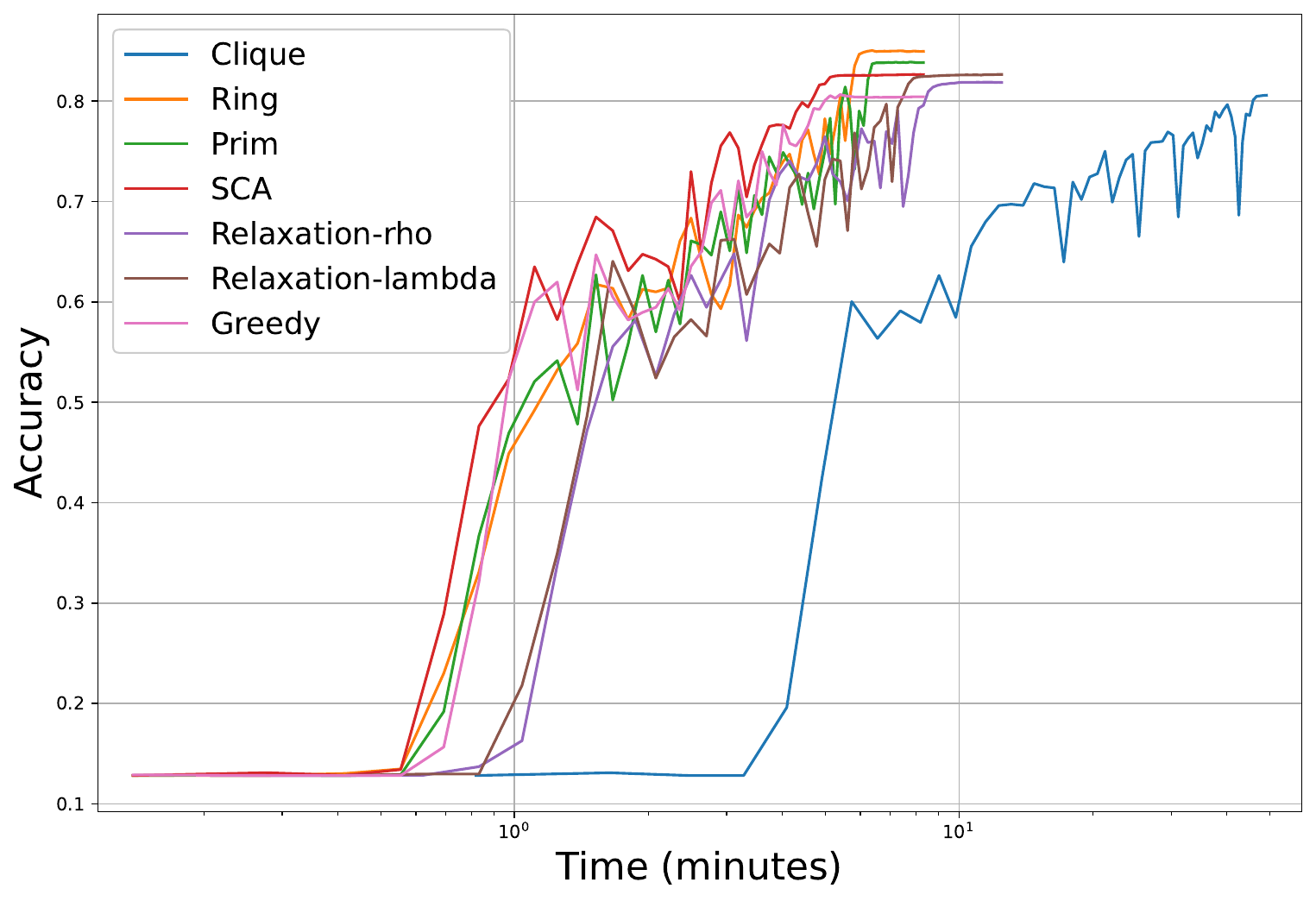}}
\vspace{-.1em}
%\centerline{\scriptsize (d) BellCanada: time}
\end{minipage}
\begin{minipage}{.495\linewidth}
\centerline{
\includegraphics[width=1\linewidth]{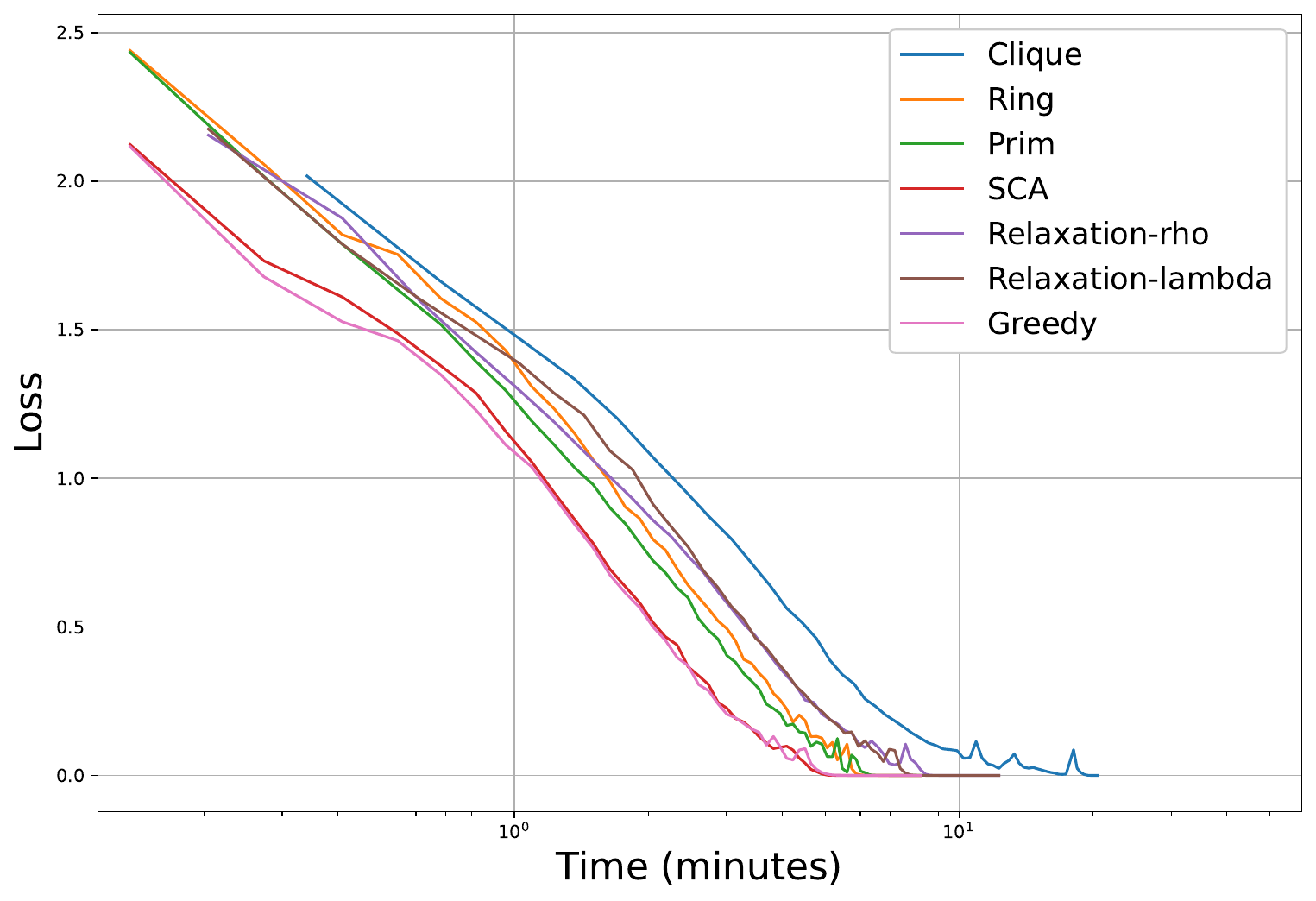}}
\vspace{-.1em}
%\centerline{\scriptsize (e) BellCanada: profit}
\end{minipage}
\begin{minipage}{.495\linewidth}
\centerline{
\includegraphics[width=1\linewidth]{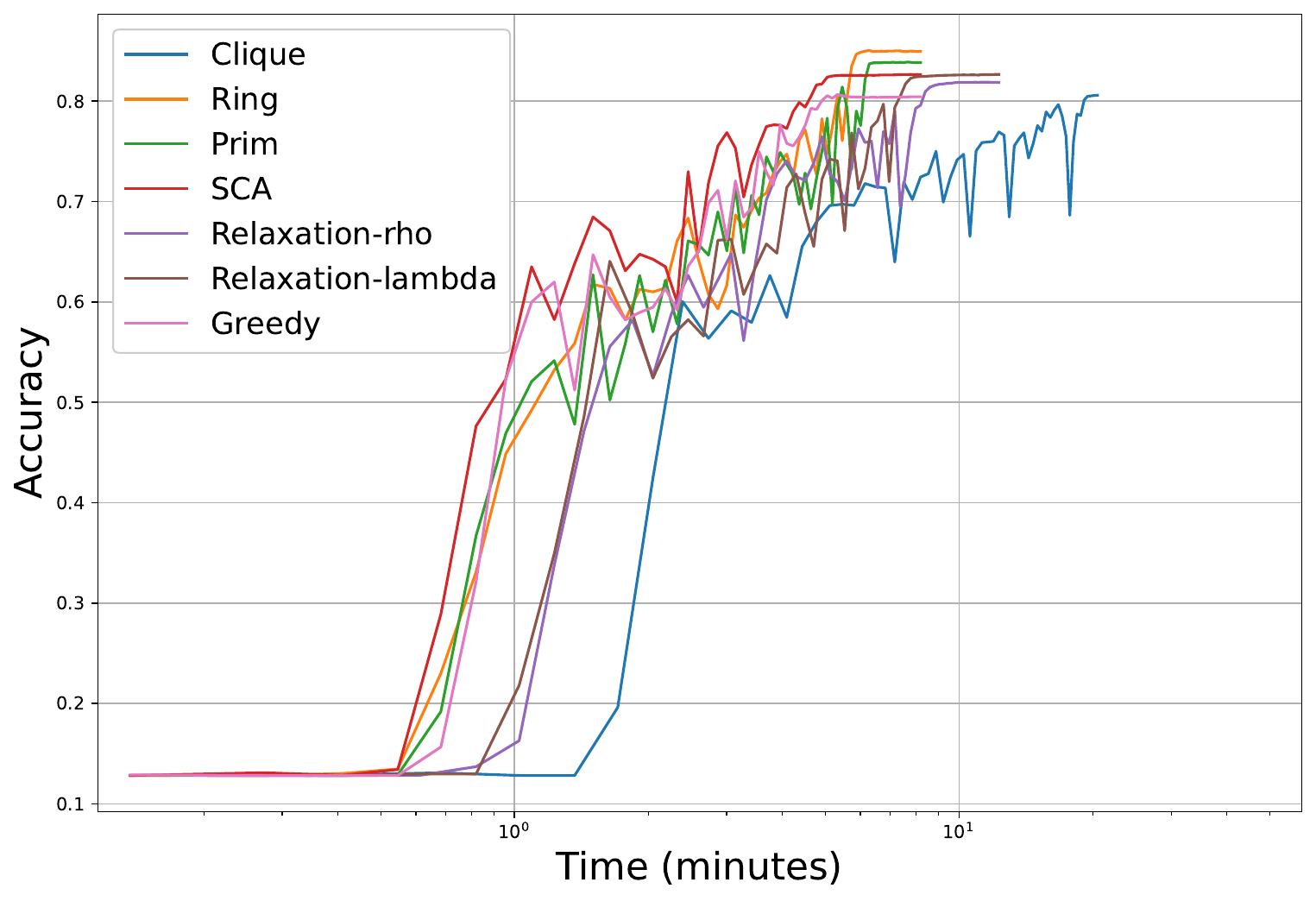}}
\vspace{-.1em}
%\centerline{\scriptsize (f) BellCanada: time}
\end{minipage}
\vspace{-1.25em}
\caption{CIFAR-10 over IAB with inference errors (second row: time without overlay routing; third row: time with overlay routing). 
} \label{fig:IAB_CIFAR10_inference}
\end{figure}

\begin{figure}[t!]
\begin{minipage}{.495\linewidth}
\centerline{
\includegraphics[width=1\linewidth]{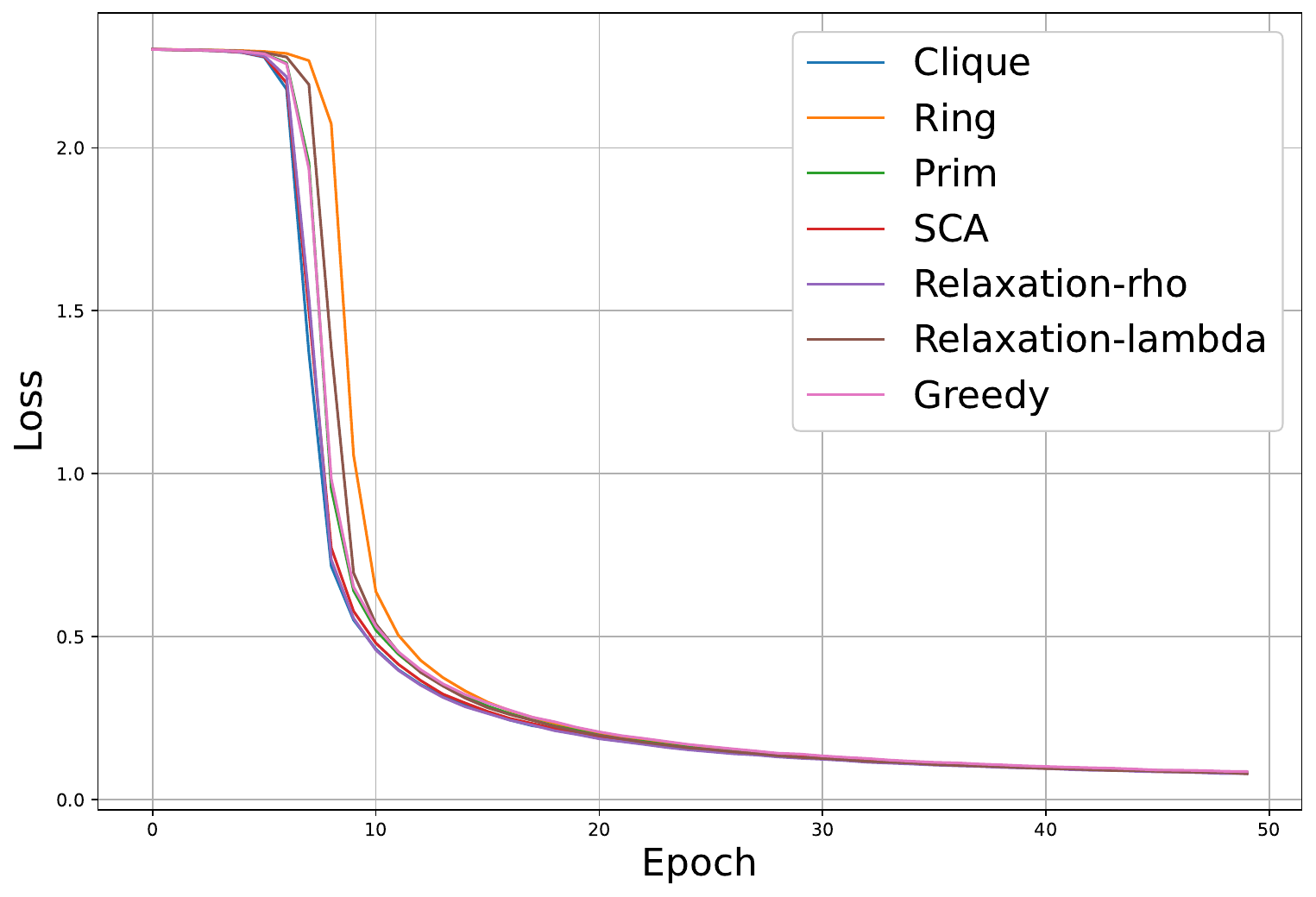}}
\vspace{-.1em}
%\centerline{\scriptsize (a) AboveNet: profit}
\end{minipage}
\begin{minipage}{.495\linewidth}
\centerline{
\includegraphics[width=1\linewidth]{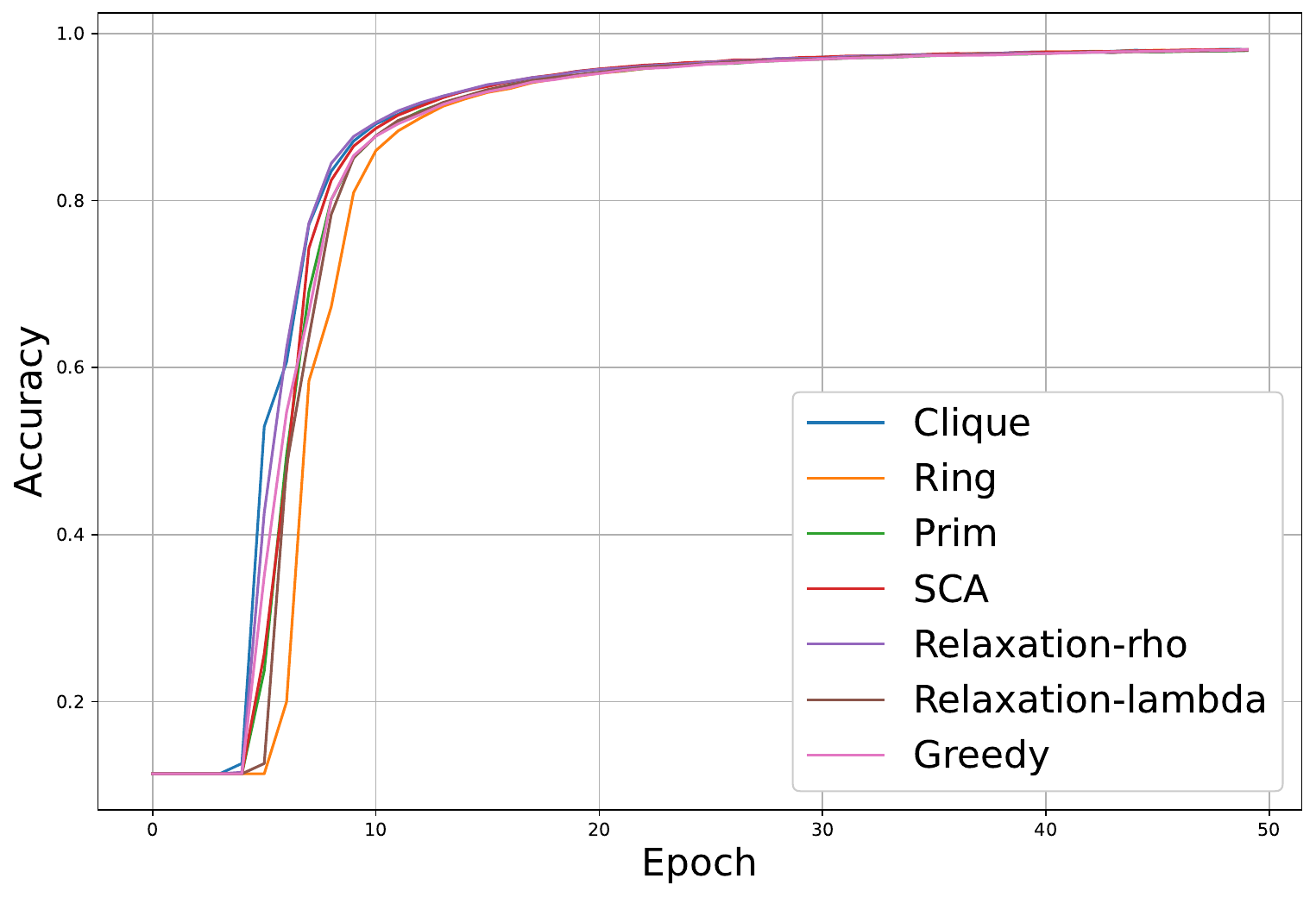}}
\vspace{-.1em}
%\centerline{\scriptsize (b) AboveNet: time}
\end{minipage}
\begin{minipage}{.495\linewidth}
\centerline{
\includegraphics[width=1\linewidth]{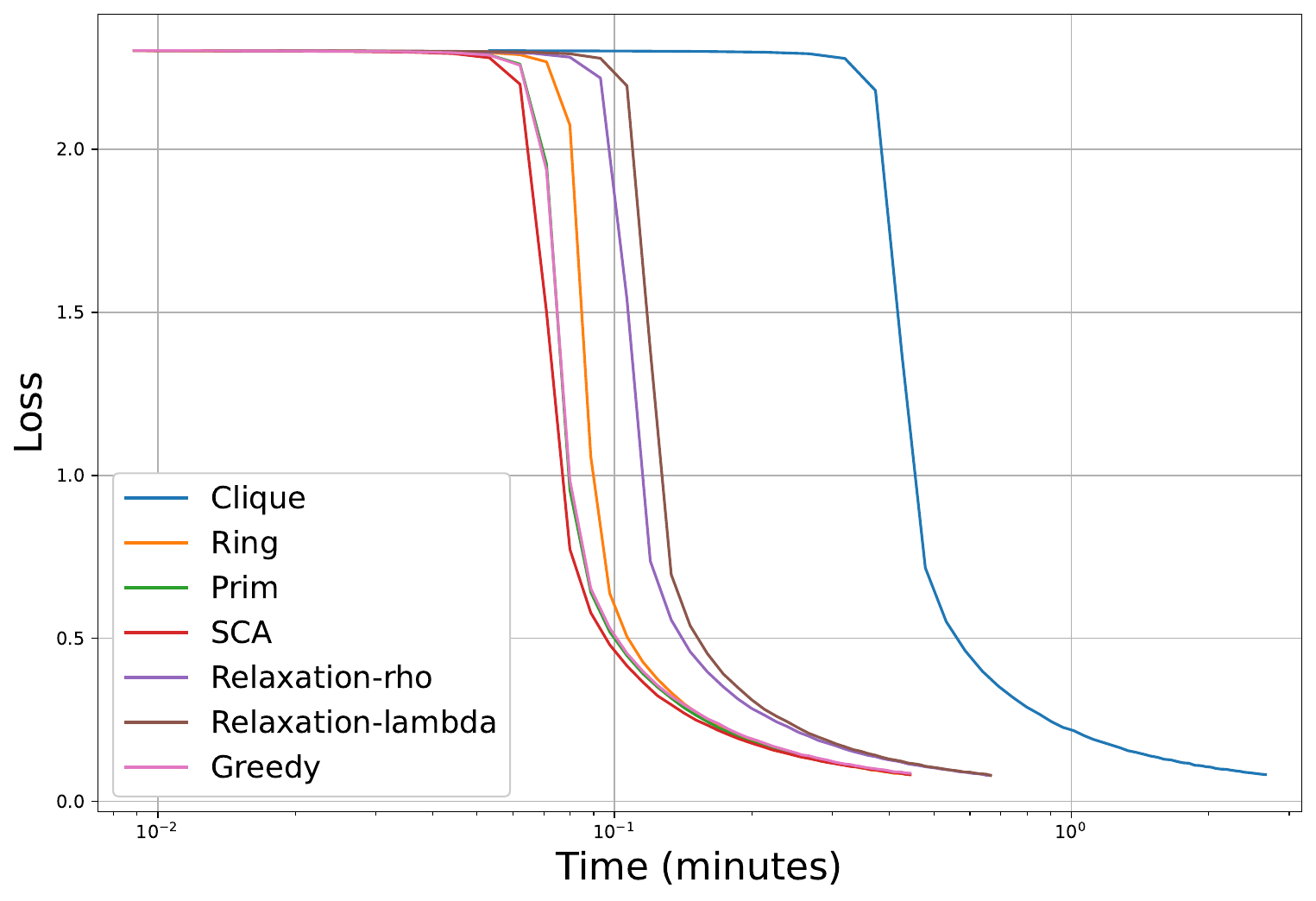}}
\vspace{-.1em}
%\centerline{\scriptsize (c) BellCanada: profit}
\end{minipage}
\begin{minipage}{.495\linewidth}
\centerline{
\includegraphics[width=1\linewidth]{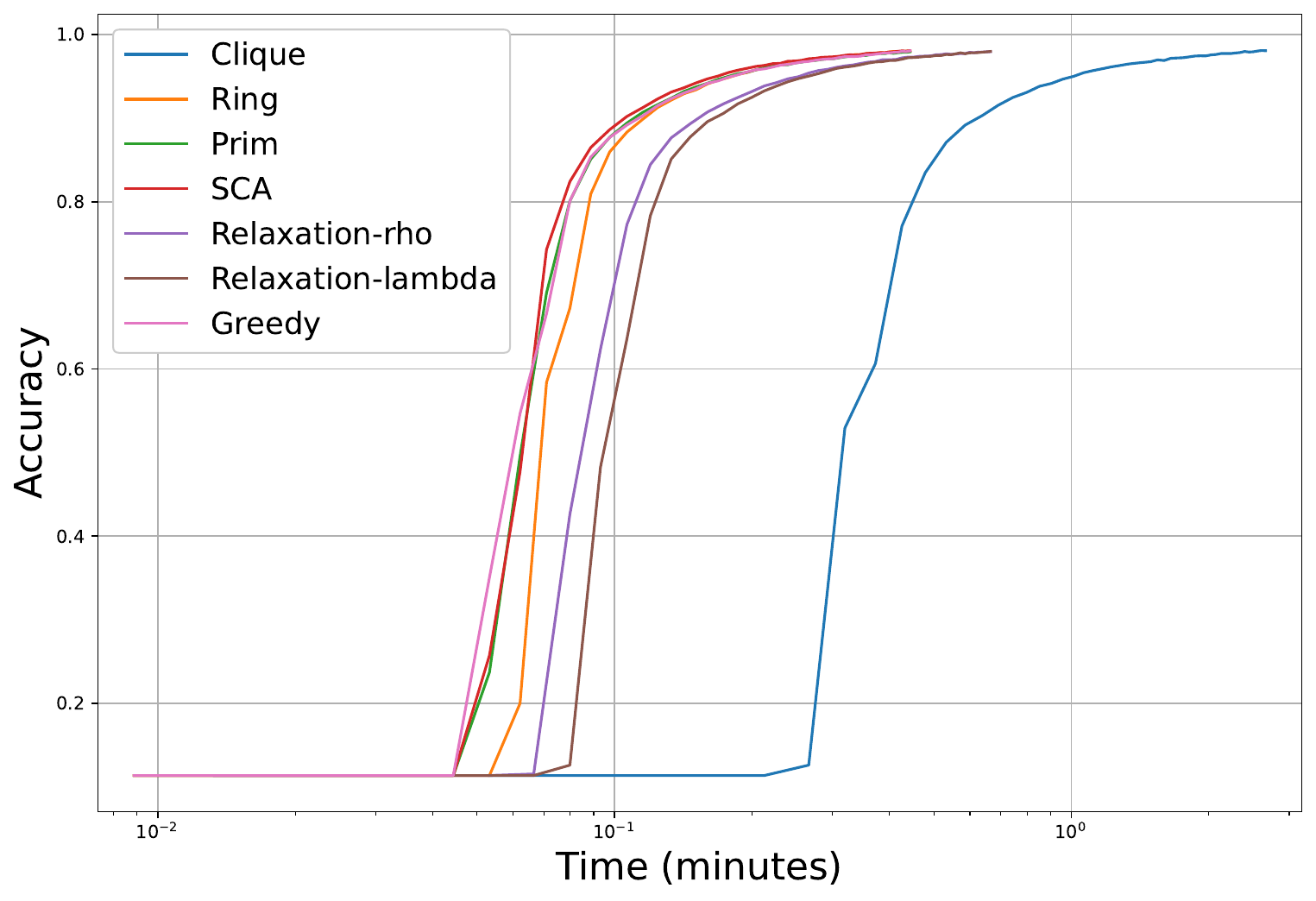}}
\vspace{-.1em}
%\centerline{\scriptsize (d) BellCanada: time}
\end{minipage}
\begin{minipage}{.495\linewidth}
\centerline{
\includegraphics[width=1\linewidth]{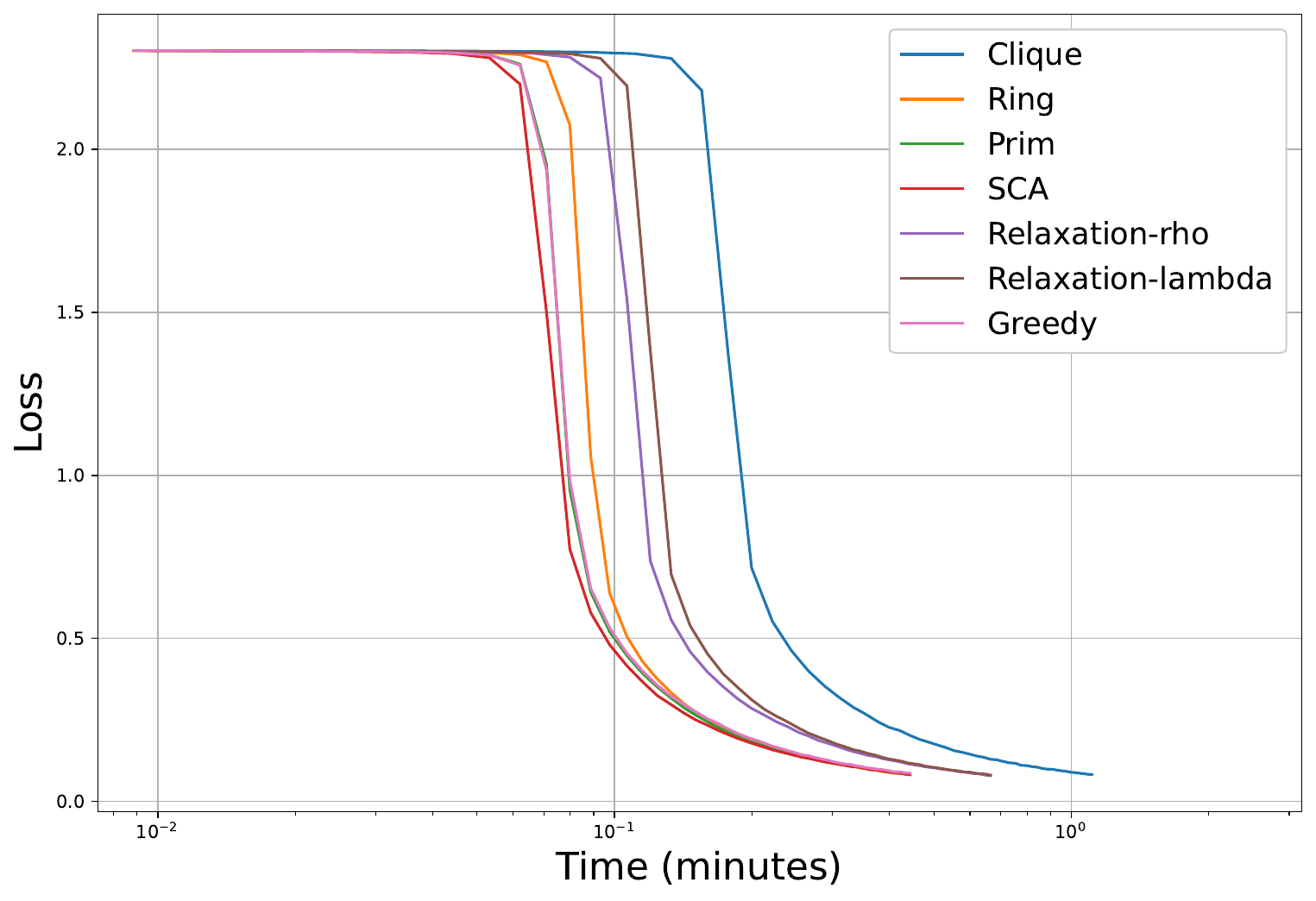}}
\vspace{-.1em}
%\centerline{\scriptsize (e) BellCanada: profit}
\end{minipage}
\begin{minipage}{.495\linewidth}
\centerline{
\includegraphics[width=1\linewidth]{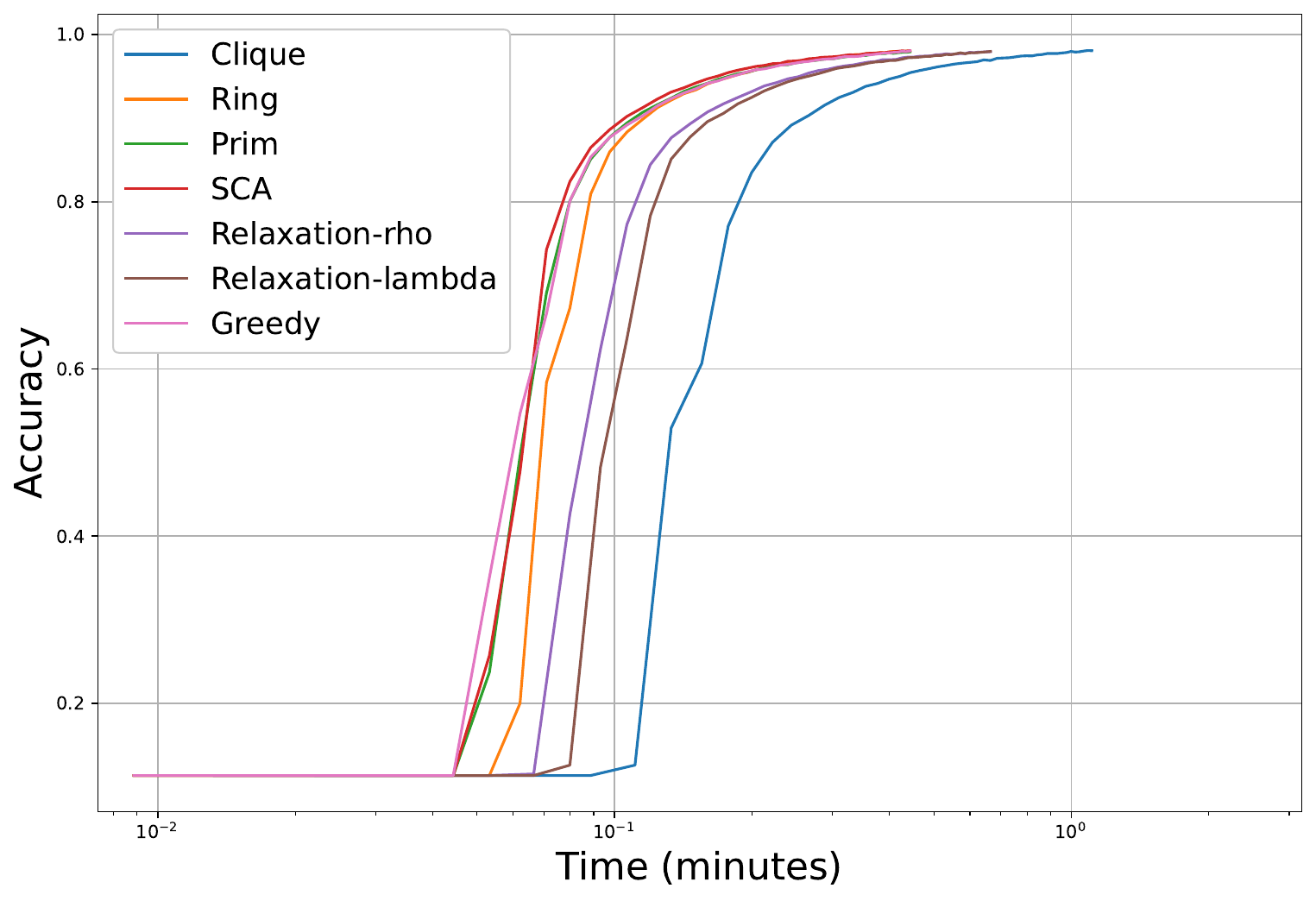}}
\vspace{-.1em}
%\centerline{\scriptsize (f) BellCanada: time}
\end{minipage}
\vspace{-1.25em}
\caption{MNIST over IAB (second row: time without overlay routing; third row: time with overlay routing). 
} \label{fig:IAB_MNIST}
\vspace{-.5em}
\end{figure}

\begin{figure}[t!]
\begin{minipage}{.495\linewidth}
\centerline{
\includegraphics[width=1\linewidth]{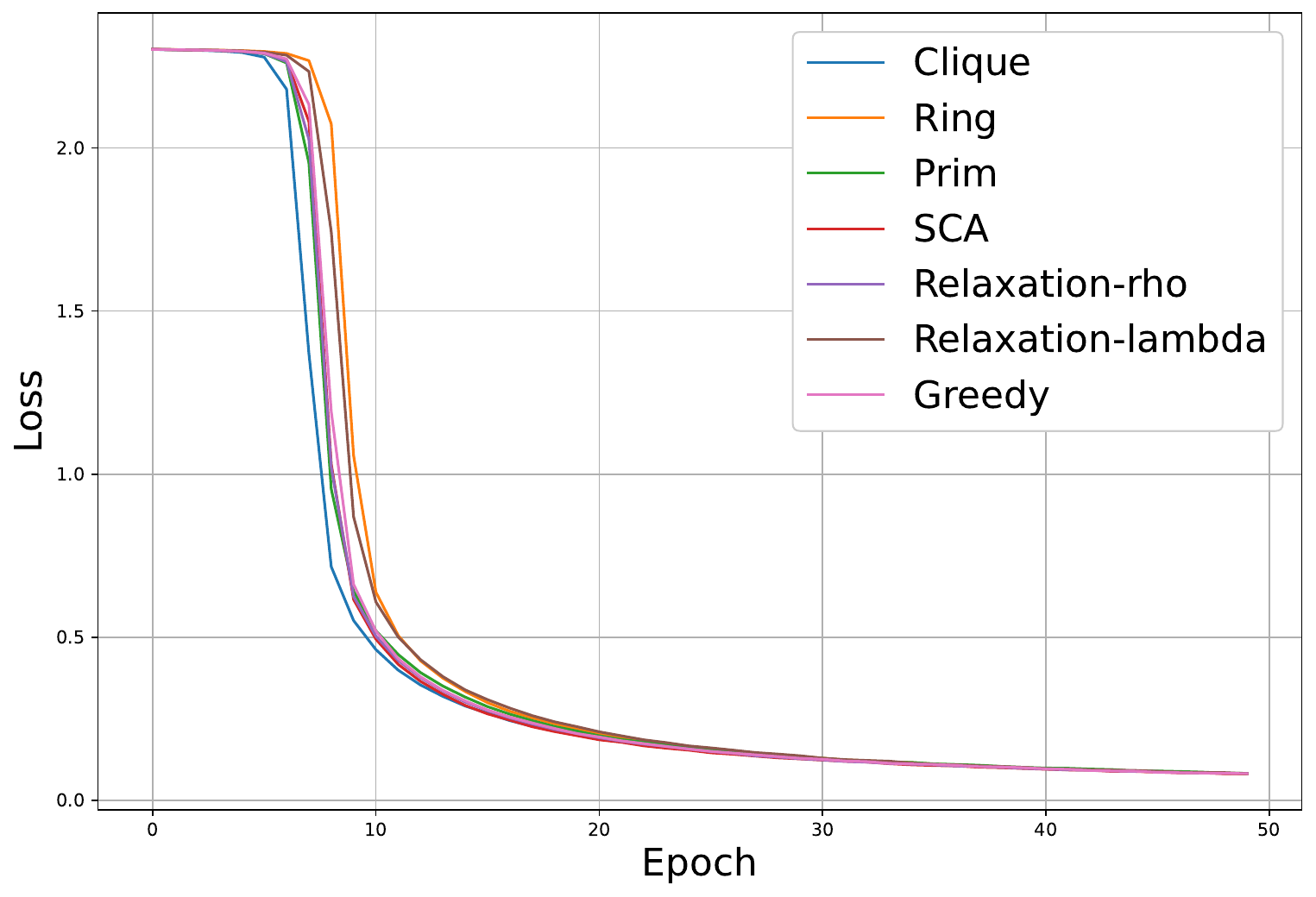}}
\vspace{-.1em}
%\centerline{\scriptsize (a) AboveNet: profit}
\end{minipage}
\begin{minipage}{.495\linewidth}
\centerline{
\includegraphics[width=1\linewidth]{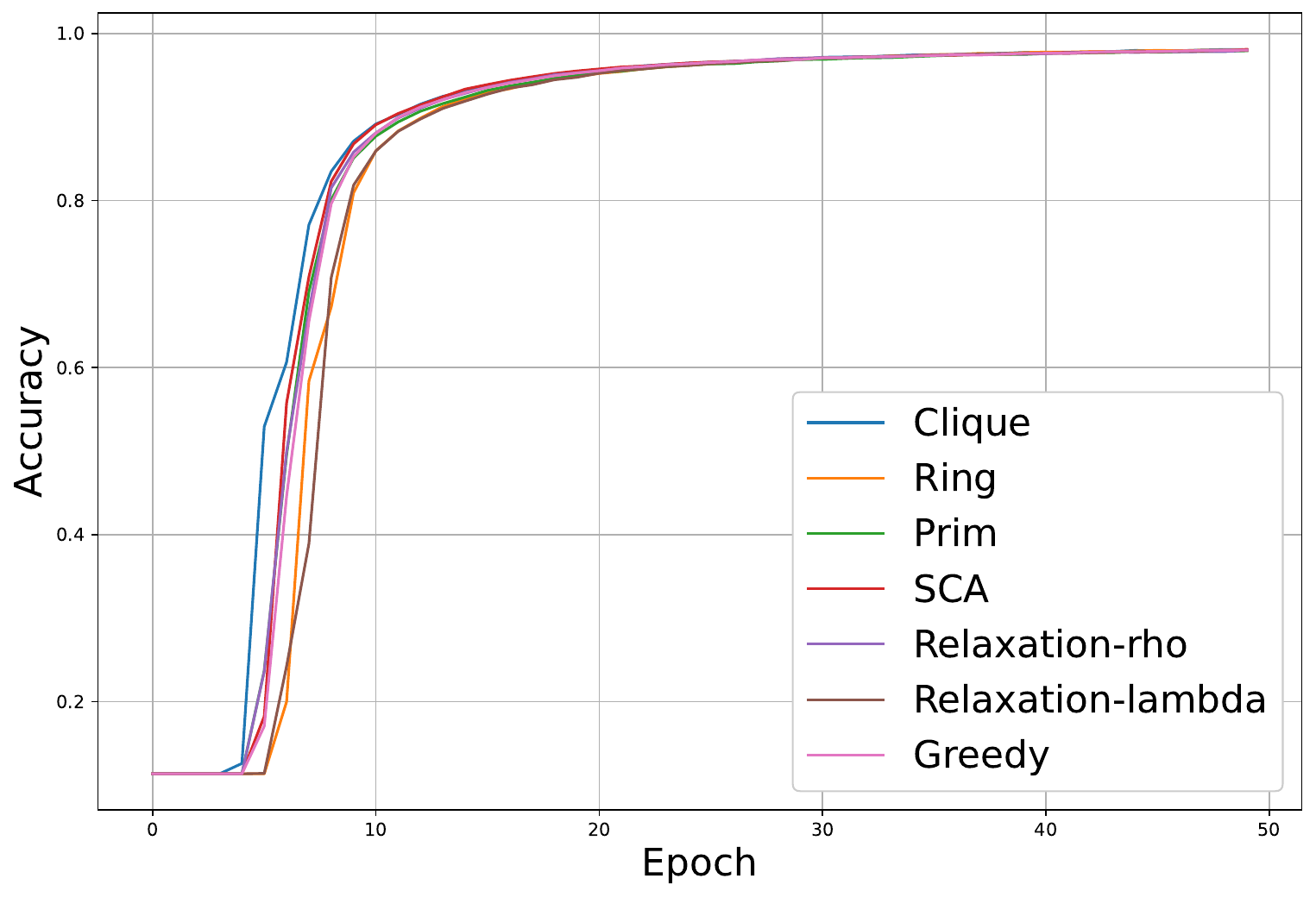}}
\vspace{-.1em}
%\centerline{\scriptsize (b) AboveNet: time}
\end{minipage}
\begin{minipage}{.495\linewidth}
\centerline{
\includegraphics[width=1\linewidth]{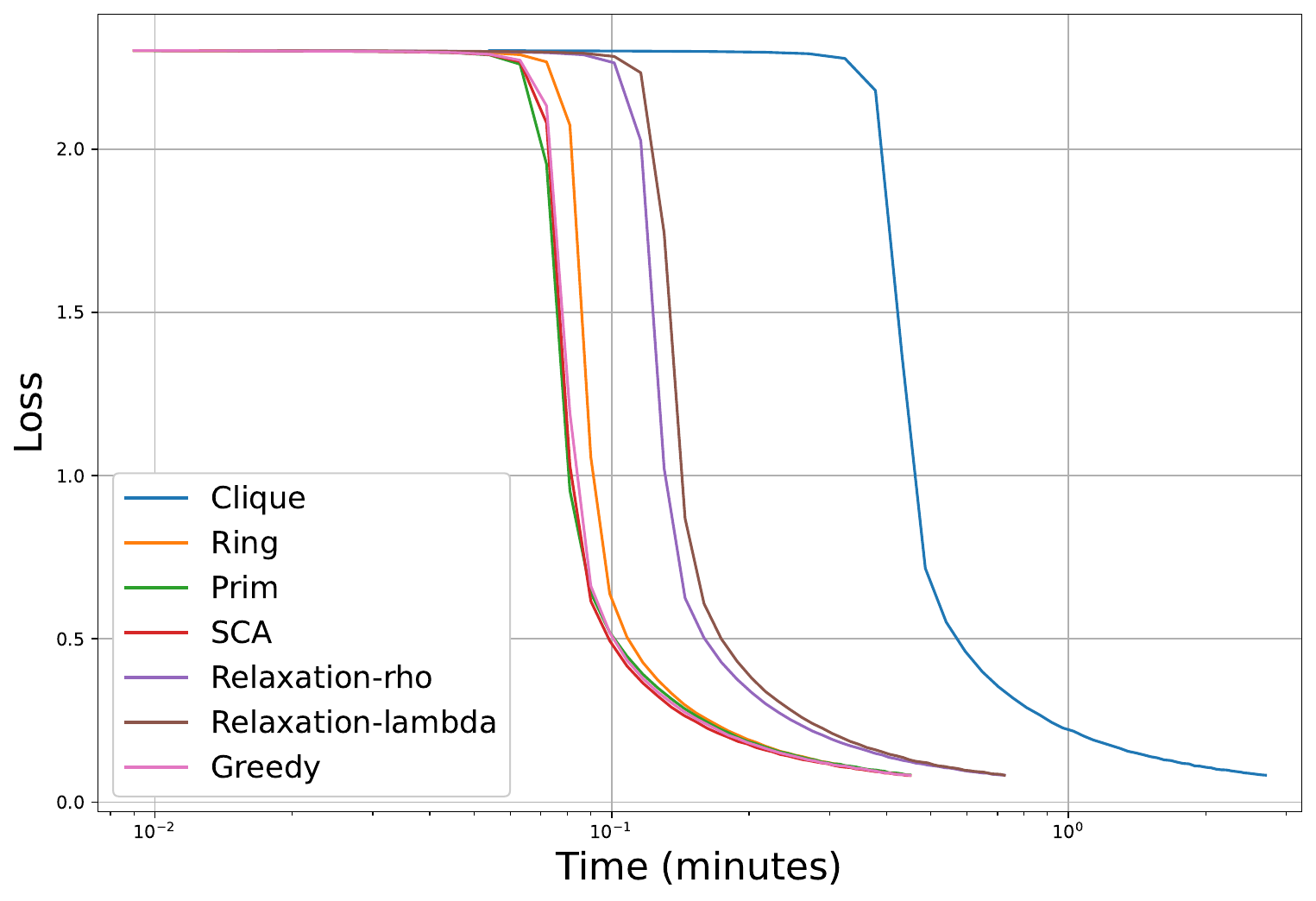}}
\vspace{-.1em}
%\centerline{\scriptsize (c) BellCanada: profit}
\end{minipage}
\begin{minipage}{.495\linewidth}
\centerline{
\includegraphics[width=1\linewidth]{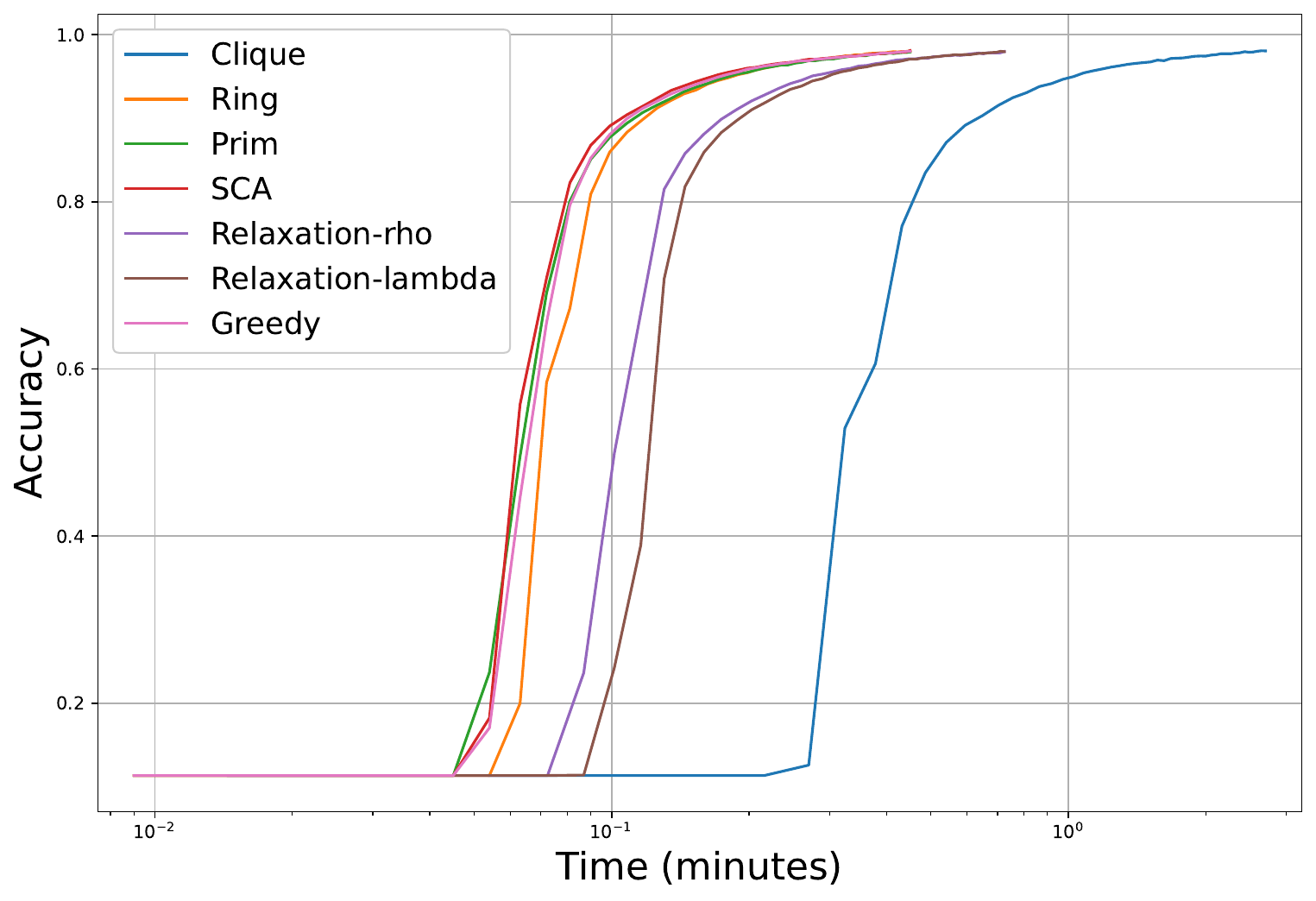}}
\vspace{-.1em}
%\centerline{\scriptsize (d) BellCanada: time}
\end{minipage}
\begin{minipage}{.495\linewidth}
\centerline{
\includegraphics[width=1\linewidth]{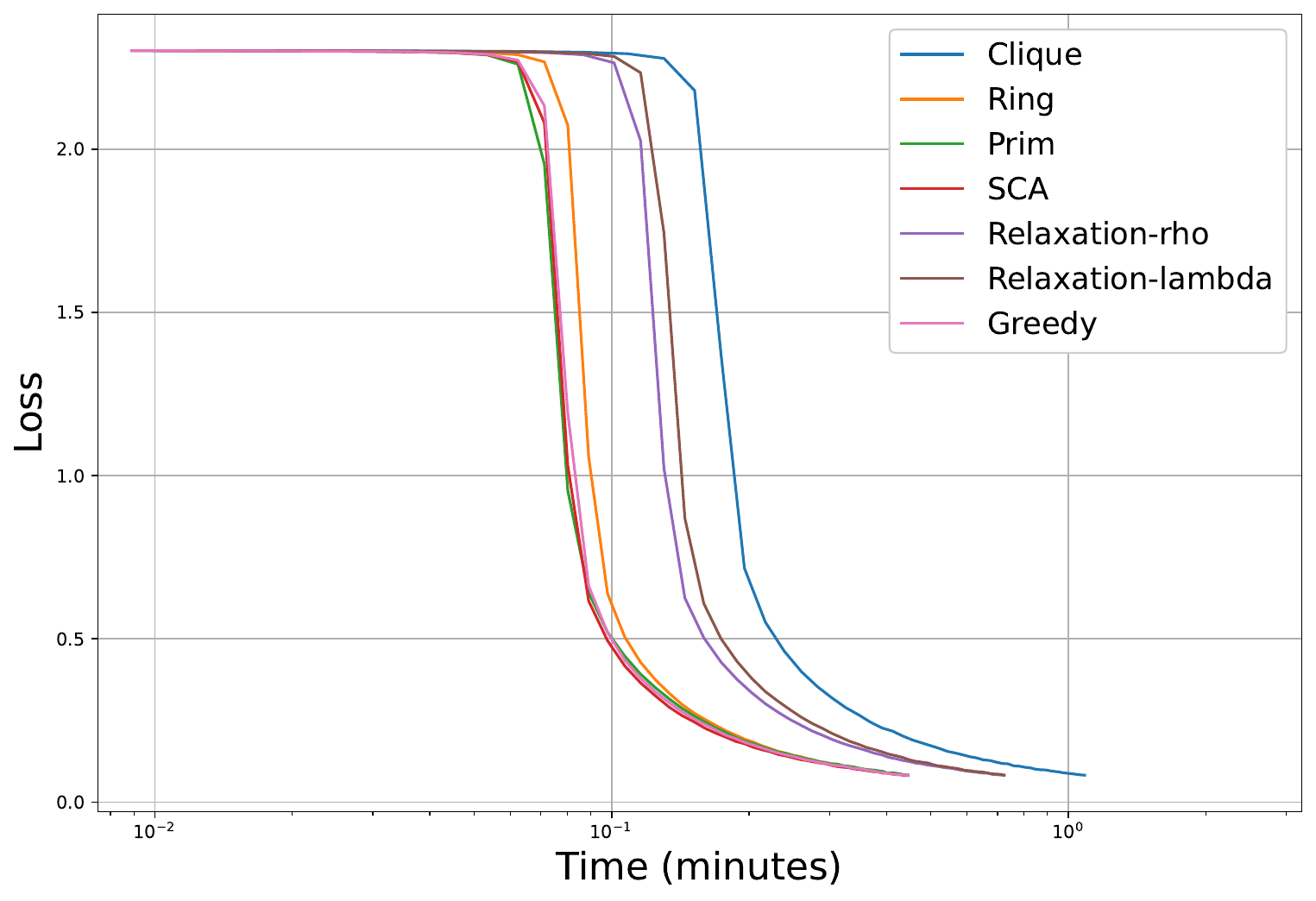}}
\vspace{-.1em}
%\centerline{\scriptsize (e) BellCanada: profit}
\end{minipage}
\begin{minipage}{.495\linewidth}
\centerline{
\includegraphics[width=1\linewidth]{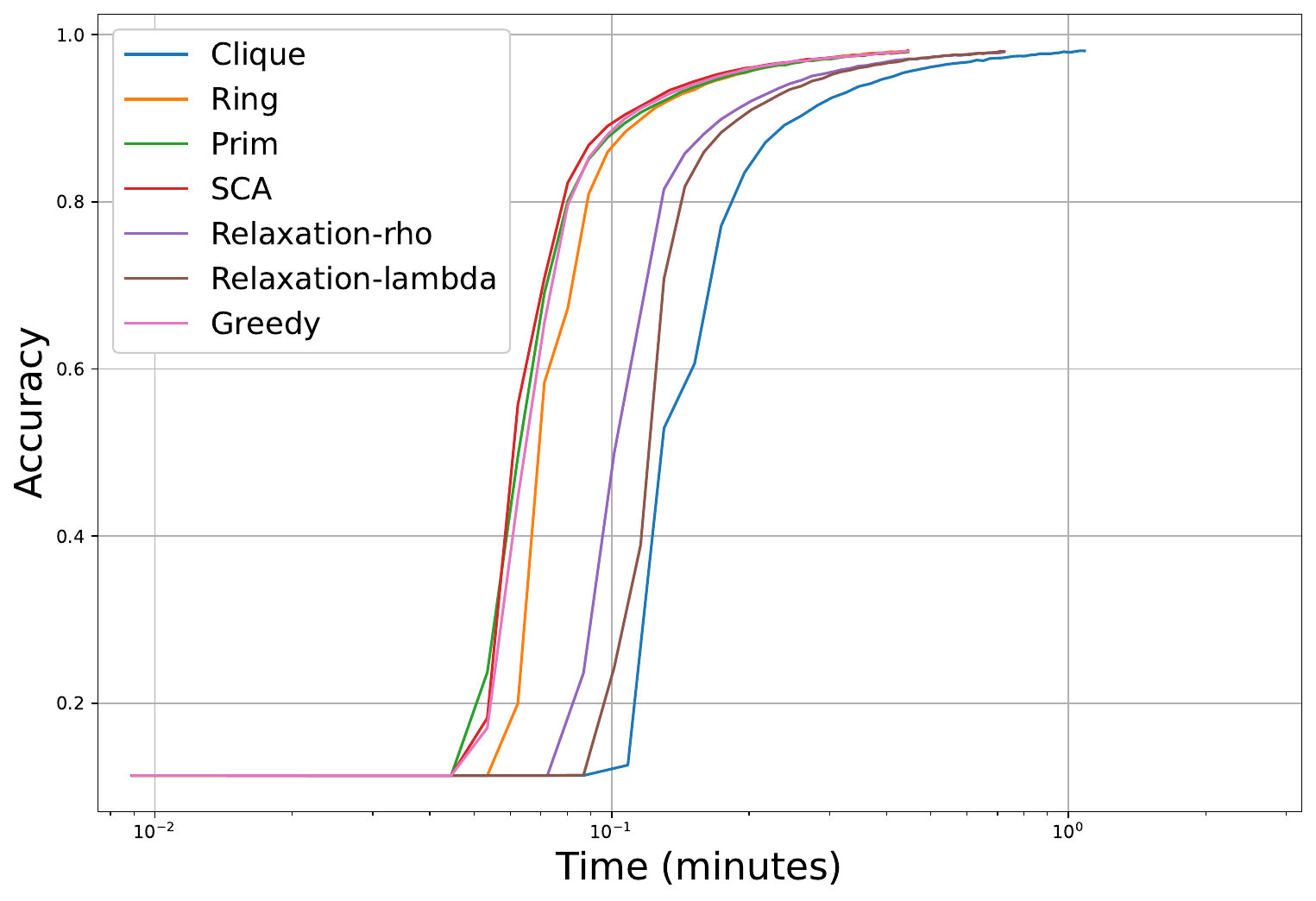}}
\vspace{-.1em}
%\centerline{\scriptsize (f) BellCanada: time}
\end{minipage}
\vspace{-1.25em}
\caption{MNIST over IAB with inference errors (second row: time without overlay routing; third row: time with overlay routing). 
} \label{fig:IAB_MNIST_inference}
\vspace{-.5em}
\end{figure}

\subsection{Summary of Results}

Tables~\ref{tab:cifar10-roofnet}--\ref{tab:mnist-iab} summarize the above results in terms of the total wall clock time for DFL to reach convergence, defined as the point at which the variance of accuracy within a sliding window falls below a specified threshold\footnote{Specifically, we consider convergence to be achieved when the variance of accuracy within a window of three consecutive epochs remains below $0.005$ for at least three consecutive windows.}. The results clearly show that (i) using a sparse topology for parameter exchange can significantly reduce the training time compared to using the complete topology, (ii) overlay routing can further improve the training time, although the improvement is much less, and (iii) the state-of-the-art network tomography algorithm \cite{Huang24TONsub} is accurate enough to support the optimization.

\begin{table}[t!]
\centering
\begin{tabular}{lcccc}
\toprule
 & \multicolumn{2}{c}{With routing} & \multicolumn{2}{c}{Without routing} \\ \cmidrule(r){2-3} \cmidrule(r){4-5}
 & W/o error & W/ error & W/o error & W/ error \\ \midrule
Greedy & 2.5909 & 2.6874 & 2.7034 & 2.7429 \\ 
SCA & 0.6586 & 0.6586 & 0.6717 & 0.6717 \\ 
Relaxation-$\rho$ & 2.1694 & 2.2477 & 2.2610 & 2.2940 \\ 
Relaxation-$\lambda$ & 2.6171 & 2.6132 & 2.7535 & 2.7525 \\ 
Prim & 1.4306 & 1.4306 & 1.4746 & 1.4746 \\ 
Ring & 1.3352 & 1.3352 & 1.3763 & 1.3763 \\ 
Clique & 5.4787 & 5.4787 & 7.8154 & 7.8154 \\ 
\bottomrule
\end{tabular}
\vspace{0.25cm}
\caption{CIFAR10-Roofnet: total wall clock time in seconds (all values \(\times 10^5\)).}
\label{tab:cifar10-roofnet}
\end{table}

\begin{table}[t!]
\centering
\begin{tabular}{lcccc}
\toprule
 & \multicolumn{2}{c}{With routing} & \multicolumn{2}{c}{Without routing} \\ \cmidrule(r){2-3} \cmidrule(r){4-5}
 & W/o error & W/ error & W/o error & W/ error \\ \midrule
Greedy & 7.9746 & 7.9725 & 8.3035 & 8.3035 \\ 
SCA & 5.3744 & 5.3744 & 5.5357 & 5.5357 \\ 
Relaxation-$\rho$ & 7.9767 & 7.9740 & 8.3035 & 8.3083 \\ 
Relaxation-$\lambda$ & 14.9229 & 14.9106 & 15.9683 & 15.9680 \\ 
Prim & 7.5686 & 7.5481 & 7.8568 & 7.8572 \\ 
Ring & 7.2684 & 7.2597 & 7.5546 & 7.5549 \\ 
Clique & 20.8054 & 20.8011 & 23.0065 & 22.9944 \\ 
\bottomrule
\end{tabular}
\vspace{0.25cm}
\caption{MNIST-Roofnet: total wall clock time in seconds (all values \(\times 10^3\)).}
\label{tab:mnist-roofnet}
\end{table}

\begin{table}[t!]
\centering
\begin{tabular}{lcccc}
\toprule
 & \multicolumn{2}{c}{With routing} & \multicolumn{2}{c}{Without routing} \\ \cmidrule(r){2-3} \cmidrule(r){4-5}
 & W/o error & W/ error & W/o error & W/ error \\ \midrule
Greedy & 3.0271 & 3.0308 & 3.0310 & 3.0767 \\ 
SCA & 3.4362 & 3.4404 & 3.4406 & 3.4926 \\ 
Relaxation-$\rho$ & 5.0376 & 5.0376 & 5.0381 & 5.1141 \\ 
Relaxation-$\lambda$ & 4.7918 & 4.7918 & 4.7923 & 4.8647 \\ 
Prim & 3.7635 & 3.7680 & 3.7683 & 3.8252 \\ 
Ring & 3.6817 & 3.6861 & 3.6864 & 3.7421 \\ 
Clique & 11.4697 & 11.4671 & 27.5251 & 27.5251 \\ 
\bottomrule
\end{tabular}
\vspace{0.25cm}
\caption{CIFAR10-IAB: total wall clock time in seconds (all values \(\times 10^2\)).}
\label{tab:cifar10-iab}
\end{table}

\begin{table}[t!]
\centering
\begin{tabular}{lcccc}
\toprule
 & \multicolumn{2}{c}{With routing} & \multicolumn{2}{c}{Without routing} \\ \cmidrule(r){2-3} \cmidrule(r){4-5}
 & W/o error & W/ error & W/o error & W/ error \\ \midrule
Greedy & 14.3586 & 14.4315 & 14.3721 & 14.5881 \\ 
SCA & 14.3586 & 14.4315 & 14.3721 & 14.5881 \\ 
Relaxation-$\rho$ & 19.9575 & 21.7025 & 19.9600 & 21.7225 \\ 
Relaxation-$\lambda$ & 20.7558 & 22.5706 & 20.7584 & 22.5914 \\ 
Prim & 13.8372 & 13.8970 & 13.8398 & 14.0578 \\ 
Ring & 13.8372 & 13.8970 & 13.8398 & 14.0578 \\ 
Clique & 34.5904 & 33.8650 & 83.0336 & 84.2764 \\ 
\bottomrule
\end{tabular}
\vspace{0.25cm}
\caption{MNIST-IAB: total wall clock time in seconds.}
\label{tab:mnist-iab}
\end{table}

\fi

\section{Conclusion}\label{sec:Conclusion}

We considered, for the first time, communication optimization for running DFL on top of a bandwidth-limited underlay network. To this end, we formulated a framework for jointly optimizing the hyperparameters controlling the communication demands between learning agents and the communication schedule (including routing and flow rates) to fulfill such demands, without cooperation from the underlay. We showed that the resulting problem can be decomposed into a set of interrelated subproblems, and developed efficient algorithms through carefully designed convex relaxations. Our evaluations based on real topologies and datasets validated the efficacy of the proposed solution in significantly reducing the training time without compromising the quality of the trained model. 
Our results highlight the need of network-application co-design in supporting DFL over bandwidth-limited networks, and our overlay-based approach facilitates the deployment of our solution in existing networks without changing their internal operations.

% \begin{acks}
% This work was supported by the National Science Foundation under award CNS-2106294 and CNS-1946022. 
% \end{acks}

%\bibliographystyle{IEEEtran}
\bibliographystyle{ACM-Reference-Format}
\bibliography{references.bib, survey_backup.bib}

\if\thisismainpaper1

\else
%%
%% If your work has an appendix, this is the place to put it.
\appendix

\addcontentsline{toc}{section}{Appendices}
\renewcommand{\thesubsection}{\Alph{subsection}}

\subsection{Supporting Proofs}\label{appendix:Proofs}

\begin{proof}[Proof of Lemma~\ref{lem:equal bandwidth allocation}]
The rate of each multicast flow $h\in H$ is determined by the minimum rate of the unicast flows constituting it. Consider the bottleneck underlay link $\ue^* := \argmin_{\ue\in \uE} C_{\ue}/t_{\ue}$. Since there are $t_{\ue^*}$ unicast flows sharing a total bandwidth of $C_{\ue^*}$ at $\ue^*$, the slowest of these flows cannot have a rate higher than $C_{\ue^*}/t_{\ue^*}$. Thus, the multicast flow containing this slowest unicast flow cannot have a rate higher than $C_{\ue^*}/t_{\ue^*}$, which means that the completion time for all the multicast flows is no smaller than \eqref{eq:tau - special case, per-link}. 

Meanwhile, if the bandwidth of every link is shared equally among the activated unicast flows traversing it, then each unicast flow will receive a bandwidth allocation of no less than $C_{\ue^*}/t_{\ue^*}$ at every hop, and thus can achieve a rate of at least $C_{\ue^*}/t_{\ue^*}$. Hence, each multicast flow $h\in H$ can achieve a rate of at least $C_{\ue^*}/t_{\ue^*}$, yielding a completion time of no more than \eqref{eq:tau - special case, per-link}. 
\end{proof}

\begin{proof}[Proof of Lemma~\ref{lem:equal bandwidth allocation - category}]
According to Lemma~\ref{lem:equal bandwidth allocation}, it suffices to prove that $\min_{F\in \mathcal{F}} C_F/t_F = \min_{\ue\in \uE} C_{\ue}/t_{\ue}$. To this end, we first note that by Definition~\ref{def: category}, all the underlay links in the same category must be traversed by the same set of overlay links and thus the same set of activated unicast flows, i.e., $t_{\ue} = t_F$ $\forall \ue\in \Gamma_F$. By the definition of the category capacity $C_F$, we have 
\begin{align}
\min_{\ue\in \Gamma_F} {C_{\ue}\over t_{\ue}} = \min_{\ue\in \Gamma_F} {C_{\ue}\over t_F} = {C_F\over t_F}.
\end{align}
Thus, we have
\begin{align}
\min_{\ue\in \uE} {C_{\ue}\over t_{\ue}} = \min_{F\in \mathcal{F}} \min_{\ue\in \Gamma_F} {C_{\ue}\over t_{\ue}} = \min_{F\in \mathcal{F}} {C_F\over t_F}. 
\end{align}
\end{proof}

\begin{proof}[Proof of Corollary~\ref{cor:optimal link weights}]
As $K(p,1)$ decreases with $p$, its minimum is achieved at the maximum value of $p$ that satisfies \eqref{eq:condition on p} for $t=1$ and any value of $\bm{X}$, i.e., 
\begin{align}
    p := \min_{\bm{X}\neq \bm{0}} \left(1-{\E[\|\bm{X}(\bm{W}-\bm{J})\|_F^2]\over \|\bm{X}(\bm{I}-\bm{J})\|_F^2}\right).\label{eq:p-maximization objective}
\end{align}
By \cite[Lemma~3.1]{Xusheng24ICASSP}, $p$ defined in \eqref{eq:p-maximization objective} satisfies $p = 1-\tilde{\rho}$ for $\tilde{\rho} := \|\E[\bm{W}^\top\bm{W}]-\bm{J}\|$. By Jensen's inequality and the convexity of $\|\cdot\|$, $\tilde{\rho} \leq \E[\|\bm{W}^\top \bm{W} - \bm{J}\|]$. 
For every realization of $\bm{W}$ that is symmetric with rows/columns summing to one, %in the form of $\bm{I}-\bm{B}\diag(\bm{\alpha})\bm{B}^\top$, 
we have $\bm{W}^\top\bm{W}-\bm{J}=(\bm{W}-\bm{J})^2$. Based on the eigendecomposition $\bm{W}-\bm{J} = \bm{Q}\diag(\lambda_1,\ldots,\lambda_m)\bm{Q}^\top$, we have\looseness=-1 
\begin{align}
\|\bm{W}^\top\bm{W}-\bm{J}\| &= \|\bm{Q}\diag(\lambda_1^2,\ldots,\lambda_m^2)\bm{Q}^\top \| \nonumber\\
& = \max_{i=1,\ldots,m} \lambda_i^2 = \|\bm{W}-\bm{J}\|^2, 
\end{align}
where we have used the fact that $\|\bm{W}-\bm{J}\| = \max_{i=1,\ldots,m}|\lambda_i|$. 
%\yudi{Does $ \| \cdot \|$ denote the spectral norm?}\ting{yes, see Section~\ref{subsec:Notations}.} 
Thus, $K(p,1)$ for $p$ defined in \eqref{eq:p-maximization objective} is upper-bounded by $K(1- \E[\|\bm{W}-\bm{J}\|^2], 1)$, which is a sufficient number of iterations for D-PSGD to achieve $\epsilon_0$-convergence by Theorem~\ref{thm:new convergence bound}. 

The matrix inequality \eqref{wo cost:matrix} implies that $\rho\geq |\lambda_i|$ for all $i=1,\ldots,m$, and thus the optimal value of \eqref{eq:min rho wo cost} must satisfy $\rho = \max_{i=1,\ldots,m}|\lambda_i| = \|\bm{W}-\bm{J}\|$. Hence, the optimal value $\rho^*$ of \eqref{eq:min rho wo cost} is the minimum value of $\| \bm{W}-\bm{J}\|$ for any realization of $\bm{W}$ that only activates the links in $E_a$. Therefore, $1-\E[\|\bm{W}-\bm{J}\|^2]\leq 1- \mathop{\rho^*}^2$ and \eqref{eq:relaxed bound on K} $\geq K(1- \mathop{\rho^*}^2,1)$, with ``$=$'' achieved at $\bm{W}^* = \bm{I}-\bm{B}\diag(\bm{\alpha}^*)\bm{B}^\top$. 
\end{proof}

\begin{proof}[Proof of Lemma~\ref{lem:equivalence to bilevel}]
Let $(\beta^*, E_a^*)$ be the optimal solution to the RHS of \eqref{eq:equivalance to bilevel}, and $E_a^o$ be the optimal solution to the LHS of \eqref{eq:equivalance to bilevel}. Let $\beta^o:= \overline{\tau}(E_a^o)$. Then
\begin{align}
&\min_{\overline{\tau}(E_a)\leq \beta^o}\overline{K}(E_a) \leq \overline{K}(E_a^o) \\
\Rightarrow& \beta^o \cdot \left(\min_{\overline{\tau}(E_a)\leq \beta^o}\overline{K}(E_a) \right) \leq \overline{\tau}(E_a^o)\cdot \overline{K}(E_a^o) \\
\Rightarrow& \min_{\beta\geq 0} \beta \cdot \left(\min_{\overline{\tau}(E_a)\leq \beta} \overline{K}(E_a) \right)\leq \overline{\tau}(E_a^o)\cdot \overline{K}(E_a^o). \label{eq:equivalence proof - 1}
\end{align}
Meanwhile, $\beta^*$ must equal $\overline{\tau}(E_a^*)$, as otherwise we can reduce $\beta^*$ to further reduce the value of $\beta \cdot \left(\min_{\overline{\tau}(E_a)\leq \beta} \overline{K}(E_a) \right)$, contradicting with the assumption that $(\beta^*, E_a^*)$ is optimal. Therefore, by the definition of $E_a^o$,
\begin{align}
\min_{\beta\geq 0} \beta \cdot \left(\min_{\overline{\tau}(E_a)\leq \beta} \overline{K}(E_a) \right) &= \overline{\tau}(E_a^*) \cdot \overline{K}(E_a^*) \nonumber\\
&\geq \overline{\tau}(E_a^o)\cdot \overline{K}(E_a^o), \label{eq:equivalence proof - 2}
\end{align}
which together with \eqref{eq:equivalence proof - 1} proves \eqref{eq:equivalance to bilevel}.

Moreover, \eqref{eq:equivalance to bilevel} implies that ``$=$'' must hold for \eqref{eq:equivalence proof - 2}, i.e., $E_a^*$ is also optimal for the LHS of \eqref{eq:equivalance to bilevel}. 
\end{proof}

\begin{proof}[Proof of Lemma~\ref{lem:reduction to algebraic connectivity}]
It suffices to prove that under condition \eqref{eq:requirement on alpha^0}, \eqref{eq:bound rho - 2} is achieved at $1-\lambda_2(\bm{L}(E_a)),\: \forall E_a\subseteq \widetilde{E}$, i.e., $\lambda_2(\bm{L}(E_a)) + \lambda_m(\bm{L}(E_a))\leq 2$. 

By definition, $\lambda_m(\bm{L}(E_a)) = \max\{\bm{v}^\top \bm{L}(E_a)\bm{v}:\: \|\bm{v}\|=1\}$. Also by definition, $\bm{L}(E_a) = \bm{D}(E_a) - \bm{A}(E_a)$, where $\bm{D}(E_a)$ and $\bm{A}(E_a)$ are the degree matrix and the adjacency matrix for a weighted graph with link weights $\bm{\alpha}^{(0)}(E_a)$. We have
\begin{align}
\bm{v}^\top \bm{D}(E_a) \bm{v} &= \sum_{i=1}^m v_i^2 \sum_{j: (i,j)\in E_a}\alpha^{(0)}_{ij} \nonumber\\
&\leq \max_{i\in V}\sum_{j: (i,j)\in \widetilde{E}}\alpha^{(0)}_{ij}, \label{eq:proof - algebraic - 1}
\end{align}
because $\sum_{i=1}^m v_i^2 = 1$ and $\sum_{j: (i,j)\in E_a}\alpha^{(0)}_{ij} \leq \sum_{j: (i,j)\in \widetilde{E}}\alpha^{(0)}_{ij}$. 
Moreover, we also have
\begin{align}
-\bm{v}^\top \bm{A}(E_a) \bm{v} &\leq \sum_{i,j=1}^m |(A(E_a))_{ij}|\cdot |v_i|\cdot |v_j| \nonumber \\
&\leq (\max_{(i,j)\in \widetilde{E}}|\alpha^{(0)}_{ij}|) \sqrt{\sum_{i,j=1}^m v_i^2} \sqrt{\sum_{i,j=1}^m v_j^2} \label{eq:proof - algebraic - 2} \\
&= m \cdot \max_{(i,j)\in \widetilde{E}}|\alpha^{(0)}_{ij}|, \label{eq:proof - algebraic - 3}
\end{align}
where \eqref{eq:proof - algebraic - 2} is because of $|(A(E_a))_{ij}|\leq \max_{(i,j)\in \widetilde{E}}|\alpha^{(0)}_{ij}|$ and the Cauchy-Schwarz inequality, and \eqref{eq:proof - algebraic - 3} is because $\sum_{i=1}^m v_i^2 = \sum_{j=1}^m v_j^2 = 1$. 
Combining \eqref{eq:requirement on alpha^0}, \eqref{eq:proof - algebraic - 1}, and \eqref{eq:proof - algebraic - 3} implies that $\bm{v}^\top \bm{L}(E_a)\bm{v} \leq 1$ for any unit-norm vector $\bm{v}$. Thus, $\lambda_2(\bm{L}(E_a))\leq \lambda_m(\bm{L}(E_a))\leq 1$, completing the proof. 
\end{proof}

\begin{proof}[Proof of Theorem~\ref{thm:overall solution}]
Under the assumption of $\widehat{\mathcal{F}}\supseteq \mathcal{F}$ and $\widehat{C}_F \leq C_F$ ($\forall F\in \mathcal{F}$), every real (per-category) capacity constraint is ensured by a capacity constraint we formulate based on the inferred parameters $\widehat{\mathcal{F}}$ and $(\widehat{C}_F)_{F\in \widehat{\mathcal{F}}}$, and thus any communication schedule that is feasible under the inferred constraints remains feasible under the actual constraints. This implies that the proposed design  predicted to complete each iteration in time $\overline{\tau}(E_a^*)$ can actually complete each iteration within this time. 
Moreover, by Corollary~\ref{cor:optimal link weights}, D-PSGD under the designed mixing matrix achieves $\epsilon_0$-convergence within $K(E_a^*)$ iterations, which is further bounded by $\overline{K}(E_a^*)$ according to \eqref{eq:K_bound}. Thus, D-PSGD under the proposed design can achieve $\epsilon_0$-convergence within time $\overline{\tau}(E_a^*) \cdot \overline{K}(E_a^*)$.  
\end{proof}

\fi

\end{document}